
\documentclass[final,leqno,onefignum,onetabnum]{siamltexmm}

\usepackage{amsmath,graphicx,amssymb}
\usepackage{enumerate}
\usepackage{url}

\usepackage{caption}
\usepackage{algorithm}
\usepackage{algpseudocode}

\usepackage{tabularx}
\usepackage{multirow}

\title{Efficient Blind Compressed Sensing Using Sparsifying Transforms \\ with Convergence Guarantees and Application to MRI \thanks{This work was supported in part by the National Science Foundation (NSF) under grants CCF-1018660 and CCF-1320953.
A shorter version of this work appears elsewhere \cite{samptabcs}.}} 

\author{Saiprasad~Ravishankar and~Yoram~Bresler\thanks{S. Ravishankar and Y. Bresler are with the Department of Electrical and Computer Engineering and the  Coordinated Science Laboratory, University of Illinois, Urbana-Champaign, IL, 61801 USA
(\email{(ravisha3, ybresler)@illinois.edu}). Questions, comments, or corrections
to this document may be directed to that email address.}}

\begin{document} \sloppy
\maketitle

\newcommand{\slugmaster}{%
\slugger{siims}{2014}{xx}{x}{x--x}}

\begin{abstract}
Natural signals and images are well-known to be approximately sparse in transform domains such as Wavelets and DCT. This property has been heavily exploited in various applications in image processing and medical imaging. Compressed sensing exploits the sparsity of images or image patches in a transform domain or synthesis dictionary to reconstruct images from undersampled measurements. In this work, we focus on blind compressed sensing, where the underlying sparsifying transform is a priori unknown, and propose a framework to simultaneously reconstruct the underlying image as well as the sparsifying transform from highly undersampled measurements. The proposed block coordinate descent type algorithms involve highly efficient optimal updates.
Importantly, we prove that although the proposed blind compressed sensing formulations are highly nonconvex, our algorithms are \emph{globally convergent} (i.e., they converge from any initialization) to the set of critical points of the objectives defining the formulations. These critical points are guaranteed to be at least partial global and partial local minimizers. The exact point(s) of convergence may depend on initialization.
We illustrate the usefulness of the proposed framework for magnetic resonance image reconstruction from highly undersampled k-space measurements. As compared to previous methods involving the synthesis dictionary model, our approach is much faster, while also providing promising reconstruction quality. 
\end{abstract}

\begin{keywords}
Sparsifying transforms, Inverse problems, Compressed sensing, Medical imaging, Magnetic resonance imaging, Sparse representation, Dictionary learning.
\end{keywords}

\begin{AMS}
68T05, 65F50, 94A08
\end{AMS}

\pagestyle{myheadings}
\thispagestyle{plain}
\markboth{S.~Ravishankar and~Y.~Bresler}{Transform-blind Compressed Sensing with Convergence Guarantees}

\section{Introduction}  \label{sec1}

Sparsity-based techniques have become extremely popular in various applications in image processing and imaging in recent years. These techniques typically exploit the sparsity of images or image patches in a transform domain or dictionary to compress \cite{jpg2}, denoise, or restore/reconstruct images. In this work, we investigate the subject of blind compressed sensing, which aims to reconstruct images in the scenario when a good sparse model for the image is unknown a priori. In particular, we will work with the classical sparsifying transform model \cite{sabres} that has been shown to be useful in various imaging applications. In the following, we briefly review the topics of sparse modeling, compressed sensing, and blind compressed sensing. We then list the contributions of this work.

\subsection{Sparse Models}

Sparsity-based techniques in imaging and image processing typically rely on the existence of a good sparse model for the data. Various models are known to provide sparse representations of data such as the \emph{synthesis dictionary model} \cite{elmiru, ambruck}, and the \emph{sparsifying transform model} \cite{sabres, tfcode}. 

The synthesis dictionary model suggests that a signal $ y  \in \mathbb{C}^{n} $ can be sparsely represented as a linear combination of a small number of atoms or columns from a synthesis dictionary $ D  \in \mathbb{C}^{n \times K}$, i.e., $y = Dx$ with $x \in \mathbb{C}^{K} $ being sparse, i.e., $\left \| x \right \|_{0}\ll K$. The $l_{0}$ quasi norm measures the sparsity of $x$ by counting the number of non-zeros in $x$. Real-world signals usually satisfy $y = Dx + e$, where $e$ is the approximation error in the signal domain \cite{ambruck}. The well-known \emph{synthesis sparse coding} problem finds $x$ that minimizes $\left \| y-Dx \right \|_{2}^{2}$ subject to  $\left \| x \right \|_{0}\leq s$, where $s$ is a given sparsity level. 
A disadvantage of the synthesis model is that sparse coding is typically NP-hard (Non-deterministic Polynomial-time hard) \cite{npb, npa}, and the various approximate sparse coding algorithms \cite{pati, mp2, wei, chen2, RaoFocus, befro} tend to be computationally expensive for large scale problems. 

The alternative sparsifying transform model suggests that a signal $y $ is approximately sparsifiable using a transform $ W  \in \mathbb{C}^{m \times n} $. The assumption here is that $Wy = x + \eta$, where $x \in \mathbb{C}^{m}$ is sparse in some sense, and $\eta$ is a small residual error in the \emph{transform domain} rather than in the signal domain.
Natural signals and images are known to be approximately sparsifiable by analytical transforms such as the discrete cosine transform (DCT), Wavelets \cite{wav}, finite differences, and Contourlets \cite{do}.
The advantage of the transform model over the synthesis dictionary model \footnote{The sparsifying transform model enjoys similar advantages over other models such as the noisy signal analysis dictionary model \cite{sabres}, which is not discussed here for reasons of space.} is that sparse coding can be performed exactly and cheaply by zeroing out all but a certain number of nonzero transform coefficients of largest magnitude \cite{sabres}. 

\subsection{Compressed Sensing}

In the context of imaging, the recent theory of Compressed Sensing (CS) \cite{tao1, don, cand} (see also \cite{feng96a, BreFen-C96c, Fen-PT97, VenBre-C98b, BreGasVen-C99, GasBre-C00a, YeBreMou-J02, Bre-C2008a} for the earliest versions of CS for Fourier-sparse signals and for Fourier imaging) enables accurate recovery of images from significantly fewer measurements than the number of unknowns. In order to do so, it requires that the underlying image be sufficiently sparse in some transform domain or dictionary, and that the measurement acquisition procedure be incoherent, in an appropriate sense, with the transform. However, the image reconstruction procedure for compressed sensing is typically computationally expensive and non-linear. 

The image reconstruction problem in compressed sensing is typically formulated as follows
\begin{equation}\label{eq2}
\min_{x}\: \left \| \Psi x \right \|_{0}  \: \:\;  s.t.\:\;   Ax=y
\end{equation}
Here, $x \in \mathbb{C}^{p}$ is a vectorized representation of the image (obtained by stacking the image columns on top of each other) to be reconstructed, and $y \in \mathbb{C}^{m}$ denotes the measurements. The operator $A \in \mathbb{C}^{m \times p}$, with $m \ll p$ is known as the sensing matrix, or measurement matrix. The matrix $\Psi \in \mathbb{C}^{t \times p}$ is a sparsifying transform (typically chosen as orthonormal). The aim of Problem \eqref{eq2} is to find the image satisfying the measurement equation $Ax=y$, that is the sparsest possible in the $\Psi$-transform domain. Since, in CS, the measurement equation $Ax=y$ represents an underdetermined system of equations, an additional model (such as the sparsity model above) is needed to estimate the true underlying image.

When $\Psi$ is orthonormal, Problem \eqref{eq2} can be rewritten as 
\begin{equation}\label{eq3}
\min_{z}\: \left \| z \right \|_{0}  \: \:\;  s.t.\:\;   A \Psi^{H}z=y
\end{equation}
where we used the substitution $\Psi x = z$, and $(\cdot)^{H}$ denotes the matrix Hermitian (conjugate transpose) operation. Similar to the synthesis sparse coding problem, Problem \eqref{eq3} too is NP-hard. Often the $\textit{l}_{0}$ quasi norm in \eqref{eq2} is replaced with its convex relaxation, the $\textit{l}_{1}$ norm \cite{Dono}, and the following convex problem is solved to reconstruct the image, when the CS measurements are noisy \cite{lustig, Dono2}. 
\begin{equation}\label{eq1}
\min_{x}\:\left \| Ax-y \right \|_{2}^{2}+\lambda \left \| \Psi x \right \|_{1}
\end{equation}
In Problem \eqref{eq1}, the $\ell_2$ penalty for the measurement fidelity term can also be replaced with alternative penalties such as a weighted $\ell_2$ penalty, depending on the physics of the measurement process and the statistics of the measurement noise.


Recently, CS theory has been applied to imaging techniques such as magnetic resonance imaging (MRI) \cite{lustig, lustig2, Char, josh, Yoo, kal, Qu11}, computed tomography (CT) \cite{chen, choi11, CT11},
and Positron emission tomography (PET) imaging \cite{vali1, malc2}, demonstrating high quality reconstructions from a reduced set of measurements. Such compressive measurements are highly advantageous in these applications. For example, they help reduce the radiation dosage in CT, and reduce scan times and improve clinical throughput in MRI. Well-known inverse problems in image processing such as inpainting (where an image is reconstructed from a subset of measured pixels) can also be viewed as compressed sensing problems.

\subsection{Blind Compressed Sensing}

While conventional compressed sensing techniques utilize fixed analytical sparsifying transforms such as wavelets \cite{wav}, finite differences, and contourlets \cite{do}, to reconstruct images, in this work, we instead focus on the idea of blind compressed sensing (BCS) \cite{bresai, symul, Glei12, lingal1, wangying}, where the underlying sparse model is assumed unknown a priori. The goal of blind compressed sensing is to simultaneously reconstruct the underlying image(s) as well as the dictionary or transform from highly undersampled measurements. Thus, BCS enables the sparse model to be adaptive to the specific data under consideration. Recent research has shown that such data-driven adaptation of dictionaries or transforms is advantageous in many applications \cite{elad2, elad3, elad5, elad6, gyu, bresai, doubsp2l, yblsgg, sbclsTS2, saiwen}. While the adaptation of synthesis dictionaries \cite{ols, eng, elad, Yagh, skret, Mai} has been extensively studied, recent work has shown advantages in terms of computation and application-specific performance, for the adaptation of transform models \cite{sabres, sbclsTS2, saiwen}. 


In a prior work on BCS \cite{bresai}, we successfully demonstrated the usefulness of dictionary-based blind compressed sensing for MRI, even in the case when the undersampled measurements corresponding to only a single image are provided. In the latter case, the overlapping patches of the underlying image are assumed to be sparse in a dictionary, and the (unknown) patch-based dictionary, that is typically much smaller in size than the image, is learnt directly from the compressive measurements. 

BCS techniques have been demonstrated to provide much better image reconstruction quality compared to compressed sensing methods that utilize a fixed sparsifying transform or dictionary \cite{bresai, lingal1, wangying}. This is not surprising since BCS methods allow for data-specific adaptation, and data-specific dictionaries typically sparsify the underlying images much better than analytical ones.

\newpage

\subsection{Contributions}

\subsubsection{Highlights}

The BCS framework  assumes a particular class of sparse models for the underlying image(s) or image patches. 
While prior work on BCS primarily focused on the synthesis dictionary model, in this work, we instead focus on the sparsifying transform model. We propose novel problem formulations for BCS involving well-conditioned or orthonormal adaptive \emph{square} sparsifying transforms. Our framework simultaneously adapts the sparsifying transform and reconstructs the underlying image(s) from highly undersampled measurements. We propose efficient block coordinate descent-type algorithms for transform-based BCS. Importantly, we establish that our iterative algorithms are \emph{globally convergent} (i.e., they converge from any initialization) to the set of critical points of the proposed highly non-convex BCS cost functions. These critical points are guaranteed to be \emph{at least} partial global and partial local minimizers. The exact point(s) of convergence may depend on initialization.
Such convergence guarantees have not been established for prior \emph{blind} compressed sensing methods.

Note that although we focus on compressed sensing in the discussions and experiments of this work, the formulations and algorithms proposed by us can also handle the case when the measurement or sensing matrix $A$ is square (e.g., in signal denoising), or even tall (e.g., deconvolution). 

\subsubsection{Magnetic Resonance Imaging Application}

MRI is a non-invasive and non-ionizing imaging technique that offers a variety of contrast mechanisms, and enables excellent visualization of anatomical structures and physiological functions.  
However, the data in MRI, which are samples in k-space of the spatial Fourier transform of the object, are acquired sequentially in time. 
Hence, a major drawback of MRI, that affects both clinical throughput and image quality especially in dynamic imaging applications, is that it is a relatively slow imaging technique. 
Although there have been advances in scanner hardware \cite{pMRI-Survey} and pulse sequences, the rate at which MR data are acquired is limited by MR physics and  physiological constraints on RF energy deposition. Compressed sensing MRI (either blind, or with known sparse model) has become quite popular in recent years, and it alleviates some of the aforementioned problems by enabling accurate reconstruction of MR images from highly undersampled measurements. 

In this work, we illustrate the usefulness of the proposed transform-based BCS schemes for magnetic resonance image reconstruction from highly undersampled k-space data.
We show that our adaptive transform-based BCS provides better image reconstruction quality compared to prior methods that involve fixed image-based, or patch-based sparsifying transforms. Importantly, transform-based BCS is shown to be more than 10x faster than synthesis dictionary-based BCS \cite{bresai} for reconstructing 2D MRI data. The speedup is expected to be much higher when considering 3D or 4D MR data. These advantages make the proposed scheme more amenable for adoption for clinical use in MRI.

\subsection{Organization}

The rest of this paper is organized as follows. Section \ref{sec2} describes our transform learning-based blind compressed sensing formulations and their properties. 
In Section \ref{sec3}, we derive efficient block coordinate descent algorithms for solving the BCS Problems, and discuss the algorithms' computational costs. In Section \ref{sec4}, we present novel convergence guarantees for our algorithms. The proof of convergence is provided in the Appendix. Section \ref{sec5} presents experimental results demonstrating the convergence behavior, performance, and computational efficiency of the proposed scheme for the MRI application. In Section \ref{sec6}, we conclude with proposals for future work.

\section{Problem Formulations}  \label{sec2}


\subsection{Synthesis dictionary-based Blind Compressed Sensing}

The compressed sensing-based image reconstruction  Problem \eqref{eq1} can be viewed as a particular instance of the following constrained regularized inverse problem, with $\zeta (x) = \lambda \left \| \Psi x \right \|_{1}$, $\nu=1$, and $\mathcal{S} = \mathbb{C}^{p}$
\begin{equation}\label{reginveq1}
\min_{x \in \mathcal{S}}\: \nu \left \| Ax-y \right \|_{2}^{2}+ \zeta (x)
\end{equation}
However, CS image reconstructions employing fixed, non-adaptive sparsifying transforms typically suffer from many artifacts at high undersampling factors \cite{bresai}. Blind compressed sensing allows for the sparse model to be directly adapted to the object(s) being imaged. For example, the overlapping patches of the underlying image may be assumed to be sparse in  a certain adaptive model. In prior work \cite{bresai}, the following patch-based dictionary learning regularizer was used within Problem \eqref{reginveq1} along with $\mathcal{S} = \mathbb{C}^{p}$ 
\begin{align}
\zeta(x) = & \min_{D, B} \:  \sum_{j=1}^{N} \left \| P_{j}x-D b_{j} \right \|_{2}^{2}   \;\, s.t. \;\, \left \| d_{k} \right \|_{2}=1\:\forall \;k, \;\;  \left \| b_{j} \right \|_{0}\leq s \; \: \forall \,\,  j.
\end{align}
The resulting synthesis dictionary based BCS formulation is as follows
\begin{align}
\nonumber (\mathrm{P0})\:\:\: & \min_{x,D, B} \:   \nu \left \| Ax-y \right \|_{2}^{2} + \sum_{j=1}^{N} \left \| P_{j}x-D b_{j} \right \|_{2}^{2}   \;\, s.t. \;\; \left \| d_{k} \right \|_{2}=1\:\forall \;k, \;\;  \left \| b_{j} \right \|_{0}\leq s \; \: \forall \,\,  j.
\end{align}
Here, $\nu >0$ is a weight for the measurement fidelity term ($\left \| Ax-y \right \|_{2}^{2}$), and $ P_{j} \in \mathbb{C}^{n \times p} $ represents the operator that extracts a $ \sqrt{n} \times \sqrt{n} $ 2D patch as a vector $P_{j}x  \in \mathbb{C}^{n}$ from the image $x$. A total of $N$ overlapping 2D patches are used. The synthesis model allows each patch $P_{j}x$ to be approximated by a linear combination $D b_{j}$ of a small number of columns from a dictionary $ D  \in \mathbb{C}^{n \times K} $, where $b_{j} \in \mathbb{C}^{K} $ is sparse. The columns of the learnt dictionary (represented by $ d_{k}, 1 \leq k \leq K $) in (P0) are additionally constrained to be of unit norm in order to avoid the scaling ambiguity \cite{kar}.
The dictionary, and the image patch, are assumed to be much smaller than the image ($n, K \ll p$) in (P0). Problem (P0) thus enforces all the $N$ (a typically large number) overlapping image patches to be sparse in some dictionary $D$, which can be considered as a strong yet flexible prior on the underlying image.

We use $B \in \mathbb{C}^{n \times N}$ to denote the matrix that has the sparse codes of the patches $b_{j}$ as its columns. Each sparse code is permitted a maximum sparsity level of $s \ll n$ in (P0). Although a single sparsity level is used for all patches in (P0) for simplicity, in practice, different sparsity levels may be allowed for different patches (for example, by setting an appropriate error threshold in the sparse coding step of optimization algorithms \cite{bresai}).
For the case of MRI, the sensing matrix $A$ in (P0) is $ F_{u}  \in \mathbb{C}^{m \times p} $, the undersampled Fourier encoding matrix \cite{bresai}. 
The weight $ \nu $ in (P0) is set depending on the measurement noise standard deviation $ \sigma $ as $ \nu=\frac{\theta }{\sigma } $, where $ \theta $ is a positive constant \cite{bresai}. In practice, if the noise level is unknown, it may be estimated.



Problem (P0) is to learn a patch-based synthesis sparsifying dictionary ($n, K$ $\ll p$), and reconstruct the image simultaneously from highly undersampled measurements. As discussed before, we have previously shown significantly superior image reconstructions for MRI using (P0), as compared to non-adaptive compressed sensing schemes that solve Problem \eqref{eq1}. However, the BCS Problem (P0) is both non-convex and NP-hard. Approximate iterative algorithms for (P0) (e.g., the DLMRI algorithm \cite{bresai}) typically solve the synthesis sparse coding problem repeatedly, which makes them computationally expensive. Moreover, no convergence guarantees exist for the algorithms that solve (P0).


\subsection{Sparsifying Transform-based Blind Compressed Sensing} \label{dfgy6}

\subsubsection{Problem Formulations with Sparsity Constraints}
In order to overcome some of the aforementioned drawbacks of synthesis dictionary-based BCS, we propose using the sparsifying transform model in this work. Sparsifying transform learning has been shown to be effective and efficient in applications, while also enjoying good convergence guarantees \cite{sbclsTS2}.
Therefore, we use the following transform learning regularizer \cite{sabres}
\begin{align} \label{tLreg1}
\zeta(x) = & \min_{W, B}\:  \sum_{j=1}^{N} \left \| W P_{j}x- b_{j} \right \|_{2}^{2} + \lambda \, Q(W) \;\; s.t.\;\;  \left \| B \right \|_{0}\leq s
\end{align}
along with the constraint set $\mathcal{S} = \left \{ x \in \mathbb{C}^{p} : \left \| x \right \|_{2}\leq C \right \}$ within Problem \eqref{reginveq1} to arrive at the following adaptive sparsifying transform-based BCS problem formulation
\begin{align}
\nonumber (\mathrm{P1})\:\:\: & \min_{x,W, B}\:  \nu \left \| Ax-y \right \|_{2}^{2}  +  \sum_{j=1}^{N} \left \| W P_{j}x- b_{j} \right \|_{2}^{2} + \lambda \, Q(W) \;\, \, s.t.\;\;  \left \| B \right \|_{0}\leq s, \;\; \left \| x \right \|_{2}\leq C.
\end{align}
Here, $W \in \mathbb{C}^{n \times n}$ denotes a square sparsifying transform for the patches of the underlying image.
The penalty $\left \| W P_{j}x- b_{j} \right \|_{2}^{2}$ denotes the sparsification error (transform domain residual) \cite{sabres} for the $\mathrm{j^{th}}$ patch, with $b_{j}$ denoting the transform sparse code.
The sparsity term $\left \| B \right \|_{0}=\sum_{j=1}^{N}\left \| b_{j} \right \|_{0}$ counts the number of non-zeros in the sparse code matrix $B$.
Notice that the sparsity constraint in (P1) is enforced on all the overlapping patches, taken together. This is a way of enabling variable sparsity levels for each specific patch. 
The constraint $ \left \| x \right \|_{2}\leq C$ with $C>0$ in (P1), is to enforce any prior knowledge on the signal energy (or, range). For example, if the pixels of the underlying image take intensity values in the range $0-255$, then $C = 255 \sqrt{p}$ is an appropriate bound. 
The function $Q(W) : \mathbb{C}^{n \times n} \mapsto \mathbb{R}$ in Problem (P1) denotes a regularizer for the transform, and the weight $\lambda>0$. Notice that without an additional regularizer, $W=0$ is a trivial sparsifier for any patch, and therefore, $W=0$, $b_{j} =0 $ $\forall j$, $x= A^{\dagger} y$ (assuming this $x$ satisfies $\left \| x \right \|_{2}\leq C$) with $(\cdot)^{\dagger}$ denoting the pseudo-inverse, would trivially minimize the objective (without the regularizer $Q(W)$) in Problem (P1). 

Similar to prior work on transform learning \cite{sabres, sabres3}, we set $Q(W) \triangleq - \log \,\left | \mathrm{det \,} W \right |  + 0.5 \left \| W \right \|_{F}^{2} $ as the regularizer in the objective to prevent trivial solutions. The $- \log \,\left | \mathrm{det \,} W \right | $ penalty eliminates degenerate solutions such as those with repeated rows.  The $\left \| W \right \|_{F}^{2}$ penalty helps remove a `scale ambiguity' in the solution \cite{sabres}, which occurs when the optimal solution satisfies an exactly sparse representation, i.e., the optimal $(x, W, B)$ in (P1) is such that $W P_{j}x = b_{j}$ $\forall$ $j$, and  $\left \| B \right \|_{0}\leq s$. In this case, if the $\left \| W \right \|_{F}^{2}$ penalty is absent in (P1), the optimal $(W, B)$ can be scaled by $\beta \in \mathbb{C}$, with $\left | \beta \right | \to \infty$, which causes the objective to decrease unbounded. 


The $- \, \log \,\left | \mathrm{det \,} W \right | $ and $0.5 \left \| W \right \|_{F}^{2}$ penalties together also additionally help control the condition number $\kappa(W)$ and scaling of the learnt transform. 
If we were to minimize only the $Q(W)$ regularizer in Problem (P1) with respect to $W$, then the minimum is achieved with a $W$ that is unit conditioned, and with spectral norm (scaling) of  $1$ \cite{sabres}, i.e., a \emph{unitary} or \emph{orthonormal} transform $W$. Thus, similar to Corollary 2 in \cite{sabres}, it is easy to show that as $\lambda  \to \infty $ in Problem (P1), the optimal sparsifying transform(s) tends to a unitary one. 
In practice, transforms learnt via (P1) are typically close to unitary even for finite $\lambda$.
Adaptive well-conditioned transforms (small $\kappa(W) > 1$) have been previously shown to perform better than adaptive (strictly) orthonormal ones in some scenarios in image representation, or image denoising \cite{sabres, sabres3}. 


In this work, we set $\lambda = \lambda_{0} N$ in (P1), where $\lambda_0>0$ is a constant. This setting allows $\lambda$ to scale with the size of the data (i.e., total number of patches). In practice, the weight $\lambda_0$ needs to be set according to the expected range (in intensity values)  of the underlying image, as well as depending on the desired condition number of the learnt transform. The weight $\nu$ in (P1) is set similarly as in (P0).

When a unitary sparsifying transform is preferred, the $Q(W)$ regularizer in (P1) (and in \eqref{tLreg1}) could instead be replaced by the constraint $W^{H} W = I$, where $I$ denotes the identity matrix, yielding the following formulation 
\begin{align}
\nonumber (\mathrm{P2})\:\:\: & \min_{x, W, B}\:  \nu \left \| Ax-y \right \|_{2}^{2}  + \sum_{j=1}^{N} \left \| W P_{j}x- b_{j} \right \|_{2}^{2}   \; \,\, s.t.\;\; W^{H}W=I, \;  \left \| B \right \|_{0}\leq s,  \;\; \left \| x \right \|_{2}\leq C.
\end{align}
The unitary sparsifying transform case is special, in that Problem (P2) is also a unitary synthesis dictionary-based blind compressed sensing problem, with $W^{H}$ denoting the synthesis dictionary. This follows from the identity $\|W P_{j}x- b_{j} \|_{2} = \| P_{j}x- W^{H} b_{j}  \|_{2}$, for unitary $W$.




\subsubsection{Properties of Transform BCS Formulations  - Identifiability and Uniqueness} \label{identyuniq}
The following simple proposition considers an ``error-free" scenario and establishes the global identifiability of the underlying image and sparse model in BCS via solving the proposed Problems (P1) or (P2). 

\vspace{0.04in}
\begin{proposition} \label{optimalmodelsbcs}
Let $x \in \mathbb{C}^{p}$ with $\left \| x \right \|_{2}\leq C$, and let $y = Ax$ with $A \in \mathbb{C}^{m \times p}$. Suppose that $W \in \mathbb{C}^{n \times n}$ is a unitary transform that sparsifies the collection of patches of $x$ as $\sum_{j=1}^{N}\left \| W P_{j}x \right \|_{0}\leq s$. Further, let $B$ denote the matrix that has $ W P_{j}x$ as its columns. Then, $(x, W, B)$ is a global minimizer of both Problems (P1) and (P2), i.e., it is identifiable by solving these problems.
\end{proposition}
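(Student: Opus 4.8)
The plan is to show that the candidate triple $(x, W, B)$ drives the objective of (P1) (and of (P2)) to its smallest possible value, by lower-bounding that objective and checking that the candidate attains the bound. First I would observe that the three constraints in (P1) — namely $\left\| B \right\|_0 \leq s$, $\left\| x \right\|_2 \leq C$, and (implicitly, as $\lambda\to\infty$, or via (P2)) unitarity of $W$ — are all satisfied by the candidate: $\left\| x \right\|_2 \leq C$ holds by hypothesis, $W$ is unitary by hypothesis, and $\left\| B \right\|_0 = \sum_{j=1}^{N} \left\| W P_j x \right\|_0 \leq s$ is exactly the sparsity assumption on the patches of $x$. So the triple is feasible.

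Next I would evaluate the objective at the candidate. Since $y = Ax$, the fidelity term $\nu\left\| Ax - y \right\|_2^2 = 0$. Since $b_j = W P_j x$ for each $j$, every sparsification-error term $\left\| W P_j x - b_j \right\|_2^2 = 0$. For (P2) this already gives objective value $0$; since $\nu\left\| A\tilde x - y \right\|_2^2 \geq 0$ and $\sum_j \left\| W' P_j \tilde x - \tilde b_j \right\|_2^2 \geq 0$ for any feasible $(\tilde x, W', \tilde B)$, the objective of (P2) is bounded below by $0$, hence the candidate is a global minimizer of (P2). For (P1), the residual value at the candidate is $\lambda\, Q(W)$; I would then invoke the fact (established in \cite{sabres}, and quoted in the discussion preceding the proposition) that $Q(W) = -\log\left|\det W\right| + 0.5\left\| W \right\|_F^2$ is minimized precisely over unitary matrices, with minimum value $0.5 n$ (since $\log\left|\det W\right| = 0$ and $\left\| W \right\|_F^2 = n$ for any $n\times n$ unitary $W$). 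Therefore for any feasible $(\tilde x, W', \tilde B)$ the (P1)-objective is at least $\lambda\, Q(W') \geq \lambda \cdot 0.5 n = \lambda\, Q(W)$, while the candidate achieves exactly $\lambda\, Q(W)$ with the other two terms vanishing; hence it is a global minimizer of (P1).

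There is no serious obstacle here: the argument is just "feasible point that zeroes out the two data-dependent terms, while the transform regularizer is simultaneously at its global minimum." The only point requiring a little care — the mild ``main obstacle'' — is correctly citing and applying the characterization of the minimizers of $Q(W)$: one must note that $-\log\left|\det W\right| + 0.5\left\| W \right\|_F^2 \geq 0.5 n$ for all $W$ with equality iff $W$ is unitary, which follows from the AM–GM / concavity-of-log inequality applied to the singular values of $W$ (if $\sigma_i$ are the singular values, $0.5\sum_i \sigma_i^2 - \sum_i \log\sigma_i \geq \sum_i(0.5 - \log\sigma_i)\big|_{\sigma_i=1}\!$-type bound, minimized termwise at $\sigma_i = 1$). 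Once that fact is in hand, the proposition is immediate, and the same computation covers (P2) trivially by dropping the $Q$ term.
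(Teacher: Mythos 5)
Your proposal is correct and follows essentially the same route as the paper: verify feasibility, note that the data-fidelity and sparsification-error terms vanish at the candidate, and use the fact that the unitary $W$ attains the minimum possible value of the regularizer $Q(W)$ (namely $n/2$), so the candidate achieves a lower bound on the objective of each problem. The extra detail you give on the singular-value characterization of the minimizers of $Q$ is a fine elaboration of the fact the paper simply cites from \cite{sabres}, not a different argument.
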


\begin{proof}:
For the given $(x, W, B)$, the terms $\sum_{j=1}^{N} \left \| W P_{j}x- b_{j} \right \|_{2}^{2}$ and $\left \| Ax-y \right \|_{2}^{2}$ in (P1) and (P2) each attain their minimum possible value (lower bound) of zero. Since $W$ is unitary, the penalty $Q(W)$ in (P1) is also minimized by the given $W$. Notice that the constraints in both (P1) and (P2) are satisfied for the given $(x, W, B)$. Therefore, this triplet is feasible for both problems and achieves the minimum possible value of the objective in both cases. Thus, it is a global minimizer of both (P1) and (P2). 
\end{proof}

Thus, when ``error-free" measurements are provided, and the patches of the underlying image are \emph{exactly} sparse (as defined by the constraint in (P1)) in some unitary transform, Proposition \ref{optimalmodelsbcs} guarantees that the image as well as the model are jointly identifiable by solving (i.e., finding global minimizers in) (P1) (or, (P2)). 


An interesting topic, which we do not fully pursue here pertains to the condition(s) under which the underlying image in Proposition \ref{optimalmodelsbcs} is the unique minimizer of the proposed BCS problems.
The proposed problems do admit an equivalence class of solutions/minimizers with respect to the transform $W$ and the set of sparse codes $ \left \{ b_{j} \right \}_{j=1}^{N}$ \footnote{In the remainder of this work, when certain indexed variables are enclosed within braces, it means that we are considering the set of variables over the range of all the indices.}.
Given a particular minimizer $(x, W, B)$ of (P1) or (P2), we have that $(x, \Theta W, \Theta B)$ is another equivalent minimizer for all \emph{sparsity-preserving unitary matrices} $\Theta$, i.e., $\Theta$ such that $\Theta^{H} \Theta = I$ and $ \sum_{j}\left \| \Theta b_{j} \right \|_{0}\leq s$. For example, $\Theta$ can be a row permutation matrix, or a diagonal $\pm 1$ sign matrix.
Importantly however, the minimizer with respect to $x$ in (P1) or (P2) is invariant to the modification of $(W, B)$ by sparsity-preserving unitary matrices $\Theta$, i.e., the optimal $x$ in (P1) or (P2) remains the same for all such choices of $(\Theta W, \Theta B)$.

We note that by imposing additional structure on $W$ in our transform-based BCS formulations, one can derive conditions for the uniqueness of the minimizers. Assume a global minimizer $(x, W, B)$ exists in (P2) satisfying the conditions (error-free scenario) in Proposition \ref{optimalmodelsbcs}.
Then, for example, when the unitary transform $W$ in (P2) is further constrained to be doubly sparse, i.e., $W = S \Phi$, with $S$ a sparse matrix and $\Phi$ a known matrix (e.g., DCT, or Wavelet $\Phi$), then because $W^{H} = \Phi^{H} S^{H}$ is an equivalent (doubly sparse) synthesis dictionary (corresponding to the transform $W$), the uniqueness conditions (involving the spark condition) proposed in prior work on synthesis dictionary-based BCS \cite{Glei12} (Section V-A of \cite{Glei12}) can be extended to the transform-based setting here. A detailed analysis and description of such uniqueness results will be presented elsewhere.


\subsubsection{Problem Formulations with Sparsity Penalties}

While Problem (P1) involves a sparsity constraint, an alternative version of Problem (P1) is obtained by replacing the $\ell_{0}$ sparsity constraint with an $\ell_{0}$ penalty in the objective (and in \eqref{tLreg1}), in which case we have the following optimization problem
\begin{align}
\nonumber (\mathrm{P3})\:\:\: & \min_{x,W, B}\:  \sum_{j=1}^{N} \left \| W P_{j}x- b_{j} \right \|_{2}^{2} + \nu \left \| Ax-y \right \|_{2}^{2} + \lambda \, Q(W) + \, \eta^{2}  \left \| B \right \|_{0} \;\, s.t.  \;\; \left \| x \right \|_{2}\leq C.
\end{align}
where $\eta^{2}$, with $\eta > 0$, denotes the weight for the sparsity penalty. 

A version of Problem (P3) (without the $\ell_{2}$ constraint) has been used very recently in adaptive tomographic reconstruction \cite{luke1, luke2}. 
However, it is interesting to note that in the absence of the $\left \| x \right \|_{2}\leq C$ condition, the objective in (P3) is actually non-coercive.
To see this, consider $W = I$ and $x_{\beta} = x_{0} + \beta z$, where $x_{0}$ is a particular solution to $y=Ax$, $\beta \in \mathbb{R}$, and $z \in \mathcal{N}(A)$ with $\mathcal{N}(A)$ denoting the null space of $A$. For this setting, as $\beta \to \infty$ with the $\mathrm{j^{th}}$ sparse code in (P3) set to $W P_{j} x_{\beta}$, it is obvious that the objective in (P3) remains finite, thereby making it non-coercive. The energy constraint on $x$ in (P3) restricts the set of feasible images to a compact set, and alleviates potential problems (such as unbounded iterates within a minimization algorithm)
due to the non-coercive objective.


While a single weight $\eta^{2}$ is used for the sparsity penalty $\left \| B \right \|_{0} = \sum _{j=1}^{N} \left \| b_{j} \right \|_{0}$ in (P3), one could also use different weights $\eta_{j}^{2}$ for the sparsity penalties $ \left \| b_{j} \right \|_{0} $  corresponding to different patches, if such weights are known, or estimated.



Just as Problem (P3) is an alternative to Problem (P1), we can also obtain a corresponding alternative version (denoted as (P4)) of Problem (P2) by replacing the sparsity constraint with a penalty. Although, in the rest of this paper, we consider Problems (P1)-(P3), the proposed algorithms and convergence results in this work easily extend to the case of (P4).


\subsection{Extensions}
While the proposed sparsifying transform-based BCS problem formulations are for the (extreme) scenario when the CS measurements corresponding to a single image are provided, these formulations can be easily extended to other scenarios too. For example, when multiple images (or frames, or slices) have to be jointly reconstructed using a single adaptive (spatial) sparsifying transform, then the objectives in Problems (P1)-(P3) for this case are the summation of the corresponding objective functions for each image. 
In applications such as dynamic MRI (or for example, compressive video), the proposed formulations can be extended by considering adaptive spatiotemporal sparsifying transforms of 3D patches (cf. \cite{wangying} that extends Problem (P0) in such a way to compressed sensing dynamic MRI). Similar extensions are also possible for higher-dimensional applications such as 4D imaging.

\section{Algorithm and Properties}  \label{sec3}

\subsection{Algorithm}


Here, we propose block coordinate descent-type algorithms to solve the proposed transform-based BCS problem formulations (P1)-(P3). Our algorithms alternate between solving for the sparse codes $\left \{ b_{j} \right \}$ (\emph{sparse coding step}), transform $W$ (\emph{transform update step}), and image $x$ (\emph{image update step}), with the other variables kept fixed. One could also alternate a few times between the sparse coding and transform update steps, before performing one image update step. 
In the following, we describe the three main steps in detail. We show that each of the steps has a simple solution that can be computed cheaply in practical applications such as MRI.


\subsubsection{Sparse Coding Step}
The sparse coding step of our algorithms for Problems (P1) and (P2) involves the following optimization problem
\begin{equation} \label{bcs1}
\min_{B}\:  \sum_{j=1}^{N} \left \| W P_{j}x- b_{j} \right \|_{2}^{2} \;\, \, s.t.\;\;  \left \| B \right \|_{0}\leq s.
\end{equation}
Now, let $Z \in \mathbb{C}^{n \times N}$ be the matrix with the transformed (vectorized) patches $W P_{j}x$ as its columns. Then, using this notation, Problem \eqref{bcs1} can be rewritten as follows, where $\left \| \cdot \right \|_{F}$ denotes the standard Frobenius norm.
\begin{equation} \label{bcs2}
\min_{B}\: \left \| Z - B \right \|_{F}^{2} \;\, \, s.t.\;\;  \left \| B \right \|_{0} \leq s.
\end{equation}
The above problem is to project $Z$ onto the non-convex set $\left \{ B \in \mathbb{C}^{n \times N} : \left \| B \right \|_{0} \leq s \right \}$ of matrices that have sparsity $\leq s$, which we call the $s$-$\ell_0$ ball. The optimal projection $\hat{B}$ is easily computed by zeroing out all but the $s$ coefficients of largest magnitude in $Z$. We denote this operation by $\hat{B} = H_{s}(Z)$, where $H_{s}(\cdot)$ is the corresponding projection operator. In case, there is more than one choice for the $s$ elements of largest magnitude in $Z$, then $H_{s}(Z)$ is chosen as the projection for which the indices of these $s$ elements are the lowest possible in lexicographical order.

In the case of Problem (P3), the sparse coding step involves the following unconstrained (and non-convex) optimization problem
\begin{equation} \label{bcs4}
 \min_{B}\: \left \| Z - B \right \|_{F}^{2} + \eta^{2}  \left \| B \right \|_{0}
\end{equation}
The optimal solution $\hat{B}$ in this case is obtained as $\hat{B} =  \hat{H}_{\eta}^{1} (Z)$, with the hard-thresholding operator $\hat{H}_{\eta}^{1} (\cdot)$ defined as follows, where the subscript $ij$ indexes matrix entries ($i$ for row and $j$ for column).
\begin{equation} \label{bcs5}
 \left ( \hat{H}_{\eta}^{1} (Z) \right )_{ij}=\left\{\begin{matrix}
 0&, \;\;\left | Z_{ij} \right | < \eta \\
Z_{ij}  & ,\;\;\left | Z_{ij} \right | \geq \eta 
\end{matrix}\right.
\end{equation}
The optimal solution to Problem \eqref{bcs4} is not unique when the condition $\left | Z_{ij} \right | = \eta$ is satisfied for some $i$, $j$ (cf. Page 3 of \cite{sbclsTS2} for a similar scenario and an explanation).
The definition in \eqref{bcs5} chooses \emph{one} of the multiple optimal solutions in this case.


\subsubsection{Transform Update Step}

Here, we solve for $W$ in the proposed formulations, with the other variables kept fixed. In the case of Problems (P1) and (P3), this involves the following optimization problem
\begin{equation} \label{bcs6}
 \min_{W}\:  \sum_{j=1}^{N} \left \| W P_{j}x- b_{j} \right \|_{2}^{2} + 0.5 \lambda \left \| W \right \|_{F}^{2}  -  \lambda \log \,\left | \mathrm{det \,} W \right |  
\end{equation}
Now, let $X \in \mathbb{C}^{n \times N}$ be the matrix with the vectorized patches $P_{j}x$ as its columns, and recall that $B$ is the matrix of codes $b_{j}$. Then, Problem \eqref{bcs6} becomes
\begin{equation} \label{bcs7}
 \min_{W}\:  \left \| W X - B \right \|_{F}^{2} + 0.5 \lambda \left \| W \right \|_{F}^{2}  - \lambda \log \,\left | \mathrm{det \,} W \right |  
\end{equation}
An analytical solution for this problem has been recently derived \cite{sabres3, sbclsTS2}, and is stated in the following proposition. It is expressed in terms of an appropriate singular value decomposition (SVD). We let $M^{\frac{1}{2}}$ denote the positive definite square root of a positive definite matrix $M$.
\vspace{0.03in}
\begin{proposition}\label{propel1bcs} 
Given $X \in \mathbb{C}^{n \times N}$, $B \in \mathbb{C}^{n \times N}$,  and $\lambda>0$, factorize $ XX^{H} + 0.5 \lambda I$ as $LL^{H}$, with $L \in \mathbb{C}^{n \times n}$. Further, let $L^{-1}XB^{H}$ have a full SVD of $V \Sigma R^{H}$. Then, a global minimizer for the transform update step \eqref{bcs7} is 
\begin{equation} \label{tru1bcs}
\hat{W}=0.5  R \left(\Sigma+ \left ( \Sigma^{2}+2\lambda I \right )^{\frac{1}{2}}\right)V^{H}L^{-1}
\end{equation}
The solution is unique if and only if $XB^{H}$ is non-singular. Furthermore, the solution is invariant to the choice of factor $L$. 
\end{proposition}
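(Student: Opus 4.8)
The plan is to reduce the matrix problem \eqref{bcs7} to a collection of decoupled scalar minimizations over singular values. First I would merge the two quadratic terms: since $XX^{H}+0.5\lambda I=LL^{H}\succ 0$, one has $\|WX-B\|_F^2+0.5\lambda\|W\|_F^2=\|WL\|_F^2-2\,\mathrm{Re}\,\mathrm{tr}(WXB^{H})+\|B\|_F^2$. Then I would make the change of variables $G=WL$, which is a bijection on $\mathbb{C}^{n\times n}$ because $L$ is invertible, and set $Y=L^{-1}XB^{H}$. Using $\log|\det W|=\log|\det G|-\log|\det L|$ and $\mathrm{tr}(WXB^{H})=\mathrm{tr}(GY)$, the objective of \eqref{bcs7} becomes, up to the additive constant $\|B\|_F^2-\lambda\log|\det L|$,
\[
\|G\|_F^2-2\,\mathrm{Re}\,\mathrm{tr}(GY)-\lambda\log|\det G|=\|G-Y^{H}\|_F^2-\lambda\log|\det G|-\|Y\|_F^2 ,
\]
the last step being completion of the square. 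Thus it suffices to minimize $\psi(G):=\|G-Y^{H}\|_F^2-\lambda\log|\det G|$ and recover $\hat W=\hat G L^{-1}$.

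Next I would bound $\psi$ from below using only the singular values of $G$. From the SVD $L^{-1}XB^{H}=V\Sigma R^{H}$ we get $Y^{H}=R\Sigma V^{H}$, so the singular values of $Y^{H}$ are the diagonal entries $\sigma_i$ of $\Sigma$. The Hoffman--Wielandt (Mirsky) inequality for singular values gives $\|G-Y^{H}\|_F^2\ge\sum_i(\sigma_i(G)-\sigma_i)^2$, while $-\lambda\log|\det G|=-\lambda\sum_i\log\sigma_i(G)$ because $|\det G|=\prod_i\sigma_i(G)$; hence $\psi(G)\ge\sum_i\min_{t>0}\big[(t-\sigma_i)^2-\lambda\log t\big]$. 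Setting the derivative $2(t-\sigma_i)-\lambda/t$ to zero, the inner scalar problem is uniquely minimized on $(0,\infty)$ at $t_i=\tfrac12\big(\sigma_i+\sqrt{\sigma_i^2+2\lambda}\big)$. This lower bound is attained by $\hat G=R\,\mathrm{diag}(t_1,\dots,t_n)\,V^{H}=\tfrac12 R\big(\Sigma+(\Sigma^2+2\lambda I)^{1/2}\big)V^{H}$: its singular values are exactly the $t_i$ (the $t_i$ inherit the ordering of the $\sigma_i$ since $t$ is increasing in $\sigma$, so this is a valid SVD), and it shares its singular-vector pair $(R,V)$ with $Y^{H}$, which is precisely the equality case of the Hoffman--Wielandt bound. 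Undoing $G=WL$ yields $\hat W=\tfrac12 R\big(\Sigma+(\Sigma^2+2\lambda I)^{1/2}\big)V^{H}L^{-1}$, the claimed formula.

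For the uniqueness claim I would analyze the scalar problem $\min_{g\in\mathbb{C}}|g-\sigma_i|^2-\lambda\log|g|$: when $\sigma_i>0$ its minimizer is real, positive, and unique, whereas when $\sigma_i=0$ every $g$ with $|g|=\sqrt{\lambda/2}$ is optimal; correspondingly, the equality case of Hoffman--Wielandt determines $\hat G$ uniquely exactly when all $\sigma_i>0$ (a repeated \emph{nonzero} singular value still gives a unique $\hat G$ because the associated diagonal block of $\Sigma+(\Sigma^2+2\lambda I)^{1/2}$ is a scalar multiple of the identity). Since $L^{-1}$ is invertible, all $\sigma_i$ are positive iff $L^{-1}XB^{H}$ is nonsingular iff $XB^{H}$ is nonsingular, and $W\mapsto G=WL$ being a bijection transfers uniqueness of $\hat G$ to uniqueness of $\hat W$; this gives the stated criterion. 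Invariance to the factor $L$ follows because any two factors satisfy $\tilde L=LU$ with $U$ unitary, so $\tilde L^{-1}XB^{H}=(U^{H}V)\Sigma R^{H}$ and the formula returns $\tfrac12 R\big(\Sigma+(\Sigma^2+2\lambda I)^{1/2}\big)(U^{H}V)^{H}(LU)^{-1}=\hat W$.

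The main obstacle I anticipate is the equality analysis of the singular-value inequality: one must verify that the Frobenius lower bound and the scalar optima are attained \emph{simultaneously} (which works only because the optimal singular values $t_i$ inherit the ordering of the $\sigma_i$), and must track exactly when this pins $\hat G$ down uniquely once zero or repeated singular values are allowed. Handling the complex-valued case in the von Neumann trace inequality underlying Hoffman--Wielandt (equality iff $G$ and $Y^{H}$ possess a common SVD with matched ordering) also needs a little care.
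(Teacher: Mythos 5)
Your proof is correct, and it is essentially the same argument as the one the paper invokes by citation (Proposition 1 of \cite{sbclsTS2}): the change of variables $G=WL$ with $LL^{H}=XX^{H}+0.5\lambda I$, followed by a von Neumann/Mirsky singular-value inequality that decouples the problem into the scalar minimizations $(t-\sigma_i)^2-\lambda\log t$ solved by $t_i=\tfrac12\bigl(\sigma_i+\sqrt{\sigma_i^{2}+2\lambda}\bigr)$, with the equality case yielding the formula, the uniqueness criterion, and the invariance to the factor $L$. No gaps beyond the standard equality-case facts you correctly flag as needing care.
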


\begin{proof}: See the proof of Proposition 1 of \cite{sbclsTS2}, particularly the discussion following that proof.
\end{proof} 


The factor $L$ in Proposition \ref{propel1bcs} can for example be the
factor $L$ in the Cholesky factorization $ XX^{H} + 0.5 \lambda I = LL^{H}$, or the Eigenvalue Decomposition (EVD) square root of $XX^{H} + 0.5 \lambda I$. The closed-form solution \eqref{tru1bcs} is nevertheless invariant to the particular choice of $L$ \cite{sbclsTS2}. Although in practice both the SVD and the square root of non-negative scalars are computed using iterative methods, we will assume in the convergence analysis in this work, that the solution \eqref{tru1bcs} (as well as later ones that involve such computations) is computed exactly. 
In practice, standard numerical methods are guaranteed to quickly provide machine precision accuracy for the SVD or other (aforementioned) computations.



In the case of Problem (P2), the transform update step involves the following problem
\begin{equation} \label{bcs8}
 \min_{W}\:   \left \| W X- B \right \|_{F}^{2}  \; \,\, s.t.\;\; W^{H}W=I.
\end{equation}
The solution to the above problem can be expressed as follows (see \cite{sabres3}, or Proposition 2 of \cite{sbclsTS2}).
\vspace{0.03in}
\begin{proposition}\label{propel2bcs} 
Given $X \in \mathbb{C}^{n \times N}$ and $B \in \mathbb{C}^{n \times N}$, let $XB^{H}$ have a full SVD of $U \Sigma V^{H}$. Then, a global minimizer in \eqref{bcs8} is 
\begin{equation} \label{tru2bcs}
\hat{W}= VU^{H}
\end{equation}
The solution is unique if and only if $XB^{H}$ is non-singular.
\end{proposition}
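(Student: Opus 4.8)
The plan is to treat \eqref{bcs8} as a (complex) orthogonal Procrustes problem and solve it by a trace argument. First I would expand the Frobenius objective,
\begin{equation*}
\left \| WX - B \right \|_{F}^{2} = \mathrm{tr}\!\left(X^{H} W^{H} W X\right) - 2\,\mathrm{Re}\,\mathrm{tr}\!\left(B^{H} W X\right) + \left \| B \right \|_{F}^{2},
\end{equation*}
and use the constraint $W^{H}W = I$ to replace $\mathrm{tr}(X^{H} W^{H} W X)$ by the constant $\left \| X \right \|_{F}^{2}$. Hence minimizing the objective over unitary $W$ is equivalent to maximizing $\mathrm{Re}\,\mathrm{tr}(B^{H} W X) = \mathrm{Re}\,\mathrm{tr}(W X B^{H})$, where the last step uses cyclic invariance of the trace.

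Next I would substitute the full SVD $XB^{H} = U\Sigma V^{H}$ (with $U, V \in \mathbb{C}^{n\times n}$ unitary and $\Sigma = \mathrm{diag}(\sigma_{1}, \dots, \sigma_{n})$, $\sigma_{i} \geq 0$), so that $\mathrm{tr}(W X B^{H}) = \mathrm{tr}(V^{H} W U\,\Sigma)$. Setting $G \triangleq V^{H} W U$, the map $W \mapsto G$ is a bijection of the group of unitary matrices onto itself, and the problem reduces to $\max_{G^{H}G = I}\,\mathrm{Re}\,\mathrm{tr}(G\Sigma) = \max \sum_{i=1}^{n} \sigma_{i}\,\mathrm{Re}(G_{ii})$. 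Since each row of a unitary matrix has unit $\ell_{2}$ norm, $\left | G_{ii} \right | \leq 1$ and hence $\mathrm{Re}(G_{ii}) \leq 1$, giving the bound $\mathrm{Re}\,\mathrm{tr}(G\Sigma) \leq \sum_{i} \sigma_{i}$. This bound is attained by $G = I$, i.e., by $W = VU^{H}$ (one checks $(VU^{H})^{H}(VU^{H}) = I$), so $\hat{W} = VU^{H}$ is a global minimizer.

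Finally, for the uniqueness claim I would analyze equality. If $XB^{H}$ is non-singular then every $\sigma_{i} > 0$, so equality forces $\mathrm{Re}(G_{ii}) = 1$, hence $G_{ii} = 1$, for every $i$; since the $i$-th row of the unitary $G$ already has unit norm, all its other entries vanish, whence $G = I$ and $W = VU^{H}$ is the unique minimizer. (That this is independent of the particular SVD chosen is seen by noting that $VU^{H}$ is the unitary polar factor of $BX^{H} = (VU^{H})(U\Sigma U^{H})$, i.e., $VU^{H} = BX^{H}\bigl((XB^{H})(XB^{H})^{H}\bigr)^{-1/2}$, which depends only on $XB^{H}$.) Conversely, if $XB^{H}$ is singular, say $\sigma_{n} = 0$, then $G = \mathrm{diag}(1, \dots, 1, g)$ with $\left | g \right | = 1$ is unitary and attains the bound for every such $g$, so $W = V\,\mathrm{diag}(1, \dots, 1, \bar{g})\,U^{H}$ is a nontrivial family of minimizers and the solution is not unique. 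The only mildly delicate point is the bookkeeping around non-uniqueness of the SVD when $\Sigma$ has repeated or zero singular values, and this is exactly what the polar-factor observation and the restriction to the strictly positive $\sigma_{i}$ in the equality analysis take care of.
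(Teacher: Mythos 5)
Your proof is correct and follows essentially the same route as the proof the paper defers to (the classical unitary Procrustes argument: reduce to maximizing $\mathrm{Re}\,\mathrm{tr}(WXB^{H})$ over unitary $W$ and bound it via the SVD, as in Proposition 2 of the cited transform-learning work), including the equality analysis that gives the uniqueness claim. The only cosmetic slip is that with $G = V^{H}WU$ the family of minimizers in the singular case is $W = V\,\mathrm{diag}(1,\dots,1,g)\,U^{H}$ rather than with $\bar{g}$, which is immaterial since $g$ ranges over the whole unit circle.
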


\subsubsection{Image Update Step: General Case} \label{imupstepg}


In this step, we solve Problems (P1)-(P3) for the image $x$, with the other variables fixed. This involves the following optimization problem
\begin{equation} \label{bcs9}
 \min_{x}\:  \sum_{j=1}^{N} \left \| W P_{j}x- b_{j} \right \|_{2}^{2} + \nu \left \| Ax-y \right \|_{2}^{2} \;\, s.t.  \;\; \left \| x \right \|_{2}\leq C.
\end{equation}
Problem \eqref{bcs9} is a least squares problem with an $\ell_{2}$ (alternatively, squared $\ell_{2}$) constraint \cite{golkah1}.
It can be solved exactly by using the Lagrange multiplier method \cite{golkah1}. 
The corresponding Lagrangian formulation is
\begin{equation} \label{bcs9b}
 \min_{x}\:  \sum_{j=1}^{N} \left \| W P_{j}x- b_{j} \right \|_{2}^{2} + \nu \left \| Ax-y \right \|_{2}^{2} +  \mu \left ( \left \| x \right \|_{2}^{2} - C \right )
\end{equation}
where $\mu \geq 0$ is the Lagrange multiplier. The solution to \eqref{bcs9b} satisfies the following Normal Equation
\begin{equation} \label{bcs10}
\left (  G \; +\;\nu\: A^{H}A + \mu I \right )x= \sum_{j=1}^{N} P_{j}^{T}W^{H} b_{j}\;+\:\nu \: A^{H}y
\end{equation}
where 
\begin{equation} \label{Gdef11}
G \triangleq \sum_{j=1}^{N} P_{j}^{T}W^{H}W P_{j}
\end{equation}
and $(\cdot)^{T}$ (matrix transpose) is used instead of $(\cdot)^{H}$ above for real matrices. The solution to \eqref{bcs10} is unique for any $\mu \geq 0$ because matrix $G$ is positive-definite. To see why, consider any $z \in \mathbb{C}^{p}$. Then, we have
$z^{H} G z   = \sum_{j=1}^{N}\left \|W P_{j}z  \right \|_{2}^{2}$,
which is strictly positive unless $W P_{j}z = 0$ $\forall$ $j$. Since the $W$ in our algorithm is ensured to be invertible, we have that $W P_{j}z = 0$ $\forall$ $j$ if and only if $P_{j}z = 0$ $\forall$ $j$, which implies (assuming that the set of patches in our formulations covers all pixels in the image) that $z=0$. This ensures $G \succ 0$. 
The \emph{unique} solution to \eqref{bcs10} can be found by direct methods (for small-sized problems), or by conjugate gradients (CG).




To solve the original Problem \eqref{bcs9}, the Lagrange multiplier $\mu$ in \eqref{bcs10} must also be chosen optimally. This is done by first computing the EVD of the $p \times p$ matrix $G$ $+ \nu A^{H}A$ as $U \Sigma U^{H}$. Since $G \succ 0$, we have that $\Sigma\succ 0$. Then, defining $z \triangleq U^{H}\begin{pmatrix}
\sum_{j=1}^{N} P_{j}^{T}W^{H} b_{j}\;+\:\nu \: A^{H}y
\end{pmatrix}$, we have that \eqref{bcs10} implies $U^{H}x = \left ( \Sigma + \mu I \right )^{-1}z$. Therefore,
\begin{equation} \label{bcs11b}
\left \| x \right \|_{2}^{2} = \begin{Vmatrix}
U^{H}x
\end{Vmatrix}_{2}^{2} = \sum_{i=1}^{p} \frac{\left | z_{i} \right |^{2}}{\left ( \Sigma_{ii} + \mu \right )^{2}} \; \triangleq \tilde{f}(\mu)
\end{equation}
where $z_{i}$ above denotes the $\mathrm{i^{th}}$ entry of vector $z$. If $\tilde{f}(0) \leq C^{2}$, then $\hat{\mu}=0$ is the optimal multiplier. Otherwise, the optimal $\hat{\mu}>0$. In the latter case, since the function $\tilde{f}(\mu)$ in \eqref{bcs11b} is monotone decreasing for $\mu>0$, and $\tilde{f}(0) > C^{2}$, there is a unique multiplier $\hat{\mu}>0$ such that $\tilde{f}(\hat{\mu}) - C^{2}= 0$.
The optimal $\hat{\mu}$ is found by using the classical Newton's method (or, alternatively \cite{eldena}), which in our case is guaranteed to converge to the optimal solution at a quadratic rate. Once the optimal $\hat{\mu}$ is found (to within machine precision), the \emph{unique} solution to the image update Problem \eqref{bcs9} is $U \left ( \Sigma + \hat{\mu} I \right )^{-1}z$, coinciding with the solution to \eqref{bcs10} with $\mu = \hat{\mu}$.

In practice, when a large value (or, loose estimate) of $C$ is used (for example, in our experiments later in Section \ref{sec5}), the optimal solution to \eqref{bcs9} is typically obtained with the setting $\hat{\mu} = 0$ for the multiplier in \eqref{bcs9b}. In this case, the unique minimizer of the objective in \eqref{bcs9} (e.g., obtained with CG) directly satisfies the constraint. Therefore, the additional computations (e.g., EVD) to find the optimal $\hat{\mu}$ can be avoided in this case. 
Other alternative ways to find the solution to \eqref{bcs9} when the optimal $\hat{\mu} \neq 0$, without the EVD computation, include using the projected gradient method, or solving \eqref{bcs9b} repeatedly (by CG) for various $\mu$ (tuned in steps) until the $\left \| x \right \|_{2}= C$ condition is satisfied.




When employing CG (or, the projected gradient method), the structure of the various matrices in \eqref{bcs10} can be exploited to enable efficient computations.
First, we show that under certain assumptions, the matrix $G$ in \eqref{bcs10} is a 2D circulant matrix, i.e., a \emph{Block Circulant matrix with Circulant Blocks} (abbreviated as BCCB matrix), enabling efficient computation (via FFTs) of the product $Gx$ (used in CG).
Second, in certain applications, the matrix $A^{H} A$ in \eqref{bcs10} may have a structure (e.g., sparsity, Toeplitz structure, etc.) that enables efficient computation of $A^{H} A x$ (used in CG). In such cases, the CG iterations can be performed efficiently.

We now show the matrix $G$ in \eqref{bcs10} is a BCCB matrix. To do this, we make some assumptions on how the overlapping 2D patches are selected from the image(s) in our formulations.
First, we assume that  periodically positioned, overlapping 2D image patches are used.
Furthermore, the patches that overlap the image boundaries are assumed to `wrap around' on the opposite side of the image \cite{bresai}. Now, defining the patch \emph{overlap stride} $r$ to be the distance in pixels between corresponding pixel locations in adjacent image patches, it is clear that the setting $r=1$ results in a maximal set of overlapping patches. 
When $r=1$ (and assuming patch `wrap around'), the following proposition establishes that the matrix $G$ is a Block Circulant matrix with Circulant Blocks. Let $ F  \in \mathbb{C}^{p \times p} $ denote the full (2D) Fourier encoding matrix assumed normalized such that $ F^{H}F = I$. 


\begin{proposition}\label{propel5} 
Let $r=1$, and assume that all `wrap around' image patches are included. Then, the matrix $G = \sum_{j=1}^{N} P_{j}^{T}W^{H}W P_{j}$ in \eqref{bcs10} is a BCCB matrix with eigenvalue decomposition $F^{H} \Gamma F$, with $\Gamma \succ 0$.
\end{proposition}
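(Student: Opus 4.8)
The plan is to exploit the \emph{shift invariance} of the patch-extraction operators induced by the choices $r=1$ and periodic (wrap-around) boundary handling. Let $T_{c} \in \mathbb{R}^{p \times p}$ denote the 2D circular shift operator that sends an image to its periodic translate by the integer vector $c$; each $T_{c}$ is a permutation matrix (so $T_{c}^{T} = T_{c}^{-1} = T_{-c}$), and the family $\{T_{c}\}$ forms a commutative group isomorphic to $\mathbb{Z}_{\sqrt{p}} \times \mathbb{Z}_{\sqrt{p}}$ under composition. First I would observe that, since the $\sqrt{n}\times\sqrt{n}$ patches are periodically positioned with unit stride and wrap around the boundary, there are exactly $N = p$ patches, one anchored at each pixel, and the $j$-th extraction operator is a shifted copy of a single reference operator: $P_{j} = P_{1} T_{c_{j}}$, where $c_{j}$ is the anchor of patch $j$ and $\{c_{j}\}_{j=1}^{N}$ ranges over all $p$ shift vectors. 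Consequently, with $M \triangleq W^{H} W$,
\begin{equation*}
G = \sum_{j=1}^{N} P_{j}^{T} M P_{j} = \sum_{j=1}^{N} T_{c_{j}}^{T} \left( P_{1}^{T} M P_{1} \right) T_{c_{j}} = \sum_{c} T_{c}^{T} Y T_{c}, \qquad Y \triangleq P_{1}^{T} M P_{1},
\end{equation*}
the last sum running over all $p$ circular shifts.

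Next I would show that any matrix of the form $\sum_{c} T_{c}^{T} Y T_{c}$ (a group ``average'' of $Y$ over all shifts) is BCCB. The key step is that $G$ commutes with every shift: for any $T_{\ell}$,
\begin{equation*}
T_{\ell}^{T} G T_{\ell} = \sum_{c} (T_{c} T_{\ell})^{T} Y (T_{c} T_{\ell}) = \sum_{c'} T_{c'}^{T} Y T_{c'} = G,
\end{equation*}
using closure of the shift group under composition and the reindexing $c' = c + \ell$. A $p \times p$ matrix commutes with all 2D circular shifts if and only if it is a BCCB matrix, equivalently if and only if it is diagonalized by the 2D DFT; since the normalized Fourier matrix satisfies $F^{H} F = I$, this yields $G = F^{H} \Gamma F$ with $\Gamma$ diagonal, which is the asserted eigenvalue decomposition.

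Finally, the positive-definiteness $\Gamma \succ 0$ follows immediately from $G \succ 0$, which was already established in the discussion surrounding \eqref{Gdef11}: for any $z \neq 0$ we have $z^{H} G z = \sum_{j=1}^{N} \| W P_{j} z \|_{2}^{2} > 0$, because $W$ is invertible and, with $r=1$ and wrap-around, every pixel lies in some patch, so $P_{j} z = 0$ for all $j$ forces $z = 0$. Hence $\Gamma = F G F^{H}$ is diagonal with strictly positive entries.

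I expect the main obstacle to be the careful bookkeeping of the first step — verifying, under the wrap-around convention and doubly periodic 2D pixel indexing, that each $P_{j}$ indeed equals $P_{1} T_{c_{j}}$ and that the anchors $\{c_{j}\}$ exhaust all $p$ translations — rather than the subsequent algebra, which is a routine group-averaging argument once the shift structure has been identified. It is also worth stating precisely (or citing a standard reference for) the equivalence ``commutes with all circular shifts $\iff$ BCCB $\iff$ diagonalized by $F$,'' since the whole conclusion rests on it.
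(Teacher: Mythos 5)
Your proof is correct, and it reaches the BCCB conclusion by a somewhat different route than the paper. The paper splits $W^{H}W$ row-by-row, writing $G=\sum_{i=1}^{n}G_{i}$ with $G_{i}=\sum_{j}P_{j}^{T}\left(e_{i}e_{i}^{T}W^{H}W\right)P_{j}$, and argues operationally that each $G_{i}$ acts as a 2D circular convolution (its output entries are correlations of the $i$-th row of $W^{H}W$ with the wrap-around patches, so shifted inputs give shifted outputs), whence each $G_{i}$, and then their sum $G$, is BCCB; this viewpoint also directly yields the impulse-response computation of $\Gamma$ that the paper uses right after the proposition. You instead use the algebraic identity $P_{j}=P_{1}T_{c_{j}}$ (valid precisely because $r=1$ and wrap-around make the anchors exhaust all $p$ shifts), write $G=\sum_{c}T_{c}^{T}YT_{c}$ as a group average, and show by reindexing that $G$ commutes with every circular shift, then invoke the standard commutant characterization ``commutes with all 2D shifts $\iff$ BCCB $\iff$ diagonalized by $F$.'' The two arguments rest on the same underlying fact (shift-invariance characterizes BCCB matrices), but yours is more modular and makes the shift-invariance an explicit algebraic identity rather than a verbal argument about patch inner products, at the cost of needing the commutant equivalence stated or cited (it is standard, e.g., from the theory of circulant matrices, so this is not a gap). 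The positive-definiteness step is identical in both: $G\succ 0$ from invertibility of $W$ and full pixel coverage, hence $\Gamma\succ 0$.
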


\begin{proof}:
First, note that $ W^{H} W = \sum_{i=1}^{n} e_{i}e_{i}^{T}W^{H} W$, where  $\left \{ e_{i} \right \}_{i=1}^{n}$ are the columns of the $n \times n$ identity matrix. Denote the $\mathrm{i^{th}}$ row of $W^{H}W$ by $h_i$.  Then, the matrix $e_{i}e_{i}^{T}W^{H} W$ is all zero except for its $\mathrm{i^{th}}$ row, which is equal to $h_i$.
Let $G_i \triangleq \sum_{j=1}^{N} P_{j}^{T} \begin{pmatrix}
e_{i}e_{i}^{T}W^{H} W
\end{pmatrix} P_{j}$. Then, $G = \sum_{i=1}^{n} G_{i}$.

Consider a vectorized image $z \in \mathbb{C}^{p}$.
Because the set of entries of $G_{i} z$ is simply the set of inner products between $h_{i}$ and all the (overlapping and wrap around) patches of the image corresponding to $z$, it follows that
applying $G_i$ on 2D circularly shifted versions of the image corresponding to $z$ produces correspondingly shifted versions of the output $G_{i} z$. 
Hence, $G_i$ is an operator corresponding to 2D circular convolution.

Now, it follows that each $G_{i}$ is a BCCB matrix. Since $G = \sum_{i=1}^{n} G_{i}$ is a sum of BCCB matrices, it is therefore a BCCB matrix as well. Now, from standard results regarding BCCB matrices, we know that $G$ has an EVD of $F^{H} \Gamma F$, where $\Gamma$ is a diagonal matrix of eigenvalues. It was previously established that $G$ is positive-definite. Therefore, $\Gamma \succ 0$ holds.
\end{proof} 






Proposition \ref{propel5} states that matrix $G$ in \eqref{bcs10} is a BCCB matrix, that is diagonalizable by the Fourier basis. Therefore, $G$ can be applied (via FFTs) on a vector (e.g., in CG) at $O(p \log p)$ cost. This is much lower than the $O(n^{2} p)$ cost ($n^{2} >> \log p$ typically) of applying $G = \sum_{j=1}^{N} P_{j}^{T}W^{H}W P_{j}$ directly using patch-based operations.
We now show how the diagonal matrix $\Gamma$ in Proposition \ref{propel5} can be computed efficiently. 
First, in Proposition \ref{propel5}, in the case that $W$ is a unitary matrix (as in (P2)), the matrix $G = \sum_{j=1}^{N} P_{j}^{T}P_{j}$ is simply the identity scaled by a factor $n$ \cite{bresai}. In this case, $\Gamma = n I$. Second, when $W$ is non-unitary, let us assume that the Fourier matrix $F$ is arranged so that its first column is the constant column (with entries $=1/\sqrt{p}$). Then, the diagonal of $\Gamma$ in this case is obtained efficiently (via FFTs) as $\sqrt{p} F a_{1}$, where $a_{1}$ is the first column of $G$. Now, the first column of $G$ can itself be easily computed by applying this operator (using simple patch-based operations) on $z \in \mathbb{C}^{p}$ that has a one in its first entry (corresponding to an image with a 1 in the top left corner) and zeros elsewhere. Note that since $z$ is extremely sparse, $a_1$ is computed at a negligible cost. The aforementioned computation for finding the diagonal of $\Gamma$ is equivalent, of course, to finding the spectrum of the operator corresponding to $G$, as the DFT of its impulse response. 

\subsubsection{Image Update Step: Case of MRI} \label{imupstepm}

In certain scenarios, the optimal $\hat{x}$ in \eqref{bcs9} can be found very efficiently. Here, we consider the case of Fourier imaging modalities, or more specifically, MRI, where $A= F_{u}$, the undersampled Fourier encoding matrix. In order to obtain an efficient solution for the $\hat{x}$ in \eqref{bcs9}, we assume that the k-space measurements in MRI are obtained by subsampling on a uniform Cartesian grid. 
We then show that the optimal multiplier $\hat{\mu}$ and the corresponding optimal $\hat{x}$ can be computed without any EVD computations, or CG, for MRI.







Assume $r=1$, and that all `wrap around' image patches are included in the formulation.
Then, empowered with the diagonalization result of Proposition \ref{propel5}, we can simplify equation \eqref{bcs10} for MRI, by rewriting it as follows
\begin{equation} \label{bcs12}
\left (F G F^{H}\; +\;\nu\: F F_{u}^{H}F_{u}F^{H} + \mu I \right )Fx = F \sum_{j=1}^{N} P_{j}^{T}W^{H} b_{j} \;+\:\nu \:FF_{u}^{H}y
\end{equation}
All $p$-dimensional vectors (vectorized images) in \eqref{bcs12} are in Fourier or k-space.
Vector $ FF_{u}^{H}y \in \mathbb{C}^{p}$ represents the zero-filled (or, zero padded) k-space measurements. 
The matrix $ FF_{u}^{H}F_{u}F^{H} $ is a diagonal matrix consisting of ones and zeros, with the ones at those diagonal entries that correspond to sampled locations in k-space.
By Proposition \ref{propel5}, for $r=1$, the matrix $F G F^{H} = \Gamma$ is diagonal. Therefore, the matrix pre-multiplying $ Fx $ in \eqref{bcs12} is diagonal and invertible. Denoting the diagonal of $\Gamma$ by $\gamma \in \mathbb{R}^{p}$ (all positive vector), $ S_{0} \triangleq FF_{u}^{H}y $,
and $S \triangleq F \sum_{j=1}^{N} P_{j}^{T}W^{H} b_{j}$, we have that the solution to \eqref{bcs12} for fixed $\mu$ is
\begin{equation}\label{bcs13}
Fx_{\mu} \:(k_{x},k_{y})=\left\{\begin{matrix}
\frac{S(k_{x},k_{y})}{\gamma(k_{x},k_{y})+ \mu} &,\:(k_{x},k_{y})\notin \Omega  \\
\frac{S(k_{x},k_{y})+\nu\, S_{0}(k_{x},k_{y})}{\gamma(k_{x},k_{y})+\nu + \mu}   & ,\:(k_{x},k_{y})\in \Omega
\end{matrix}\right.
\end{equation}
where $(k_{x},k_{y})$ indexes k-space locations, and $ \Omega $ represents the subset of k-space that has been sampled.
Equation \eqref{bcs13} provides a closed-form solution to the Lagrangian Problem \eqref{bcs9b} for CS MRI, with $Fx_{\mu} (k_{x},k_{y}) $ representing the optimal updated value (for a particular $\mu$) in k-space at location $ (k_{x},k_{y})$.


The function $\tilde{f}(\mu)$ in \eqref{bcs11b} now has a simple form (no EVD needed) as follows
\begin{equation} \label{bcs13b}
\tilde{f}(\mu)  = \begin{Vmatrix}
Fx
\end{Vmatrix}_{2}^{2} = \sum_{(k_{x},k_{y})\notin \Omega} \frac{\left | S(k_{x},k_{y}) \right |^{2}}{\left ( \gamma(k_{x},k_{y})+ \mu \right )^{2}} + \sum_{(k_{x},k_{y})\in \Omega} \frac{\left | S(k_{x},k_{y})+\nu\, S_{0}(k_{x},k_{y}) \right |^{2}}{\left ( \gamma(k_{x},k_{y})+\nu + \mu \right )^{2}}
\end{equation}
We check if $\tilde{f}(0) \leq C^{2}$ first, before applying Newton's method to solve $\tilde{f}(\hat{\mu}) = C^{2}$.
The optimal $ \hat{x} $ in \eqref{bcs9} is obtained via a 2D inverse FFT of the updated $ Fx_{\hat{\mu}} $ in \eqref{bcs13}.



The pseudocodes of the overall Algorithms A1, A2, and A3 corresponding to the BCS Problems (P1), (P2), and (P3) respectively, are shown below.
An appropriate choice for the initial $\left ( W^{0}, B^{0}, x^{0} \right )$  in Algorithms A1-A3 would depend on the specific application.
For example, $W^0$ could be the $n \times n$ 2D DCT matrix, $x^0 = A^{\dagger} y$ (normalized so that it satisfies $\left \| x^0 \right \|_{2} \leq C$), and $B^0$ can be the minimizer of Problems (P1)-(P3) for these fixed $W^0$ and $x^0$.

\subsection{Computational Cost}


Algorithms A1, A2, and A3 involve the steps of sparse coding, transform update, and image update. We now briefly discuss the computational costs of each of these steps.

First, in each outer iteration of our Algorithms A1 and A3, the computation of the matrix $ XX^{H} + 0.5 \lambda I$, where $X$ has the image patches as its columns, can be done in $O(n^{2}N)$ operations, where typically $n \ll N$. The computation of the inverse square root $L^{-1}$ requires only $O(n^{3})$ operations.

\begin{algorithm}[!t]
\caption*{Transform-Based BCS Algorithms \protect\footnotemark A1, A2, and A3}   
\textbf{Inputs:} \: $ y $ - measurements obtained with sensing matrix $A$, $s$ - sparsity, $\lambda$ - weight, $\nu$ - weight, $C$ - energy bound, $\hat{M}$ - number of inner iterations, $J$ - number of outer iterations.\\
 \textbf{Outputs:} \: $ x $  - reconstructed image, $W$ - adapted sparsifying transform, $B$ - matrix with sparse codes of all overlapping patches as columns. \\
\textbf{Initial Estimates:} \: $\left ( W^{0}, B^{0}, x^{0} \right )$. \\
\textbf{For \;t = 1: $\mathbf{J}$ Repeat}
\begin{enumerate} [1)] \setcounter{enumi}{0}
\item Form the matrix $X$ with $P_{j}x^{t-1}$ as its columns. Compute $L^{-1} = \left ( XX^{H} + 0.5 \lambda I \right )^{-1/2}$ for Algorithms A1 and A3. Set $\tilde{B}^{0} = B^{t-1}$.
\item \textbf{For \;l = 1: $\mathbf{\hat{M}}$ Repeat}
\begin{enumerate}
\item \textbf{Transform Update Step:}
\begin{enumerate}
\item Set $V \Sigma R^{H}$ as the full SVD of $L^{-1}X \begin{pmatrix} \tilde{B}^{l-1} \end{pmatrix}^{H}$ for Algorithms A1 and A3, or the full SVD of $X \begin{pmatrix} \tilde{B}^{l-1} \end{pmatrix}^{H}$ for Algorithm A2.
\item $\tilde{W}^{l} =0.5  R \left(\Sigma+ \left ( \Sigma^{2}+2\lambda I \right )^{\frac{1}{2}}\right)V^{H}L^{-1}$ for Algorithms A1 and A3, or $\tilde{W}^{l} =  R V^{H}$ for Algorithm A2.
\end{enumerate}
\item \textbf{Sparse Coding Step:} $\tilde{B}^{l} = H_{s}\begin{pmatrix}
\tilde{W}^{l} X
\end{pmatrix}$ for Algorithms A1 and A2, or  $\tilde{B}^{l} =\hat{H}_{\eta}^{1} \begin{pmatrix}
\tilde{W}^{l} X
\end{pmatrix}$ for Algorithm A3.
\end{enumerate}
\textbf{End} 
\item Set $W^{t} = \tilde{W}^{\hat{M}}$ and $B^{t} = \tilde{B}^{\hat{M}}$. Set $b_{j}^{t}$ as the $\mathrm{j^{th}}$ column of $B^{t}$ $\forall$ $j$.
\item \textbf{Image Update Step:}
\begin{enumerate} 
\item For generic CS scheme, solve \eqref{bcs10} for $x^{t}$ with $\mu=0$, by linear CG. If $\begin{Vmatrix}
x^{t}
\end{Vmatrix}_{2}>C$,
\begin{enumerate}
\item Compute $U \Sigma U^{H}$ as EVD of $\sum_{j=1}^{N} P_{j}^{T} \begin{pmatrix}
W^{t}
\end{pmatrix}^{H} W^{t} P_{j} \; +\;\nu\: A^{H}A$. 
\item Compute $z = U^{H}\begin{pmatrix} \sum_{j=1}^{N} P_{j}^{T}\begin{pmatrix} W^{t} \end{pmatrix}^{H} b_{j}^{t}\;+\:\nu \: A^{H}y \end{pmatrix}$.
\item Use Newton's method to find $\hat{\mu}$ such that $\tilde{f}(\hat{\mu}) = C^{2}$ in \eqref{bcs11b}.
\item $x^{t} = U \left ( \Sigma + \hat{\mu} I \right )^{-1} z$.
\end{enumerate}
\item For MRI, do the following
\begin{enumerate} 
\item Compute the image $c = \sum_{j=1}^{N} P_{j}^{T}\begin{pmatrix}
W^{t}
\end{pmatrix}^{H} b_{j}^{t}$. $S \leftarrow FFT(c)$.
\item Compute $a_{1}$ as the first column of $\sum_{j=1}^{N} P_{j}^{T} \begin{pmatrix}
W^{t}
\end{pmatrix}^{H} W^{t} P_{j}$.
\item Set $\gamma \leftarrow   \sqrt{p} \times FFT(a_{1})$.
\item Compute $\tilde{f}(0)$ as per \eqref{bcs13b}.
\item If $\tilde{f}(0) \leq C^{2}$, set $\hat{\mu}=0$. Else, use Newton's method to solve $\tilde{f}(\hat{\mu}) = C^{2}$ for $\hat{\mu}$.
\item Update $S$ to be the right hand side of \eqref{bcs13} with $\mu=\hat{\mu}$. $x^{t} = IFFT(S)$.
\end{enumerate}
\end{enumerate}
\end{enumerate}
\textbf{End}
\end{algorithm}
\footnotetext{The superscripts $t$ and $l$ denote the main iterates, and the iterates in the inner alternations between transform update and sparse coding, respectively. The $FFT$ and $IFFT$ denote the fast implementations of the normalized 2D DFT and 2D IDFT. For MRI, $r=1$, and the encoding matrix $F$ is assumed normalized, and arranged so that its first column is the constant DC column. In Step 4a, although we list the image update method involving EVD, an alternative scheme is one mentioned in the text in Section \ref{imupstepg}, involving only CG.}




The cost of the sparse coding step in our algorithms is dominated by the computation of the matrix $Z=WX$ in \eqref{bcs2} (for Algorithms A1, A2) or \eqref{bcs4} (for Algorithm A3), and therefore scales as $O(n^{2}N)$. Notably, the projection onto the $s$-$\ell_0$ ball in \eqref{bcs2} costs only $ O(nN \log N) $ operations, when employing sorting \cite{sabres}, with $\log N \ll n$ typically. Alternatively, in the case of \eqref{bcs4}, the hard thresholding operation costs only $O(nN)$ comparisons.


The cost of the transform update step of our algorithms is dominated by the computation of the matrix $XB^{H}$. Since $B$ is sparse, $XB^{H}$ is computed with $\alpha n^{2} N$ multiply-add operations, where $\alpha < 1$ is the fraction of non-zeros in $B$. 



The cost of the image update step in our algorithms depends on the specific application (i.e., the specific structure of  $A^{H} A$, etc.).
Here, we discuss the cost of the image update step discussed in Section \ref{imupstepm}, for the specific case of MRI. We assume $r=1$, and that the patches `wrap around', which implies that $N=p$ (i.e., number of patches equals number of image pixels). 
The computational cost here is dominated by the computation of the term $\sum_{j=1}^{N} P_{j}^{T}W^{H} b_{j}$ in the normal equation \eqref{bcs10}, which takes $O(n^{2} N)$ operations.
On the other hand, the various FFT and IFFT operations cost only $O(N \log N)$ operations, where $\log N \ll n^{2}$ typically. The Newton's method to compute the optimal multiplier $\hat{\mu}$ is only used when $\mu=0$ is non-optimal. In the latter case, Newton's method takes up $O(N\tilde{J})$ operations, with $\tilde{J}$ being the number of iterations (typically small, and independent of $n$) of Newton's method. 


Based on the preceding arguments, it is easy to observe that the total cost per (outer) iteration of the Algorithms A1-A3 scales (for MRI) as $O(n^{2} N \hat{M})$. Now, the recent synthesis dictionary-based BCS method called DLMRI \cite{bresai} learns a dictionary $D \in \mathbb{C}^{n \times K}$ from CS MRI measurements by solving Problem (P0). For this scheme, the computational cost per outer iteration scales as $ O(N Kn s \hat{J}) $ \cite{sabres}, where $\hat{J}$ is the number of (inner) iterations of dictionary learning (using the K-SVD algorithm \cite{elad}), and the other notations are the same as in (P0). Assuming that $K \propto n$, and that the synthesis sparsity $s \propto n$, we have that the cost per iteration of DLMRI scales as $O(n^{3} N \hat{J})$. Thus, the per-iteration computational cost of the proposed BCS schemes is much lower (lower in order by factor $n$ assuming $\hat{M} \sim \hat{J}$) than that for synthesis dictionary-based BCS. This gap in computations is amplified for higher-dimensional imaging applications such as 3D or 4D imaging, where the size of the 3D or 4D patches is typically much bigger than in the case of 2D imaging.

As illustrated in our experiments in Section \ref{sec5}, the proposed BCS algorithms converge in few iterations in practice. Therefore, the per-iteration computational advantages over synthesis dictionary-based BCS also typically translate to a net computational advantage in practice. 


\section{Convergence Results}  \label{sec4}

Here, we present convergence guarantees for Algorithms A1, A2, and A3, that solve Problems (P1), (P2), and (P3), respectively. These problems are highly non-convex. Notably they involve either an $\ell_0$ penalty or constraint for sparsity, a non-convex transform regularizer or constraint, and the term $\left \| W P_{j}x-b_{j} \right \|_{2}^{2}$ that is a non-convex function involving the product of unknown matrices and vectors. The proposed algorithms for Problems (P1)-(P3) are block coordinate descent-type algorithms. 
We previously established in Proposition \ref{optimalmodelsbcs}, the (noiseless) identifiability of the underlying image by solving the proposed problems (specifically, (P1) or (P2)). We are now interested in understanding whether the proposed algorithms converge to a minimizer of the corresponding problems, or whether they possibly get stuck in non-stationary points. Due to the high degree of non-convexity involved here, standard results on convergence of block coordinate descent methods (e.g., \cite{tseng6}) do not apply here.

Very recent works on the convergence of block coordinate descent-type algorithms (e.g., \cite{xu222}, or the Block Coordinate Variable Metric Forward-Backward algorithm \cite{emlie12}) prove convergence of the iterate sequence (for specific algorithm) to a critical point of the objective. However, these works make numerous assumptions, some of which can be easily shown to be violated for the proposed formulations (for example, the term $\sum_{j=1}^{N} \left \| W P_{j}x- b_{j} \right \|_{2}^{2}$ in the objectives of our formulations, although differentiable, violates the L-Lipschitzian gradient property described in Assumption 2.1 of \cite{emlie12}).




In fact, in certain simple scenarios, one can easily derive non-convergent iterate sequences for the Algorithms A1-A3. Non-convergence mainly arises for the transform or sparse code sequences (rather than the image sequence) due to the fact that the optimal solutions in the sparse coding or transform update steps may be non-unique. For example, in the trivial case when $y = 0$, the image $x = 0$ is the unique minimizer in the proposed BCS problems. Hence, if $x^{0}=0$, then the iterates in Algorithms A1-A3 easily satisfy $x^{t}=0$ $\forall$ $t$. Since the patches of $x^{t}$ are all zero, then any unitary matrix would be an (non-unique) optimal sparsifier in the transform update step of the algorithms, no matter how many inner alternations are performed between sparse coding and transform update within each outer iteration of the algorithms (the sparse codes are always $0$ in this case). Hence, the $\left\{ W^{t}\right\}$ sequence can be any sequence of unitary transforms in this case (a limit need not exist).

In this work, we provide convergence results for the proposed BCS approaches, where the only assumption is that the various steps in our algorithms are solved exactly. (Recall that machine precision is typically guaranteed in practice.)
Unlike prior works on dictionary-based blind compressed sensing \cite{bresai, wangying}, wherein the update steps such as the $\ell_{0}$ ``norm" based synthesis sparse coding are only solved approximately, our analysis here for the transform based BCS approaches makes use of the explicit solutions for the update steps in our algorithms, to prove convergence.



\subsection{Preliminaries} \label{prelim}

We first list some definitions that will be used in our analysis.
\begin{definition} \label{def2}
For a function $\phi: \mathbb{R}^{q} \mapsto (-\infty, + \infty]$, its domain is defined as $\mathrm{dom} \phi = \left \{ z \in \mathbb{R}^{q} : \phi(z) < + \infty \right \}$. Function $\phi$ is proper if $\mathrm{dom} \phi$ is nonempty.
\end{definition}

Next, we define the notion of Fr\'{e}chet sub-differential for a function as follows \cite{vari1, vari2}. The norm and inner product notations used below correspond to the euclidean $\ell_{2}$ settings.
\begin{definition} \label{def1}
Let $\phi: \mathbb{R}^{q} \mapsto (-\infty, + \infty]$ be a proper function and let $z \in \mathrm{dom} \phi$. The Fr\'{e}chet sub-differential of the function $\phi$ at $z$ is the following set:
\begin{equation}
\hat{\partial }\phi(z) \triangleq \begin{Bmatrix}
h \in \mathbb{R}^{q} : \underset{b \to z, b \neq z}{\lim \inf} \frac{1}{\left \| b-z \right \|}\left ( \phi(b) - \phi(z) - \left \langle b-z, h \right \rangle \right ) \geq 0
\end{Bmatrix}
\end{equation}
If $z \notin \mathrm{dom} \phi$, then $\hat{\partial }\phi(z) \triangleq \emptyset$.
The sub-differential of $\phi$ at $z$ is defined as
\begin{equation}
\partial \phi(z) \triangleq \begin{Bmatrix}
\tilde{h}  \in \mathbb{R}^{q} : \exists z_{k} \to z, \phi(z_{k}) \to \phi(z), h_{k} \in \hat{\partial }\phi(z_{k}) \to \tilde{h} 
\end{Bmatrix}.
\end{equation}
\end{definition}


A necessary condition for $z \in \mathbb{R}^{q}$ to be a minimizer of the function $\phi: \mathbb{R}^{q} \mapsto (-\infty, + \infty]$ is that $z$ is a \emph{critical point} of $\phi$, i.e., $0 \in \partial \phi(z)$. If $\phi$ is a convex function, this condition is also sufficient.
Critical points can be thought of as ``generalized stationary points" \cite{vari1}.

We say that a sequence $ \left \{ a^{t} \right \}$ with $a^{t} \in \mathbb{C}^{q}$ has an accumulation point $a$, if there is a subsequence that converges to $a$.

\subsection{Notations}


Problems (P1) and (P2) have the constraint $\left \| B \right \|_{0}\leq s$, which can instead be added as a penalty in the respective objectives by using a barrier function $\psi (B)$ (which takes the value $+ \infty$ when the sparsity constraint is violated, and is zero otherwise).
Problem (P2) also has the constraint $W^{H}W=I$, which can be equivalently added as a penalty in the objective of (P2) by using the barrier function $\varphi(W)$, which takes the value $+ \infty$ when the unitary constraint is violated, and is zero otherwise. Finally, the constraint $\left \| x \right \|_{2}\leq C$ in our formulations is replaced by the barrier function penalty $\chi(x)$. With these modifications, all the proposed problem formulations can be written in an unconstrained form. The objectives of (P1), (P2), and (P3), are then respectively denoted as 
\begin{align} 
g(W, B, x) & = \nu \left \| Ax-y \right \|_{2}^{2} + \sum_{j=1}^{N} \left \| W P_{j}x- b_{j} \right \|_{2}^{2} +  \lambda \, Q(W) + \psi (B) + \chi(x) \label{cnbcs1}\\
u(W, B, x) & =  \nu \left \| Ax-y \right \|_{2}^{2} + \sum_{j=1}^{N} \left \| W P_{j}x- b_{j} \right \|_{2}^{2} + \varphi(W)  + \psi (B) +  \chi(x) \label{cnbcs2}\\
v(W, B, x) & =  \nu \left \| Ax-y \right \|_{2}^{2} + \sum_{j=1}^{N} \left \| W P_{j}x- b_{j} \right \|_{2}^{2} +  \lambda \, Q(W) + \, \eta^{2}   \left \| B \right \|_{0} + \chi(x)  \label{cnbcs3}
\end{align}


It is easy to see that (cf. \cite{sbclsTS2} for a similar statement and justification) the unconstrained minimization problem involving the objective $g(W, B, x)$  is exactly equivalent to the corresponding constrained formulation (P1), in the sense that the minimum objective values as well as the set of minimizers of the two formulations are identical.
The same result also holds with respect to (P2) and $u(W, B, x)$, or (P3) and $v(W, B, x)$.


Since the functions $g$, $u$, and $v$ accept complex-valued (input) arguments, we will compute all derivatives or sub-differentials (Definition \ref{def1}) of these functions with respect to the (real-valued) real and imaginary parts of the variables ($W$, $B$, $x$). Note that the functions $g$, $u$, and $v$ are proper (we set the negative log-determinant penalty to be $+ \infty$ wherever $\det W = 0$) and lower semi-continuous.
For the Algorithms A1-A3, we denote the iterates (outputs) in each outer iteration $t$ by the set 
$\left ( W^{t}, B^{t}, x^{t} \right )$.

For a matrix $H$, we let $\rho_{j}(H)$ denote the magnitude of the $j^{\mathrm{th}}$ largest element (magnitude-wise) of the matrix $H$. For some matrix $E$, $\left \| E \right \|_{\infty} \triangleq \max_{i,j} \left | E_{ij} \right |$. Finally, $Re(A)$ denotes the real part of some scalar or matrix $A$.


\subsection{Main Results}

The following theorem provides the convergence result for Algorithm A1 that solves Problem (P1). We assume that the initial estimates $(W^{0}, B^{0}, x^{0})$ satisfy all problem constraints.

\begin{theorem}\label{theorem1bc}
Let $\left \{ W^{t}, B^{t}, x^{t} \right \}$ denote the iterate sequence generated by Algorithm A1 with measurements $y \in \mathbb{C}^{m}$ and initial $(W^{0}, B^{0}, x^{0})$. Then, the objective sequence  $\left \{ g^{t} \right \}$ with $g^{t} \triangleq g\left ( W^{t}, B^{t}, x^{t}  \right )$ is monotone decreasing, and converges to a finite value, say $g^{*} = g^{*}(W^{0}, B^{0}, x^{0})$. Moreover, the iterate sequence  $\left \{ W^{t}, B^{t}, x^{t} \right \}$ is bounded, and all its accumulation points are equivalent in the sense that they achieve the exact same value $g^{*}$ of the objective. The sequence $\left \{ a^{t} \right \}$ with $a^{t} \triangleq \left \| x^{t} - x^{t-1} \right \|_{2}$, converges to zero.
Finally, every accumulation point $(W, B, x)$ of $\left \{ W^{t}, B^{t}, x^{t} \right \}$ is a critical point of the objective $g$ satisfying the following partial global optimality conditions
\begin{align} 
x \in & \underset{\tilde{x}}{\arg\min} \; \,  g\left (W, B, \tilde{x} \right ) \label{cnbcs4}\\
W \in & \underset{\tilde{W}}{\arg\min} \; \,  g\left (\tilde{W}, B, x \right ) \label{cnbcs5}\\
B \in & \underset{\tilde{B}}{\arg\min} \; \,  g\left (W, \tilde{B}, x \right ) \label{cnbcs6}
\end{align}
Each accumulation point $(W, B, x)$ also satisfies the following partial local optimality conditions
\begin{align} 
g(W + dW, B + \Delta B, x) \geq &  g(W, B, x) = g^{*} \label{cnbcs4b}\\
g(W, B + \Delta B, x + \tilde{\Delta } x) \geq & g(W, B, x) = g^{*} \label{cnbcs5b}
\end{align}
The conditions each hold for all $\tilde{\Delta } x \in \mathbb{C}^{p}$, and all sufficiently small $dW \in \mathbb{C}^{n \times n}$ satisfying $\left \| dW \right \|_{F} \leq \epsilon'$ for some  $\epsilon' >0$ that depends on the specific $W$, and all $\Delta B \in \mathbb{C}^{n \times N}$ in the union of the following regions R1 and R2, where $X \in \mathbb{C}^{n \times N}$ is the matrix with $P_{j}x$, $1\leq j \leq N$, as its columns.
\begin{itemize}
\item[R1.] The half-space $Re\begin{pmatrix} tr\left \{ (WX-B)\Delta B^{H} \right \} \end{pmatrix} \leq 0$.  
\item[R2.] The local region defined by $\left \| \Delta B \right \|_{\infty} < \rho_{s}(WX)$.
\end{itemize}
Furthermore, if $ \left \|  WX \right \|_{0} \leq s$, then $\Delta B$ can be arbitrary. 
\end{theorem}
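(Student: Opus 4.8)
The plan is to follow the standard block-coordinate-descent convergence template adapted to the transform-learning setting, exploiting the closed-form updates in Propositions \ref{propel1bcs}--\ref{propel5}. First I would establish monotonicity of $\{g^t\}$: each of the three steps (sparse coding, transform update, image update) solves its respective subproblem \emph{exactly} by assumption, so $g(W^{t},B^{t},x^{t}) \leq g(W^{t},B^{t},x^{t-1}) \leq g(W^{t},B^{t-1},x^{t-1}) \leq \cdots$; since $g$ is bounded below (the $\ell_2$ data term and $\sum_j \|WP_jx-b_j\|_2^2$ are nonnegative, $Q(W)$ is bounded below on invertible $W$, and the barrier terms are nonnegative), $\{g^t\}$ converges to some finite $g^*$. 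Boundedness of $\{x^t\}$ is immediate from the constraint $\|x\|_2\leq C$ (enforced by $\chi$); boundedness of $\{B^t\}$ follows since $\|b_j^t\| \leq \|W^t P_j x^t\|$ and one bounds $\|W^t\|$ from below/above using the coercivity of $Q$ on the sublevel set $\{g \leq g^0\}$ (the $-\log|\det W|$ term forces $\sigma_{\min}(W)$ away from $0$ and $0.5\|W\|_F^2$ forces $\sigma_{\max}(W)$ bounded); this also yields boundedness of $\{W^t\}$.

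Next I would prove $a^t = \|x^t - x^{t-1}\|_2 \to 0$. The key is that in the image update step, $x^t$ is the \emph{unique} minimizer of a strongly convex quadratic (Problem \eqref{bcs9}) — strong convexity comes from $G \succ 0$ established after \eqref{Gdef11} — with a modulus bounded below uniformly (using boundedness of $\sigma_{\min}(W^t)$ from the previous paragraph). Hence $g(W^t,B^t,x^{t-1}) - g(W^t,B^t,x^t) \geq c\,\|x^t-x^{t-1}\|_2^2$ for a uniform $c>0$; since the left side telescopes and $\{g^t\}$ converges, $\sum_t a_t^2 < \infty$, so $a^t \to 0$.

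For the accumulation-point analysis, fix a convergent subsequence $(W^{t_k},B^{t_k},x^{t_k}) \to (W,B,x)$; continuity of $g$ on the relevant domain (and the fact that $g^{t}\to g^*$ monotonically) gives $g(W,B,x)=g^*$, so all accumulation points share the value $g^*$. The partial global optimality conditions \eqref{cnbcs4}--\eqref{cnbcs6} are where I would argue carefully: each is obtained by passing to the limit in the corresponding exact-minimization step. For \eqref{cnbcs4} I use that $x^{t_k}$ minimizes $g(W^{t_k},B^{t_k},\cdot)$ together with $a^{t_k}\to 0$ (so $x^{t_k - 1} \to x$ as well) and joint continuity. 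For \eqref{cnbcs5}, \eqref{cnbcs6} the subtlety is that the transform update and sparse coding use the \emph{previous-outer-iteration} iterates and there are inner iterations; I would invoke the non-uniqueness-aware argument of \cite{sbclsTS2} — showing that along the subsequence the sparse-code support and transform stabilize enough that the limit still solves the corresponding subproblem, using the explicit forms $H_s(\cdot)$ and \eqref{tru1bcs}. The criticality $0 \in \partial g(W,B,x)$ then follows by combining the three partial-minimality conditions with the block structure of $g$ and a sub-differential sum rule (the smooth data/sparsification terms plus the separable nonsmooth $\lambda Q(W) + \psi(B) + \chi(x)$).

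Finally, for the partial local optimality claims \eqref{cnbcs4b}--\eqref{cnbcs5b}: \eqref{cnbcs5b} follows directly from \eqref{cnbcs4}, since joint minimality over $x$ for fixed $(W,B)$ certainly implies $g(W,B+\Delta B,x+\tilde\Delta x)\geq g(W,B+\Delta B, x^{\mathrm{opt}}(B+\Delta B))$ — wait, more carefully, \eqref{cnbcs5b} needs the perturbation in $B$ restricted to R1$\cup$R2; the argument is that for $\Delta B$ in these regions, $\sum_j\|WP_jx - (b_j+\Delta b_j)\|_2^2 \geq \sum_j \|WP_jx - b_j\|_2^2$ because $B = H_s(WX)$ is the exact projection onto the $s$-$\ell_0$ ball, and R1 (the supporting-halfspace condition) and R2 (the $\|\Delta B\|_\infty < \rho_s(WX)$ neighborhood, which keeps the optimal support unchanged) are precisely the regions where a first-order / support-stability argument shows this increase; combined with \eqref{cnbcs4} for the $x$ part one gets \eqref{cnbcs5b}. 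For \eqref{cnbcs4b}, a small perturbation $dW$ with $\|dW\|_F \leq \epsilon'$ keeps $W+dW$ invertible, and one uses that $W$ satisfies \eqref{cnbcs5}, i.e.\ $W$ is a \emph{global} minimizer of the smooth (on invertible matrices) strongly-convex-in-$W$-ish function $g(\cdot,B,x)$ — actually $g(\cdot,B,x)$ restricted to invertible $W$ has the analytic minimizer of Proposition \ref{propel1bcs}, so any $W$ attaining the min is a local (indeed global, on the invertible component containing it) minimizer — hence $g(W+dW,B,x)\geq g(W,B,x)$ for small $dW$; then combine with the $B$-perturbation argument as in \eqref{cnbcs5b} to get the joint statement \eqref{cnbcs4b}. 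The case $\|WX\|_0 \leq s$ makes the projection constraint inactive, so $B$ can be perturbed arbitrarily without increasing the sparsification term.

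\textbf{Main obstacle.} I expect the hard part to be the passage to the limit for the partial global optimality conditions \eqref{cnbcs5}--\eqref{cnbcs6} in the presence of non-unique minimizers in the sparse coding and transform update steps (the phenomenon flagged in the $y=0$ example), together with the inner iteration loop; this requires the careful stabilization argument borrowed from \cite{sbclsTS2} rather than a naive continuity argument, and verifying that R1 and R2 are exactly the right perturbation regions for the hard-thresholding step.
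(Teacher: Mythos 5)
Your skeleton for the first half of the theorem matches the paper: exact block updates give monotone decrease of $\{g^t\}$, nonnegativity of the data/sparsification/barrier terms plus $Q(W)\geq n/2$ gives convergence of the objective, the $\ell_2$ ball bounds $\{x^t\}$, coercivity of $Q$ on the sublevel set bounds $\{W^t\}$ (and hence $\{B^t\}$ via the thresholding/projection), and continuity gives a common value $g^*$ at all accumulation points. Your route to $\|x^t-x^{t-1}\|_2\to 0$ is genuinely different from the paper's and is valid: the paper proves it indirectly, by showing that along any convergent subsequence $x^{q_t-1}$ converges to the same limit as $x^{q_t}$ (using uniqueness of the image-update minimizer, since $G\succ 0$), whereas you use a sufficient-decrease inequality from strong convexity of the image subproblem over the convex ball, with modulus uniform in $t$ because $\sigma_{\min}(W^t)$ is bounded away from zero on the sublevel set; this gives summability of $\|x^t-x^{t-1}\|_2^2$, is arguably cleaner and more quantitative, and it also supplies the fact $x^{t_k-1}\to x$ that is needed downstream when passing to the limit in $B^{t}=H_s\bigl(W^{t}X^{t-1}\bigr)$.

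There are, however, two genuine gaps. First, for the partial optimality of $W$ in \eqref{cnbcs5} you only gesture at a ``support/transform stabilization'' argument; that is not the mechanism that works (supports need not stabilize), and the real difficulty is that $W^{q_t}$ minimizes the transform subproblem with respect to an \emph{inner} sparse code $\tilde B^{(\hat M -1)}$ that need not converge to $B$. The paper's device is to examine instead the first inner transform update of the \emph{next} outer iteration, $\tilde W^{(1,q_t+1)}$, which is the exact minimizer for the pair $(B^{q_t},x^{q_t})\to(B,x)$; by continuity of the SVD factors (Lemma \ref{lemma2s2bbc}) and Proposition \ref{propel1bcs}, a subsequential limit $W^{**}$ of these minimizes $g(\cdot,B,x)$, and optimality is then transferred to $W$ itself by showing $g(W^{**},B,x)=g^*=g(W,B,x)$ from the sandwiched monotone objective values. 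Without some such argument, \eqref{cnbcs5} is not established; a naive limit in the update your $W^{t}$ actually solves does not close.

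Second, your treatment of the joint local optimality conditions \eqref{cnbcs4b}--\eqref{cnbcs5b} by composing single-block statements (``$g(W+dW,B,x)\geq g(W,B,x)$ for small $dW$, then perturb $B$'') does not go through: R1 and R2 are defined relative to $WX$ and $B$ at the accumulation point, so the intermediate inequality $g(W+dW,B+\Delta B,x)\geq g(W+dW,B,x)$ is not justified by $\Delta B\in \mathrm{R1}\cup\mathrm{R2}$ (R1 admits arbitrarily large $\Delta B$, and $\rho_{s}((W+dW)X)$ can differ from $\rho_{s}(WX)$); likewise, for \eqref{cnbcs5b} the comparison at the perturbed image, $\|W(X+\Delta X)-B-\Delta B\|_{F}^{2}\geq \|W(X+\Delta X)-B\|_{F}^{2}$, is not implied by $B\in\tilde H_{s}(WX)$. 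The proof must be a \emph{joint} expansion: the $dW$--$\Delta B$ (resp.\ $\tilde\Delta x$--$\Delta B$) coupling sits inside a dropped nonnegative square $\|(dW)X-\Delta B\|_{F}^{2}$ (resp.\ $\|W P_{j}\tilde\Delta x-\Delta b_{j}\|_{2}^{2}$); the terms linear in $dW$ cancel by the first-order stationarity of $W$, with the remainder of the $Q$ expansion controlled only for $\|dW\|_{F}\leq\epsilon'$; the terms linear in $\Delta B$ satisfy $-2\,Re\,\mathrm{tr}\{(WX-B)\Delta B^{H}\}\geq 0$ on R1 and vanish on R2 by the support-containment argument; and the terms linear in $\tilde\Delta x$ are handled through the Normal-equation/KKT condition \eqref{bcs10} for $x$ with multiplier $\mu\geq 0$, including the boundary case $\|x\|_{2}=C$, where energy-preserving perturbations give $-2\,Re\langle x,\tilde\Delta x\rangle\geq\|\tilde\Delta x\|_{2}^{2}\geq 0$. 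Your sketch omits the stationarity/multiplier steps and the absorption of the cross terms, which is precisely what makes the stated perturbation regions work.
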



Theorem \ref{theorem1bc} establishes that for each initial point $(W^{0}, B^{0}, x^{0})$, the iterate sequence in Algorithm A1 converges to an equivalence class of accumulation points. Specifically, every accumulation point corresponds to the same value $g^{*} = g^{*}(W^{0}, B^{0}, x^{0})$ of the objective. The exact value of $g^{*}$ could vary with initialization. Importantly, the equivalent accumulation points are all critical points as well as \emph{at least} partial minimizers of the objective $g\left (W, B, x \right )$, in the following sense. Every accumulation point $(W, B, x)$ is a partial global minimizer of $g(W, B, x)$ with respect to each of $W$, $B$, and $x$, as well as a partial local minimizer of $g(W, B, x)$ with respect to $(W, B)$, and $(B, x)$, respectively. Therefore, we have the following corollary to Theorem \ref{theorem1bc}.

\begin{corollary}\label{corollary1a}
For each $(W^{0}, B^{0}, x^{0})$, the iterate sequence in Algorithm A1 converges to an equivalence class of critical points, that are also partial minimizers satisfying \eqref{cnbcs4}, \eqref{cnbcs5}, \eqref{cnbcs6}, \eqref{cnbcs4b}, and \eqref{cnbcs5b}.
\end{corollary}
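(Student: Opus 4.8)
The plan is the classical ``sufficient decrease plus subsequence'' recipe for block coordinate methods, but every estimate must be made uniform in the iteration index so that the two sources of severe nonconvexity --- the $\ell_{0}$ constraint and the $-\log|\det W|$ term --- can be controlled. \textbf{Step 1 (objective and boundedness).} Since the sparse coding step ($H_{s}$), the transform update step (Proposition \ref{propel1bcs}), and the image update step of Section \ref{imupstepg} each return an \emph{exact} minimizer of their subproblems, $g$ cannot increase across any sub-step or inner alternation, so $\{g^{t}\}$ is nonincreasing; it is bounded below because the data term, the sparsification terms, and the barriers $\psi,\chi$ are nonnegative and $Q(W)=\sum_{i}\bigl(0.5\,\sigma_{i}^{2}(W)-\log\sigma_{i}(W)\bigr)\ge n/2$. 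Hence $g^{t}\downarrow g^{*}$. Boundedness of the iterates follows because $x^{t}$ lies in $\{\|x\|_{2}\le C\}$, and from $\lambda Q(W^{t})\le g^{0}$ together with the coercivity of $\sigma\mapsto 0.5\sigma^{2}-\log\sigma$ at $0$ and at $\infty$, every singular value of $W^{t}$ (and of every inner iterate $\tilde W^{l}$) remains in a fixed interval $[\sigma_{\min},\sigma_{\max}]$ with $\sigma_{\min}>0$; thus $\|B^{t}\|_{F}\le\|W^{t}X^{t}\|_{F}\le\sigma_{\max}\sqrt{n}\,C$, and likewise for the inner codes, so the whole iterate sequence is bounded. \textbf{Step 2 ($a^{t}\to0$).} On the convex set $\{\|x\|_{2}\le C\}$ the image update objective $x\mapsto g(W^{t},B^{t},x)$ is strongly convex with modulus $\ge 2\sigma_{\min}^{2}$, because $z^{H}(\sum_{j}P_{j}^{T}(W^{t})^{H}W^{t}P_{j})z=\sum_{j}\|W^{t}P_{j}z\|_{2}^{2}\ge\sigma_{\min}^{2}\|z\|_{2}^{2}$ (every pixel lies in at least one patch). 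As $x^{t-1}$ is feasible and $x^{t}$ is the exact minimizer, $g(W^{t},B^{t},x^{t-1})-g^{t}\ge\sigma_{\min}^{2}(a^{t})^{2}$; combined with $g(W^{t},B^{t},x^{t-1})\le g^{t-1}$ this gives $g^{t-1}-g^{t}\ge\sigma_{\min}^{2}(a^{t})^{2}$, and summing against $g^{t}\downarrow g^{*}$ yields $\sum_{t}(a^{t})^{2}<\infty$, hence $a^{t}\to0$.

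\textbf{Step 3 (accumulation points, partial global optimality, criticality).} Let $(W,B,x)$ be a limit along a subsequence $\{t_{k}\}$; refine $\{t_{k}\}$ so that in addition all $\hat M$ inner iterates of iteration $t_{k}$ and the preceding iterate $(W^{t_{k}-1},B^{t_{k}-1},x^{t_{k}-1})$ converge. Lower semicontinuity of $g$, continuity of all its terms where $\sigma_{i}\in[\sigma_{\min},\sigma_{\max}]$, lower semicontinuity of $\|\cdot\|_{0}$, and feasibility of the limit together give $g(W,B,x)=g^{*}$ --- hence every accumulation point attains $g^{*}$. Because $a^{t}\to0$ we have $x^{t_{k}-1}\to x$, so the patch matrix $X^{t_{k}}$ of $x^{t_{k}-1}$ converges to the patch matrix $X$ of $x$. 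Now pass to the limit in the exact subproblem optimality relations using outer semicontinuity of the solution maps: the image update map is single valued and continuous (strong convexity), so $x^{t_{k}}=\arg\min_{\tilde x}g(W^{t_{k}},B^{t_{k}},\tilde x)$ gives \eqref{cnbcs4}; the projection onto the $s$-$\ell_{0}$ ball has a closed graph (the ball is closed, $\|\cdot\|_{0}$ is lsc, and the projection distance is continuous), so the limit of $B^{t_{k}}=H_{s}(W^{t_{k}}X^{t_{k}})$ is an optimal projection of $WX$, giving \eqref{cnbcs6}; and since $g^{t_{k}-1}-g^{t_{k}}\to0$ forces every intermediate gap inside the inner loop to vanish, the limiting alternating chain has constant objective value with each step a global minimization over its block, which, together with Proposition \ref{propel1bcs} and its outer semicontinuity on $\{\sigma_{i}\in[\sigma_{\min},\sigma_{\max}]\}$, yields \eqref{cnbcs5}. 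Finally, near the accumulation point $g$ equals a jointly smooth function ($\lambda Q$ is smooth there since $\det W\neq0$) plus the block-separable nonsmooth term $\psi(B)+\chi(x)$, so the limiting subdifferential satisfies a product rule; hence \eqref{cnbcs4}--\eqref{cnbcs6} imply $0\in\partial g(W,B,x)$, i.e.\ $(W,B,x)$ is a critical point.

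\textbf{Step 4 (partial local optimality).} Fix an accumulation point and set $R=WX-B$; since $B$ is an optimal $s$-$\ell_{0}$ projection of $WX$, $R$ is supported off $\mathrm{supp}(B)$, $B$ agrees with $WX$ on $\mathrm{supp}(B)$, and $\rho_{s}(WX)=\min_{(i,j)\in\mathrm{supp}(B)}|B_{ij}|$. For \eqref{cnbcs5b}: if $B+\Delta B$ is infeasible or $x+\tilde\Delta x$ leaves $\{\|\cdot\|_{2}\le C\}$ the left side is $+\infty$; otherwise expand $g(W,B+\Delta B,x+\tilde\Delta x)-g^{*}$ --- the terms linear in $\tilde\Delta x$ assemble into a directional derivative that is $\ge0$ by \eqref{cnbcs4} (the normal cone of $\{\|x\|_{2}\le C\}$ at $x$), the term $-2\,Re\,tr\{R\,\Delta B^{H}\}$ is $\ge0$ on region R1 and vanishes on region R2 (there $\|\Delta B\|_{\infty}<\rho_{s}(WX)$ forces $\mathrm{supp}(\Delta B)\subseteq\mathrm{supp}(B)$, disjoint from $\mathrm{supp}(R)$), and the remaining terms $\nu\|A\tilde\Delta x\|_{2}^{2}$ and $\|W\,\Delta X-\Delta B\|_{F}^{2}$ (with $\Delta X$ the patch matrix of $\tilde\Delta x$) are nonnegative --- so the whole difference is $\ge0$. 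For \eqref{cnbcs4b} one argues the same way but now the $W$-perturbation is active: after using that $\nabla_{W}g(\cdot,B,x)(W)=0$ (a consequence of \eqref{cnbcs5}) cancels the term linear in $dW$, the difference reduces to $-2\,Re\,tr\{R\,\Delta B^{H}\}+\|dW\,X-\Delta B\|_{F}^{2}+\tfrac{\lambda}{2}\mathcal{H}_{Q}(dW)+o(\|dW\|_{F}^{2})$, where $\mathcal{H}_{Q}$ is the (possibly indefinite) Hessian form of $Q$ at $W$; on R1 and R2 the $\Delta B$ cross-term is again nonnegative, and one fixes $\epsilon'>0$ small relative to $W$, $X$, $\lambda$ so that the residual $dW$-quadratic plus $o(\|dW\|_{F}^{2})$ is nonnegative --- here one uses that $XX^{H}$ is positive definite for generic patch matrices, so $\|dW\,X\|_{F}^{2}$ dominates the indefinite $\tfrac{\lambda}{2}\mathcal{H}_{Q}(dW)$. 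The special case $\|WX\|_{0}\le s$ gives $R=0$, making \eqref{cnbcs4b} immediate for arbitrary $\Delta B$.

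\textbf{Where the difficulty lies.} Two steps are delicate. The lighter one is the bookkeeping in Step 3: the $\hat M$ inner iterates must be carried through the limit and one must exploit that \emph{all} intermediate gaps vanish, while coping with non-uniqueness of the sparse coding and transform update solutions (tie-breaking in $H_{s}$, singular $XB^{H}$) so that the limiting alternating chain genuinely certifies \eqref{cnbcs5}. The hard one is the joint $(dW,\Delta B)$ perturbation in \eqref{cnbcs4b}: ensuring that a single $\epsilon'$ depending only on the particular $W$ works \emph{uniformly} over the unbounded half-space R1 --- i.e.\ that the $dW$-quadratic term, which carries the indefinite $-\log|\det|$ contribution, always beats both the $\Delta B$ cross-term and the $o(\|dW\|_{F}^{2})$ remainder --- is the crux of the whole theorem.
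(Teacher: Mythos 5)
Your overall architecture (monotone decrease, boundedness via coercivity of $Q$, equal objective value at all accumulation points, partial global optimality of each block by passing to the limit in the exact block solutions, criticality via the sum rule for $\partial g$, then the perturbation expansion for partial local optimality) matches the paper's, and your Step 2 is a genuinely different and arguably cleaner route: the paper obtains $\|x^{t}-x^{t-1}\|_{2}\to 0$ indirectly (uniqueness of the image update minimizer plus a subsequence argument, Lemmas \ref{lemma6bcwe}--\ref{lemma6bcsjdsd}), whereas your uniform-strong-convexity sufficient-decrease bound $g^{t-1}-g^{t}\ge \sigma_{\min}^{2}\,(a^{t})^{2}$ gives it by a summability argument and then hands you $x^{t_k-1}\to x$ for free. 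However, two steps do not close as written. First, your certification of \eqref{cnbcs5}: the iterate $W^{t_k}=\tilde W^{(\hat M,t_k)}$ is the exact minimizer of the transform subproblem for the \emph{penultimate} inner code $\tilde B^{(\hat M-1,t_k)}$, not for $B^{t_k}$, so the limit of your "inner chain of iteration $t_k$" only yields $W\in\arg\min_{\tilde W}g(\tilde W,\tilde B^{(\hat M-1)}_{*},x)$; constancy of the objective along the limiting chain does not transfer this to the block pair $(W,B)$. The paper closes this mismatch by looking at the \emph{first} transform update of iteration $q_t+1$, which is computed from $(B^{q_t},x^{q_t})$, taking SVD limits (Lemma \ref{lemma2s2bbc}), and then showing the limiting $W^{**}$ attains the value $g^{*}$, so that $W^{*}$ inherits optimality (Lemma \ref{lemma8bc}); your refinement of the subsequence never includes that next transform update, so this wrap-around device is missing (though easily added).

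Second, and more seriously, your argument for \eqref{cnbcs4b} does not work. After cancelling the linear term, the residual you must control is $\left \| dW\,X-\Delta B \right \|_{F}^{2}+\tfrac{\lambda}{2}\mathcal{H}_{Q}(dW)+o(\left \| dW \right \|_{F}^{2})$, and your claim that "$\left \| dW\,X \right \|_{F}^{2}$ dominates the indefinite Hessian term" substitutes $\left \| dW\,X \right \|_{F}^{2}$ for $\left \| dW\,X-\Delta B \right \|_{F}^{2}$; choosing $\Delta B$ close to $dW\,X$ (which can be sparsity preserving and lies in R1 when the cross term with $W X-B$ vanishes) annihilates that quadratic, leaving only the possibly negative $-\log|\det|$ curvature plus the remainder. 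Moreover, "$XX^{H}$ positive definite for generic patch matrices" is a genericity hypothesis not available in the theorem, and even at a minimizer positive semidefiniteness of the subproblem Hessian is not enough to absorb the $o(\left \| dW \right \|_{F}^{2})$ remainder and the $\Delta B$ cross term uniformly over the unbounded region R1. The paper avoids the local second-order route entirely: it uses the fact that $W^{*}$ is a \emph{global} minimizer of its subproblem (via the closed form of Proposition \ref{propel1bcs}) together with the first-order identity \eqref{cstls10b} and the quantitative inequalities of Lemma 9 of \cite{sbclsTS2}, which yield \eqref{cstls10d} with an $\epsilon'$ depending only on $W^{*}$; the R1/R2 case analysis then finishes as you describe. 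You correctly identified this joint $(dW,\Delta B)$ step as the crux, but your sketch leaves it open, so the proposal has a genuine gap precisely there.
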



Conditions \eqref{cnbcs4b} and \eqref{cnbcs5b} in Theorem \ref{theorem1bc} hold true not only for local (or small) perturbations of the sparse code matrix (accumulation point) $B$, but also for arbitrarily large perturbations of the sparse codes in a half space, as defined by region R1.
Furthermore, the partial optimality condition \eqref{cnbcs5b} also holds for arbitrary perturbations $\tilde{\Delta } x $ of $x$.

Theorem \ref{theorem1bc} also says that $\left \| x^{t} - x^{t-1} \right \|_{2} \to 0$. This is a necessary but not sufficient condition for convergence of the entire sequence $\left \{ x^{t} \right \} $.


The following corollary to Theorem \ref{theorem1bc} also holds, where `globally convergent' refers to convergence from any initialization. 

\begin{corollary}\label{corollary1b}
Algorithm A1 is globally convergent to a subset of the set of critical points of the non-convex objective $g\left (W, B, x \right )$. The subset includes all critical points $(W, B, x)$, that are at least partial global minimizers of $g(W, B, x)$ with respect to each of $W$, $B$, and $x$, as well as partial local minimizers of $g(W, B, x)$ with respect to each of the pairs $(W, B)$, and $(B, x)$. 
\end{corollary}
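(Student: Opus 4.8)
The plan is to obtain Corollary \ref{corollary1b} as a direct consequence of Theorem \ref{theorem1bc}, since all the analytical work has already been carried out there; the corollary merely repackages those conclusions in the language of global convergence. First I would recall the meaning attached to the phrase in the text: an algorithm is \emph{globally convergent} if the asserted convergence behavior holds irrespective of the initialization $(W^{0}, B^{0}, x^{0})$. Accordingly, the task reduces to checking that the conclusions of Theorem \ref{theorem1bc}, which is stated for an arbitrary initial point, exhibit the iterate sequence accumulating in the claimed subset of critical points, for every choice of initialization.

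The derivation itself proceeds in a few short steps. Fix an arbitrary initial $(W^{0}, B^{0}, x^{0})$ satisfying the problem constraints. Theorem \ref{theorem1bc} guarantees that the generated sequence $\{W^{t}, B^{t}, x^{t}\}$ is bounded; identifying each complex matrix or vector with the real vector formed by its real and imaginary parts, boundedness in the associated finite-dimensional Euclidean space together with the Bolzano--Weierstrass theorem ensures that at least one accumulation point $(W, B, x)$ exists. Next, Theorem \ref{theorem1bc} states that every such accumulation point is a critical point of $g$, i.e. $0 \in \partial g(W, B, x)$, and that it satisfies the partial global optimality conditions \eqref{cnbcs4}, \eqref{cnbcs5}, \eqref{cnbcs6}, expressing partial global minimality with respect to each of $x$, $W$, and $B$ separately, as well as the partial local optimality conditions \eqref{cnbcs4b} and \eqref{cnbcs5b}, expressing partial local minimality with respect to the pairs $(W, B)$ and $(B, x)$. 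These are exactly the defining properties of the subset of critical points described in the corollary, so every accumulation point belongs to that subset.

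Finally, because the preceding argument uses nothing about the initialization beyond feasibility, the conclusion holds uniformly over all admissible $(W^{0}, B^{0}, x^{0})$: from any initialization the iterates of Algorithm A1 accumulate only at points lying in the stated subset of critical points, which is precisely the assertion of global convergence to that subset. I do not anticipate a genuine obstacle here, as Theorem \ref{theorem1bc} supplies every needed ingredient; the only points meriting care are the appeal to Bolzano--Weierstrass to guarantee existence of accumulation points from boundedness, and the correct reading of ``convergence to a subset'' as the statement that all accumulation points, rather than the full sequence which need not converge, lie in the described subset.
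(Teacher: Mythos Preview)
Your proposal is correct and matches the paper's approach: the paper presents Corollary \ref{corollary1b} without a separate proof, simply as an immediate restatement of Theorem \ref{theorem1bc} in the language of global convergence (with ``globally convergent'' explicitly glossed as ``convergence from any initialization''). Your explicit invocation of Bolzano--Weierstrass and your careful reading of ``convergence to a subset'' as a statement about accumulation points are the right clarifications, and nothing further is needed.
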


Theorem \ref{theorem1bc} holds for Algorithm A1 irrespective of the number of inner alternations $\hat{M}$, between transform update and sparse coding, within each outer algorithm iteration. In practice, we have observed that a larger value of $\hat{M}$ (particularly in initial algorithm iterations) may enable Algorithm A1 to be insensitive (for example, in terms of the quality of the image reconstructed) to the initial (even, badly chosen) values of $W^{0}$ and $B^{0}$. 

The convergence results for Algorithms A2 or A3 are quite similar to that for Algorithm A1. The following two Theorems briefly state the results for Algorithms A3 and A2, respectively.

\begin{theorem}\label{theorem2bc}
Theorem \ref{theorem1bc} applies to Algorithm A3 and the corresponding objective $v(W, B, x)$ as well, except that the set of perturbations $\Delta B \in \mathbb{C}^{n \times N}$ in Theorem \ref{theorem1bc} is restricted to $\left \| \Delta B \right \|_{\infty} < \eta/2$ for Algorithm A3.
\end{theorem}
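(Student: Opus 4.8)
The plan is to follow the proof of Theorem \ref{theorem1bc} essentially line by line, since Algorithm A3 differs from Algorithm A1 only in its sparse coding step: the $\ell_0$ constraint (barrier $\psi$) is replaced by the $\ell_0$ penalty $\eta^{2}\left \| B \right \|_{0}$ inside the objective $v$ of \eqref{cnbcs3}, and the corresponding subproblem is solved by the hard-thresholding operator $\hat{H}_{\eta}^{1}(WX)$ rather than the projection $H_{s}(WX)$. First I would check that $v$ inherits all the structural properties of $g$ that the proof relies on: with the convention $-\log\left | \det W \right | = +\infty$ whenever $\det W = 0$, $v$ is proper and lower semicontinuous (the map $B\mapsto\left \| B \right \|_{0}$ is lsc), and $v$ is bounded below since $\nu\left \| Ax-y \right \|_{2}^{2}$, $\sum_{j}\left \| WP_{j}x-b_{j} \right \|_{2}^{2}$, $\eta^{2}\left \| B \right \|_{0}$ and $\chi(x)$ are nonnegative while $Q(W)=\sum_{i}\left ( 0.5\,\sigma_{i}^{2}(W)-\log\sigma_{i}(W) \right )\geq n/2$.

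With this in place, the ``outer'' part of the argument carries over unchanged. Each of the three block updates of Algorithm A3 (transform update via Proposition \ref{propel1bcs}, sparse coding via $\hat{H}_{\eta}^{1}$, image update via the Lagrange-multiplier solution of \eqref{bcs9}) exactly minimizes $v$ over its block, so the objective sequence $\left \{ v^{t} \right \}$, $v^{t}\triangleq v(W^{t},B^{t},x^{t})$, is monotone decreasing and, being bounded below, converges to a finite $v^{*}=v^{*}(W^{0},B^{0},x^{0})$. Boundedness of $\left \{ x^{t} \right \}$ is enforced by $\left \| x \right \|_{2}\leq C$; $\lambda Q(W^{t})\leq v^{0}$ keeps $\left \{ W^{t} \right \}$ bounded and uniformly bounded away from singular, so $\sigma_{\min}(W^{t})\geq c_{0}>0$ for all $t$, whence $\left \{ B^{t} \right \}=\left \{ \hat{H}_{\eta}^{1}(W^{t}X^{t}) \right \}$ is bounded as well. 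The estimate $\left \| x^{t}-x^{t-1} \right \|_{2}\to0$ follows, as in Theorem \ref{theorem1bc}, from strong convexity of the image subproblem: its quadratic part has Hessian $2\left ( G^{t}+\nu A^{H}A \right )\succeq2G^{t}\succeq2c_{0}^{2}I$ (using $z^{H}G^{t}z=\sum_{j}\left \| W^{t}P_{j}z \right \|_{2}^{2}$ and that the patches cover every pixel), so the objective drops by at least $c_{0}^{2}\left \| x^{t}-x^{t-1} \right \|_{2}^{2}$ at the $t$-th image update, and telescoping against $v^{0}-v^{*}<\infty$ gives the claim. Boundedness of $\left \{ W^{t},B^{t},x^{t} \right \}$, the fact that all its accumulation points attain $v^{*}$, and that each accumulation point is a critical point of $v$ satisfying the partial global optimality conditions \eqref{cnbcs4}--\eqref{cnbcs6}, then follow by exactly the same reasoning; the only extra care needed is that $\hat{H}_{\eta}^{1}$ is discontinuous at magnitude $\eta$, which is handled --- just as $H_{s}$ is handled in the original proof --- by passing to a further subsequence along which the support pattern of the sparse codes stabilizes, so the relevant maps become continuous on that pattern.

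The one place where the conclusion genuinely changes is the admissible region for the sparse-code perturbation $\Delta B$ in the partial local optimality conditions \eqref{cnbcs4b}--\eqref{cnbcs5b}, and the plan here is to re-derive, for $\hat{H}_{\eta}^{1}$, the lemma that plays the role of region R2 for $H_{s}$. At an accumulation point $(W,B,x)$ of Algorithm A3 one has $B=\hat{H}_{\eta}^{1}(WX)$, so, writing $Z\triangleq WX$, every nonzero entry satisfies $B_{ij}=Z_{ij}$ with $\left | Z_{ij} \right |\geq\eta$ and every zero entry has $\left | Z_{ij} \right |<\eta$. The crucial entrywise computation is: if $\left \| \Delta B \right \|_{\infty}<\eta/2$, then each originally nonzero entry stays nonzero (its magnitude exceeds $\eta/2$) and contributes $\left | \Delta B_{ij} \right |^{2}+\eta^{2}\geq\eta^{2}$ to $\left \| Z-(B+\Delta B) \right \|_{F}^{2}+\eta^{2}\left \| B+\Delta B \right \|_{0}$, while each originally zero entry contributes at least $\left | Z_{ij} \right |^{2}$ (with equality only when $\Delta B_{ij}=0$, and strict inequality otherwise because $\left | Z_{ij} \right |<\eta$); summing shows $B$ is optimal against every such $\Delta B$. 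The same $\eta/2$ margin also absorbs a simultaneous small perturbation $dW$ of $W$, which changes $Z$ entrywise by at most $\left \| dW \right \|_{F}\left \| X \right \|_{2}$ and can be made smaller than $\eta/2$ by choosing the radius $\epsilon'$ small; combining this with the partial global optimality of $W$ and of $x$ established above yields \eqref{cnbcs4b} and \eqref{cnbcs5b} with the $\Delta B$-region $\left \{ \left \| \Delta B \right \|_{\infty}<\eta/2 \right \}$ in place of R1 $\cup$ R2, following the same combination pattern as in the proof of Theorem \ref{theorem1bc}. I expect the main obstacle to be precisely the tie-breaking of $\hat{H}_{\eta}^{1}$ at $\left | Z_{ij} \right |=\eta$, where $B_{ij}=Z_{ij}$ is only one of two equally optimal choices: one must confirm that neither $\Delta B$ nor the $dW$-induced perturbation of $Z$ can exploit this degeneracy to strictly decrease $v$, which is exactly what the $\eta/2$ slack, together with the bound on $\left \| dW \right \|_{F}$, is designed to rule out. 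Everything else in Theorem \ref{theorem1bc} --- monotonicity, boundedness, convergence to $v^{*}$, $\left \| x^{t}-x^{t-1} \right \|_{2}\to0$, criticality, and the partial global optimality conditions --- transfers without modification.
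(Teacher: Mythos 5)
Your proposal is correct and, in its overall structure, is exactly what the paper does: the paper's own proof of this theorem simply replaces $\psi(B)$ and $H_{s}(\cdot)$ by the penalty $\eta^{2}\left \| B \right \|_{0}$ and $\hat{H}_{\eta}^{1}(\cdot)$, replaces $\tilde{H_{s}}(\cdot)$ by the set-valued map $\hat{H}_{\eta}(\cdot)$ of \eqref{equ889cc}, and imports the $\left \| \Delta B \right \|_{\infty}<\eta/2$ perturbation analysis from Appendix F of \cite{sbclsTS2}; your entrywise computation is precisely that analysis, and it survives the boundary case you flag, since at an accumulation point one can only assert $B^{*}\in\hat{H}_{\eta}(W^{*}X^{*})$ (zero entries may have $\left | (W^{*}X^{*})_{ij} \right |=\eta$), which your support-stabilization subsequence device delivers and for which your inequalities become non-strict but remain sufficient. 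Two genuine differences are worth noting. (i) You prove $\left \| x^{t}-x^{t-1} \right \|_{2}\to 0$ via strong convexity of the image subproblem ($G^{t}+\nu A^{H}A\succeq c_{0}^{2}I$ uniformly, because the level-set bound $\lambda Q(W^{t})\leq v^{0}$ keeps the singular values of $W^{t}$ away from zero) plus telescoping of the monotone objective; the paper instead argues through subsequences and uniqueness of the image-update minimizer (Lemmas \ref{lemma6bcwe} and \ref{lemma6bcsjdsd}). Your route is more direct and even quantitative ($\sum_{t}\left \| x^{t}-x^{t-1} \right \|_{2}^{2}<\infty$), and the fact $x^{q_{t}-1}\to x^{*}$ --- which Lemma \ref{lemma6bcwe} supplies and which is still needed to conclude $B^{*}\in\hat{H}_{\eta}(W^{*}X^{*})$ in the analogue of Lemma \ref{lemma5bc} --- follows from your result together with $x^{q_{t}}\to x^{*}$ by the triangle inequality, so nothing is lost. (ii) For the joint $(W,B)$ perturbation, the paper does not ``absorb'' $dW$ into the $\eta/2$ margin: as in Lemma \ref{lemma10bc}, it drops the nonnegative term $\left \| (dW)X^{*}-\Delta B \right \|_{F}^{2}$ and controls all $dW$-dependent terms through the first-order condition for $W^{*}$ and the local expansion of $-\log\left | \det(\cdot) \right |$, leaving only $-2\left \langle W^{*}X^{*}-B^{*},\Delta B \right \rangle$ plus the change in $\eta^{2}\left \| \cdot \right \|_{0}$, which is exactly where the $\eta/2$ bound enters ($-2\eta\left | \Delta B_{ij} \right |+\eta^{2}\geq 0$); since you explicitly defer to the same combination pattern, this is a difference of framing rather than a gap. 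Minor slips to fix: the codes satisfy $B^{t}=\hat{H}_{\eta}^{1}(W^{t}X^{t-1})$ (not $X^{t}$), and boundedness of $\left \{ B^{t} \right \}$ follows from the upper bounds on $\left \| W^{t} \right \|$ and $\left \| X^{t-1} \right \|$, not from the lower bound $\sigma_{\min}(W^{t})\geq c_{0}$.
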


\begin{theorem}\label{theorem3bc}
Theorem \ref{theorem1bc}, except for the condition \eqref{cnbcs4b}, applies to Algorithms A2 and the corresponding objective $u(W, B, x)$ as well.
\end{theorem}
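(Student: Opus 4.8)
The plan is to re-run the argument behind Theorem~\ref{theorem1bc}, since Algorithm~A2 differs from Algorithm~A1 only in the transform update step (now an exact projection onto the unitary manifold via Proposition~\ref{propel2bcs}) and in the replacement of the smooth regularizer $\lambda\,Q(W)$ by the barrier $\varphi(W)$; the sparse coding step $H_s(\cdot)$ and the image update step (solving \eqref{bcs10} with the optimal multiplier) are literally unchanged, and the argument is independent of the number $\hat{M}$ of inner alternations. I would first establish, essentially verbatim, the conclusions that do not involve perturbations of $W$: (i) monotone decrease of $u^t \triangleq u(W^t,B^t,x^t)$, using that each of the three updates returns a \emph{global} minimizer of $u$ in its block (for the transform block this is Proposition~\ref{propel2bcs}) together with lower-boundedness of $u$ (each summand is $\geq 0$ on the feasible set), hence $u^t \to u^*$; (ii) boundedness of the iterates, which is now immediate, since $\|W^t\|_F=\sqrt{n}$ because $W^t$ is unitary, $\|x^t\|_2\leq C$ by the constraint, and $\|B^t\|_F \leq \|W^t X^t\|_F + \sqrt{u^0} = \|X^t\|_F + \sqrt{u^0}$ is bounded because $\|X^t\|_F$ is controlled by $\|x^t\|_2\leq C$; and (iii) $\|x^t-x^{t-1}\|_2 \to 0$, via the same telescoping argument, as the image subproblem is the minimization over a convex set of a strongly convex quadratic (strong convexity since $G+\nu A^H A \succ 0$, as shown in Section~\ref{imupstepg}), so $u$ drops by at least a fixed multiple of $\|x^t-x^{t-1}\|_2^2$ at each image update while $\sum_t (u^{t-1}-u^t)=u^0-u^*<\infty$.

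Next I would treat the accumulation-point and partial-optimality claims. Fixing an accumulation point $(W,B,x)$ along a subsequence, lower semicontinuity and properness of $u$ give $u(W,B,x)\leq u^*$, and the reverse inequality together with the partial global optimality conditions \eqref{cnbcs4}, \eqref{cnbcs5}, \eqref{cnbcs6} follow by passing to the limit in the optimality of each update, exactly as in the proof of Theorem~\ref{theorem1bc}, with the single change that \eqref{cnbcs5} is now obtained from Proposition~\ref{propel2bcs} via the closedness of the unitary solution map and the continuity of the value $(X,B)\mapsto\min\{\|\tilde W X-B\|_F^2 : \tilde W^H\tilde W=I\}$ on the compact unitary group, rather than from Proposition~\ref{propel1bcs}; the delicate points --- non-uniqueness of the sparse code minimizer, for which the lexicographic tie-breaking built into $H_s(\cdot)$ must be invoked, and non-uniqueness of the transform minimizer when $XB^H$ is singular --- are handled precisely as in Theorem~\ref{theorem1bc}. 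Criticality $0\in\partial u(W,B,x)$ then follows from \eqref{cnbcs4}--\eqref{cnbcs6} and the separable/summation structure of $u$. Finally, the partial local optimality \eqref{cnbcs5b} over $(B,x)$, and the description of the admissible perturbations $\Delta B$ through the half-space R1, the $\ell_\infty$-ball R2, and the special case $\|WX\|_0\leq s$, carry over unchanged, because that portion of the argument uses only the structure of the sparse coding and image update steps.

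The one conclusion I would \emph{not} attempt to recover is \eqref{cnbcs4b}, the joint partial local optimality in $(W,B)$, and I would state explicitly why. In the proof of Theorem~\ref{theorem1bc} that inequality rests on the transform update of Algorithm~A1 being an \emph{unconstrained} minimization of a smooth function of $W$ whose behavior near the minimizer is controllable through $Q$, which is exactly what permits a small perturbation $dW$ to be combined with a perturbation $\Delta B$ of $B$. Under the unitary constraint of (P2) the barrier $\varphi$ forces $u\equiv+\infty$ off the unitary manifold, so the only informative perturbations $dW$ are those along the manifold, for which a second-order analysis must account for the manifold's curvature; consequently the clean statement \eqref{cnbcs4b} is no longer available and is dropped. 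I expect the main obstacle in writing the full proof to be not any single new estimate but the careful audit of the Theorem~\ref{theorem1bc} argument to verify that every step other than the one producing \eqref{cnbcs4b} relies only on block-wise global optimality of the updates, on lower semicontinuity and properness of the objective, and on strong convexity of the image subproblem --- all of which survive the passage from $\lambda\,Q(W)$ to $\varphi(W)$.
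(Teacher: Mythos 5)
Your proposal is correct, and it follows essentially the route the paper itself takes: the paper's own ``proof'' of Theorem \ref{theorem3bc} is a short remark deferring to the proof of Theorem \ref{theorem1bc}, with the only flagged change being that the non-negative penalty $\lambda\,Q(W)$ is replaced by the barrier $\varphi(W)$, which enforces unitarity and keeps $W^t$ bounded automatically --- exactly the audit you describe, including using Proposition \ref{propel2bcs} in place of Proposition \ref{propel1bcs} for the transform block and keeping the sparse coding and image update analyses (regions R1, R2, the $\|WX\|_0\leq s$ case) untouched. The one place you genuinely deviate is the proof that $\|x^t-x^{t-1}\|_2\to 0$: the paper gets this through Lemmas \ref{lemma6bcwe} and \ref{lemma6bcsjdsd}, i.e., a double-subsequence argument resting on the uniqueness of the image-update minimizer, whereas you use strong convexity of the image subproblem (for A2, $G=nI$, so the modulus is uniform in $t$) to get a sufficient-decrease inequality $u^{t-1}-u^t\geq c\|x^t-x^{t-1}\|_2^2$ and then summability. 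This is cleaner, and it also delivers as a byproduct the fact that $x^{q_t-1}\to x^{*}$ along the convergent subsequence, which is what the paper's Lemma \ref{lemma5bc} actually needs (since $B^{q_t}=H_s(W^{q_t}X^{q_t-1})$ involves the previous image iterate); just make sure you state that step explicitly when passing to the limit in the sparse-code optimality. Your explanation for dropping \eqref{cnbcs4b} --- that the argument of Lemma \ref{lemma10bc} hinges on the unconstrained first-order condition and local expansion of $Q(W)$, which have no analogue under the unitary constraint --- is sound and in fact more explicit than anything the paper offers.
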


Note that owing to Theorems \ref{theorem2bc} and \ref{theorem3bc}, results similar to Corollaries \ref{corollary1a} and \ref{corollary1b} also apply for Algorithms A3 and A2, respectively.
The proofs of the stated convergence theorems are provided in Appendix \ref{apbcs1}.

In general, the subset of the set of critical points to which each algorithm converges may be larger than the set of global minimizers (Section \ref{identyuniq}) in the respective problems. The question of the conditions under which the proposed algorithms converge to the (perhaps smaller) set of global minimizers in the proposed problems is open, and its investigation is left for future work.



\section{Numerical Experiments}  \label{sec5}

\subsection{Framework} \label{sec5framwork}


Here, we study the usefulness of the proposed sparsifying transform-based blind compressed sensing framework for the CS MRI application \footnote{We have also proposed another sparsifying transform-based BCS MRI method recently \cite{syber}. However, the latter approach involves many more parameters (e.g., error thresholds to determine patch-wise sparsity levels), which may be hard to tune in practice. In contrast, the methods proposed in this work involve only a few  parameters that are relatively easy to set.}.
The MR data used in these experiments are $512 \times 512$ complex-valued images shown (only the magnitude is displayed) in Fig. \ref{im1bcs}(a) and Fig. \ref{im2bcs}(a).
The image in  Fig. \ref{im1bcs}(a) was kindly provided by Prof. Michael Lustig, UC Berkeley, and is one image slice (with rich features) from a multislice data acquisition. The image in  Fig. \ref{im2bcs}(a) is publicly available \footnote{It can be downloaded from \url{http://web.stanford.edu/class/ee369c/data/brain.mat}. A phase-shifted version of the image was used in the experiments in \cite{samptabcs}.}.
We simulate various undersampling patterns in k-space
\footnote{We simulate the k-space of an image $x$ using the command fftshift(fft2(ifftshift(x))) in Matlab.
Fig. \ref{im2bcs}(b) shows the k-space (only magnitude is displayed) of the reference in Fig. \ref{im2bcs}(a).}
including variable density 2D random sampling \footnote{This sampling scheme is feasible when data corresponding to multiple image slices are jointly acquired, and the frequency encode direction is chosen perpendicular to the image plane.} \cite{josh, bresai}, and Cartesian sampling with (variable density) random phase encodes (1D random). We employ Problem (P1) and the corresponding Algorithm A1 to reconstruct images from undersampled measurements in the experiments here
\footnote{ Problem (P3) has been recently shown to be useful for adaptive tomographic reconstruction \cite{luke1, luke2}. The corresponding Algorithm A3 has the advantage that the sparse coding step involves the cheap hard thresholding operation, rather than the more expensive projection onto the $s$-$\ell_0$ ball (used in Algorithms A1 and A2). We have observed that Algorithm A3 also works well for MRI.
Problems (P1) and (P2) differ in that (P2) enforces unit conditioning of the learnt transform, whereas (P1) also allows for more general condition numbers. Algorithm A2 (for (P2)) involves slightly cheaper computations than Algorithm A1 (for (P1)).
In our experiments for MRI, we observed that well-conditioned transforms learnt via (P1) performed (in terms of image reconstruction quality) slightly better than strictly unitary learnt transforms. Therefore, we show results for (P1) in this work. We did not observe any dramatic difference in performance between the proposed methods in our experiments here.
A detailed investigation of scenarios and applications where one of (P1), (P2), or (P3), performs the best (in terms of reconstruction quality compared to the others) is left for future work on specific applications. In this work, we have emphasized more the properties of these formulations, and the novel convergence guarantees of the corresponding algorithms.}.
Our reconstruction method is referred to as Transform Learning MRI (TLMRI).




First, we illustrate the empirical convergence behavior of TLMRI.
We also compare the reconstructions provided by the TLMRI method to those provided by the following schemes: 1) the Sparse MRI method of Lustig et al \cite{lustig}, that utlilizes Wavelets and Total Variation as \emph{fixed} sparsifying transforms; 2) the DLMRI method \cite{bresai} that learns adaptive overcomplete sparsifying dictionaries; 3) the PANO method \cite{Qu2014843} that exploits the non-local similarities between image patches (similar to \cite{dbov}), and employs a 3D transform to sparsify groups of similar patches; and 4) the PBDWS method \cite{Qu12}. The PBDWS method is a very recent \emph{partially} adaptive sparsifying transform based compressed sensing reconstruction method that uses redundant Wavelets and trained patch-based geometric directions. It has been shown to perform better than the earlier PBDW  method \cite{Qu11}.

We simulated the performance of the Sparse MRI, PBDWS, and PANO methods using the software implementations available from the respective authors' websites \cite{lus33, Quweb, PANOweb}.
We used the built-in parameter settings in those implementations, which performed well in our experiments. Specifically, for the PBDWS method, the shift invariant discrete Wavelet transform (SIDWT) based reconstructed image is used as the \emph{guide} (initial) image \cite{Qu12, Quweb}.
We employed the zero-filling reconstruction (produced within the PANO demo code \cite{PANOweb}) as the initial guide image for the PANO method \cite{Qu2014843, PANOweb}.

The implementation of the DLMRI algorithm that solves Problem (P0) is also available online \cite{dlmri1}. For DLMRI, image patches of size $6 \times 6$ ($n=36$) are used, as suggested in \cite{bresai} \footnote{The reconstruction quality improves slightly with a larger patch size, but with a substantial increase in runtime.}, and a four fold overcomplete synthesis dictionary ($K=144$) is learnt using 25 iterations of the algorithm. A patch overlap stride of $r=1$ is used, and $14400$ (found empirically \footnote{Using a larger training size ($> 14400$) during the dictionary learning step of the algorithm provides negligible improvement in final image reconstruction quality, while leading to increased runtimes.}) randomly selected patches are used during the dictionary learning step (executed for 20 iterations) of the DLMRI algorithm.
Mean-subtraction is not performed for the patches prior to the dictionary learning step of DLMRI.
A maximum sparsity level (of $s= 7$ per patch) is employed together with an error threshold (for sparse coding) during the dictionary learning step.
The $\ell_{2}$ error threshold per patch varies linearly from $0.48$ to $0.15$ over the DLMRI iterations.
These parameter settings (all other parameters are set as per the indications in the DLMRI-Lab toolbox \cite{dlmri1}) were observed to work well for the DLMRI algorithm.





The parameters for TLMRI (with Algorithms A1) are set to $n= 36$, $r=1$ (with patch wrap around), $\nu = 3.81$, $\hat{M}=1$, $\lambda_{0}=0.2$, and $C=10^{5}$.
The sparsity level $s = 0.055 \times n N$ (this corresponds to an average sparsity level per patch of $0.055 \times n$, or $5.5\%$ sparsity) \footnote{The sparsity level $s$ is a regularization parameter in our framework that provides a trade-off between how much aliasing is removed over the algorithm iterations, and how much image information is kept or restored (i.e., not eliminated by the sparsity condition).
We determined the sparsity level empirically in the experiments in this work.}
, where $N= 512^{2}$,  is used in our experiments except in Section \ref{convlearn}, where $s = 0.045 \times n N$ is used.
The initial transform estimate $W^{0}$ is the (simple) patch-based 2D DCT \cite{sabres}, and the initial image $x^{0}$ is set to be the standard zero-filling Fourier reconstruction
\footnote{While we used the naive zero-filling Fourier reconstruction in our experiments here for simplicity, one could also use other better initializations for $x$ such as the SIDWT based reconstructed image \cite{Qu12}, or the reconstructions produced by recent methods (e.g., PBDWS, etc.). We have observed empirically that better initializations may lead to faster convergence of TLMRI, and TLMRI typically only improves the image quality compared to the initializations (assuming properly chosen sparsity levels).}.
The initial sparse code settings are the solution to \eqref{bcs1}, for the given $(W^{0}, x^{0})$.
Our TLMRI implementation was coded in Matlab version R2013a. Note that this implementation has not been optimized for efficiency.
A link to the Matlab implementation is provided at \url{http://www.ifp.illinois.edu/~yoram/}.
All simulations in this work were executed in Matlab. All computations were performed with an Intel Core i5 CPU at 2.5GHz and 4GB memory, employing a 64-bit Windows 7 operating system.

Similar to prior work \cite{bresai}, we quantify the quality of MR image reconstruction using the peak-signal-to-noise ratio (PSNR), and high frequency error norm (HFEN) metrics. 
The PSNR (expressed in decibels (dB)) is computed as the ratio of the peak intensity value of some reference image to the root mean square reconstruction error (computed between image magnitudes) relative to the reference.
In MRI, the reference image is typically the image reconstructed from fully sampled k-space data. The HFEN metric quantifies the quality of reconstruction of edges or finer features. A rotationally symmetric Laplacian of Gaussian (LoG) filter is used, whose kernel is of size $15 \times 15  $ pixels, and with a standard deviation of 1.5 pixels \cite{bresai}. HFEN is computed as the $\ell_{2}$ norm of the difference between the LoG filtered reconstructed and reference magnitude images.


\subsection{Convergence and Learning Behavior} \label{convlearn}

\begin{figure}[!t]
\begin{center}
\begin{tabular}{cccc}
\includegraphics[height=1.28in]{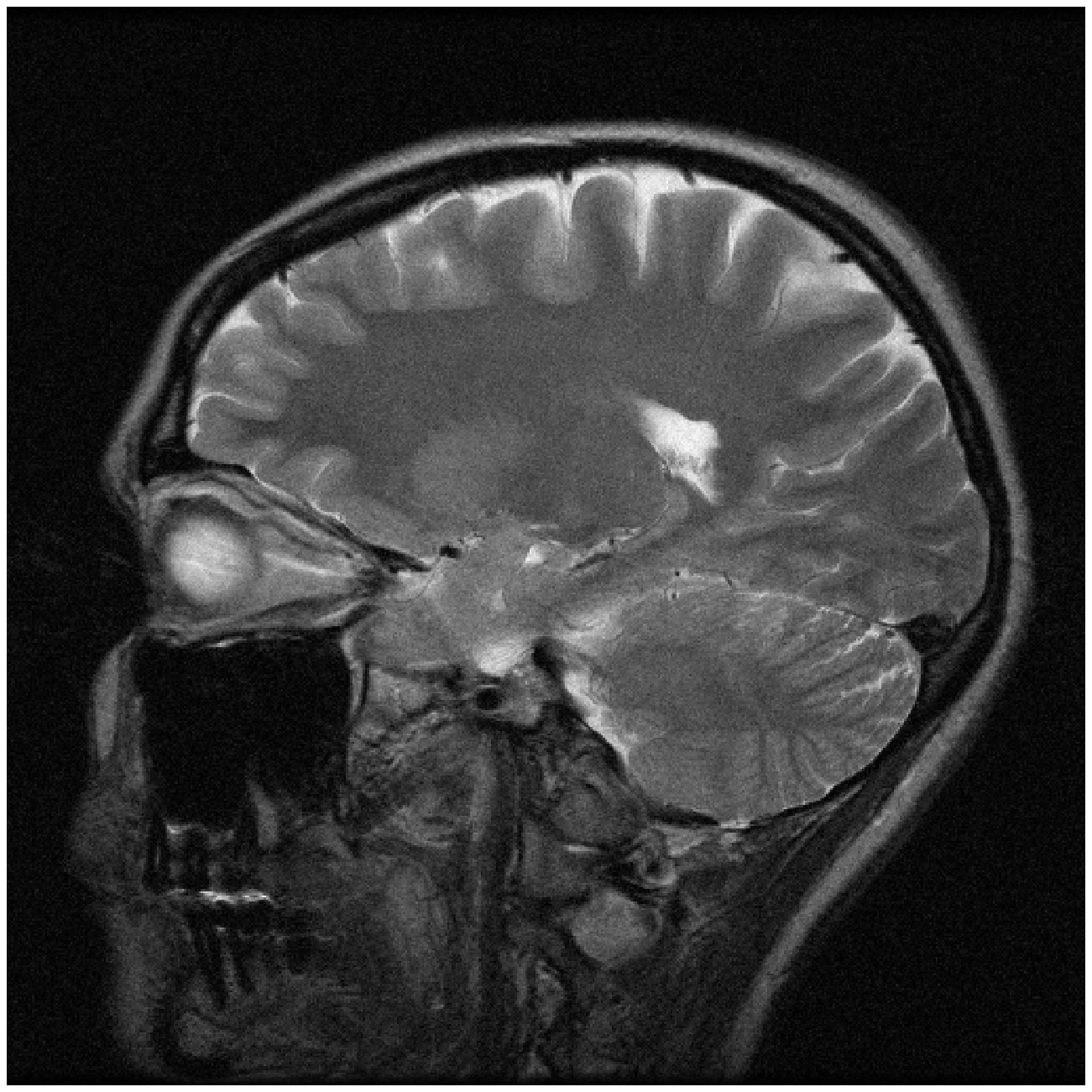}&
\includegraphics[height=1.28in]{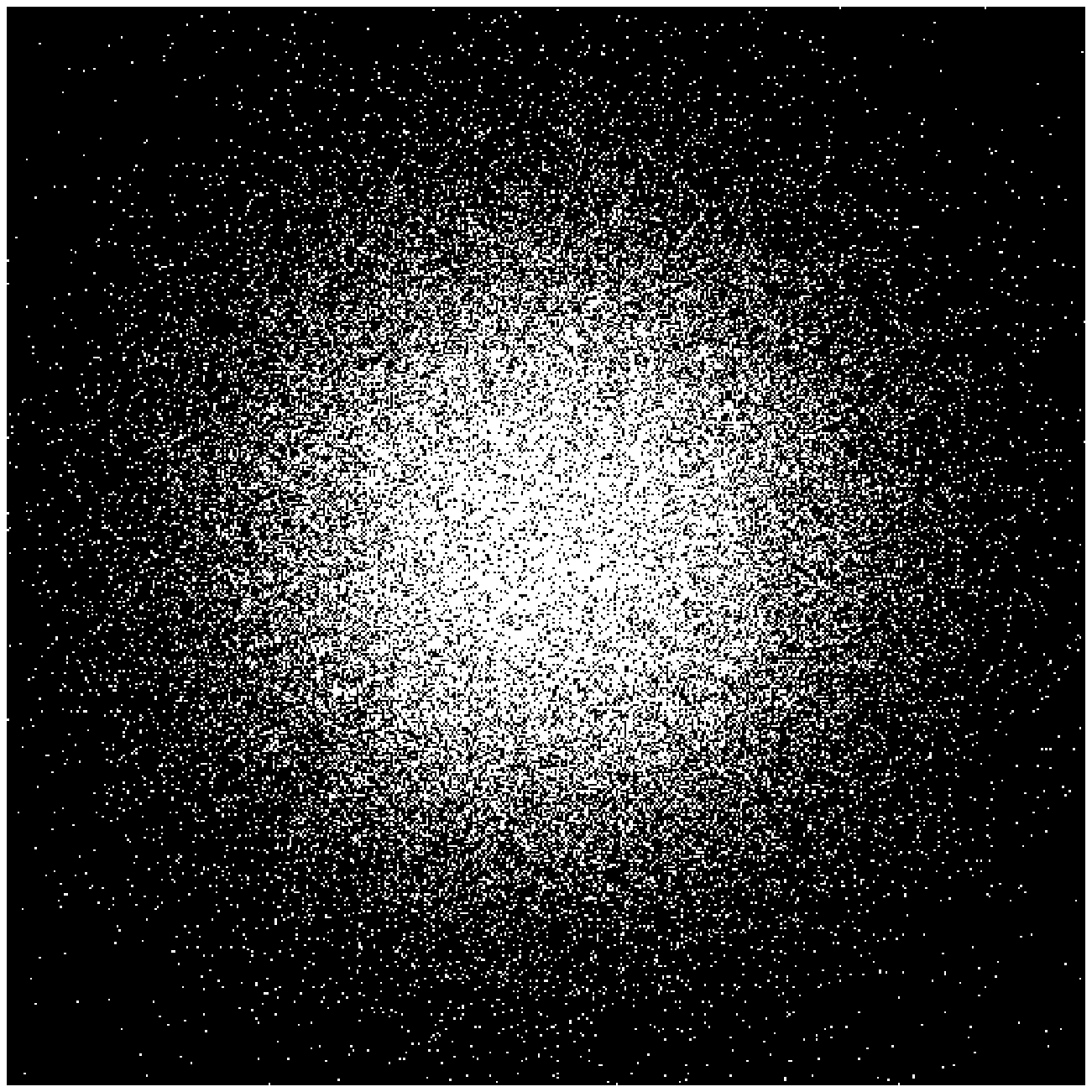}&
\includegraphics[height=1.28in]{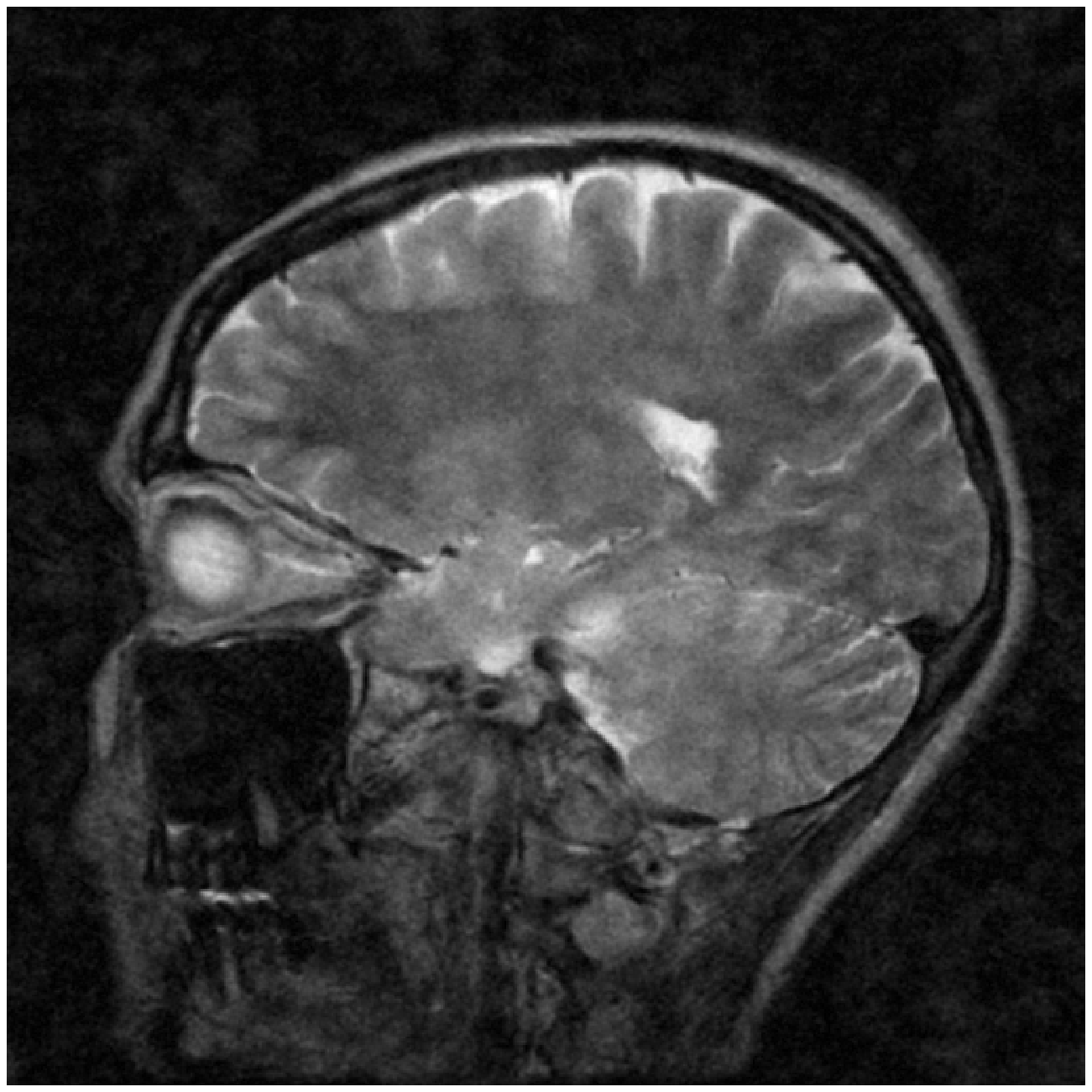}&
\includegraphics[height=1.28in]{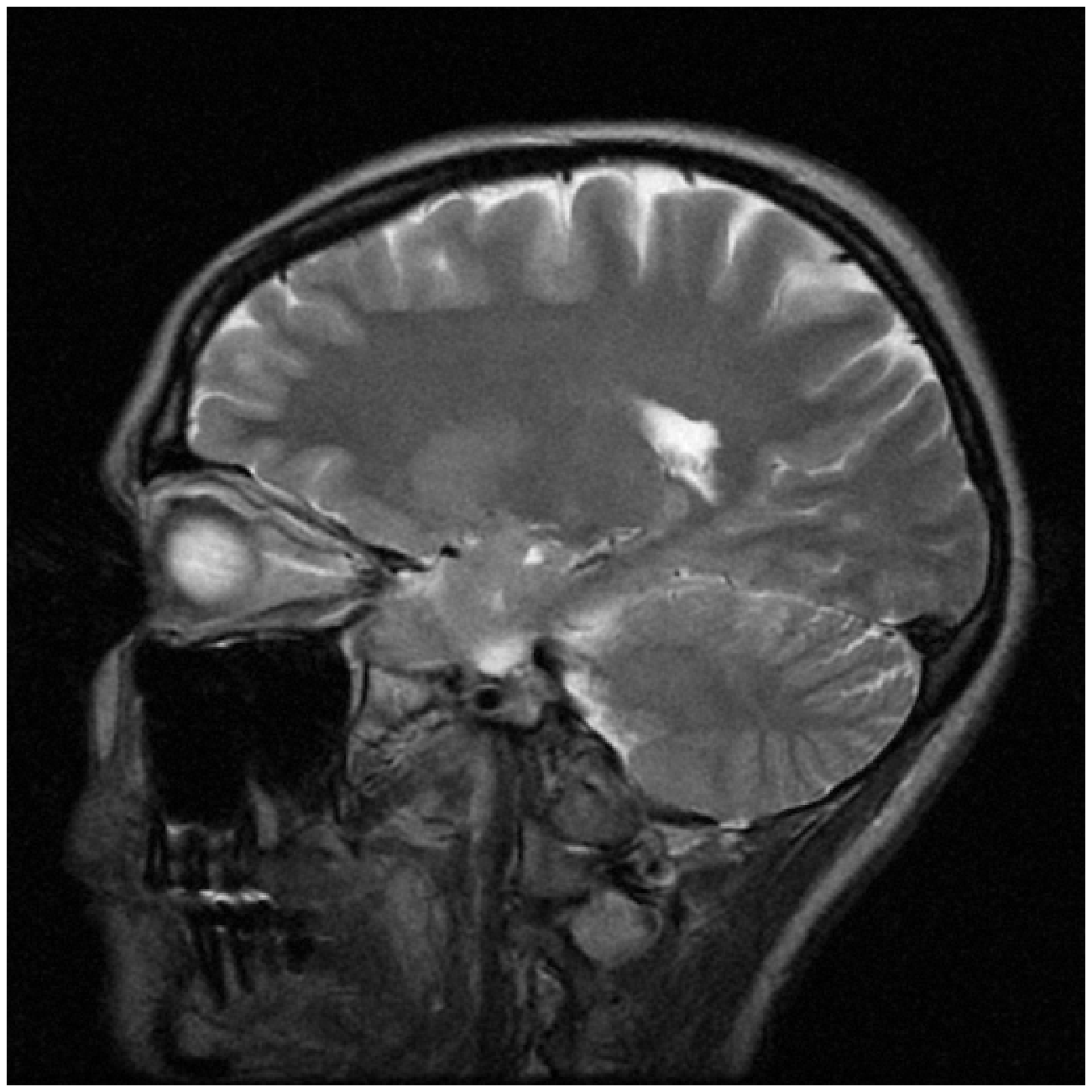}\\
(a) & (b) & (c) & (d)\\
\end{tabular}
\begin{tabular}{ccc}
\includegraphics[height=1.2in]{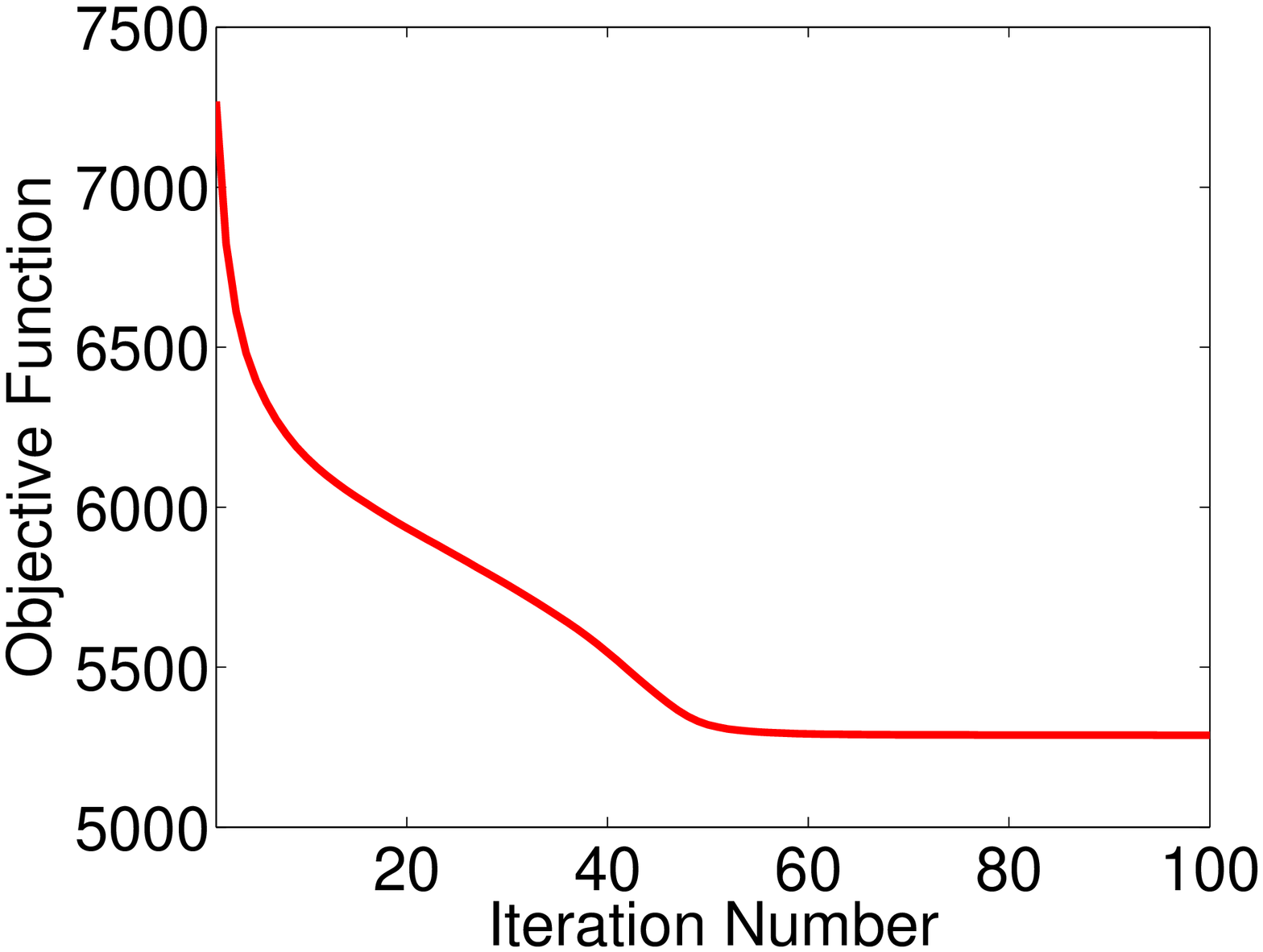}&
\includegraphics[height=1.2in]{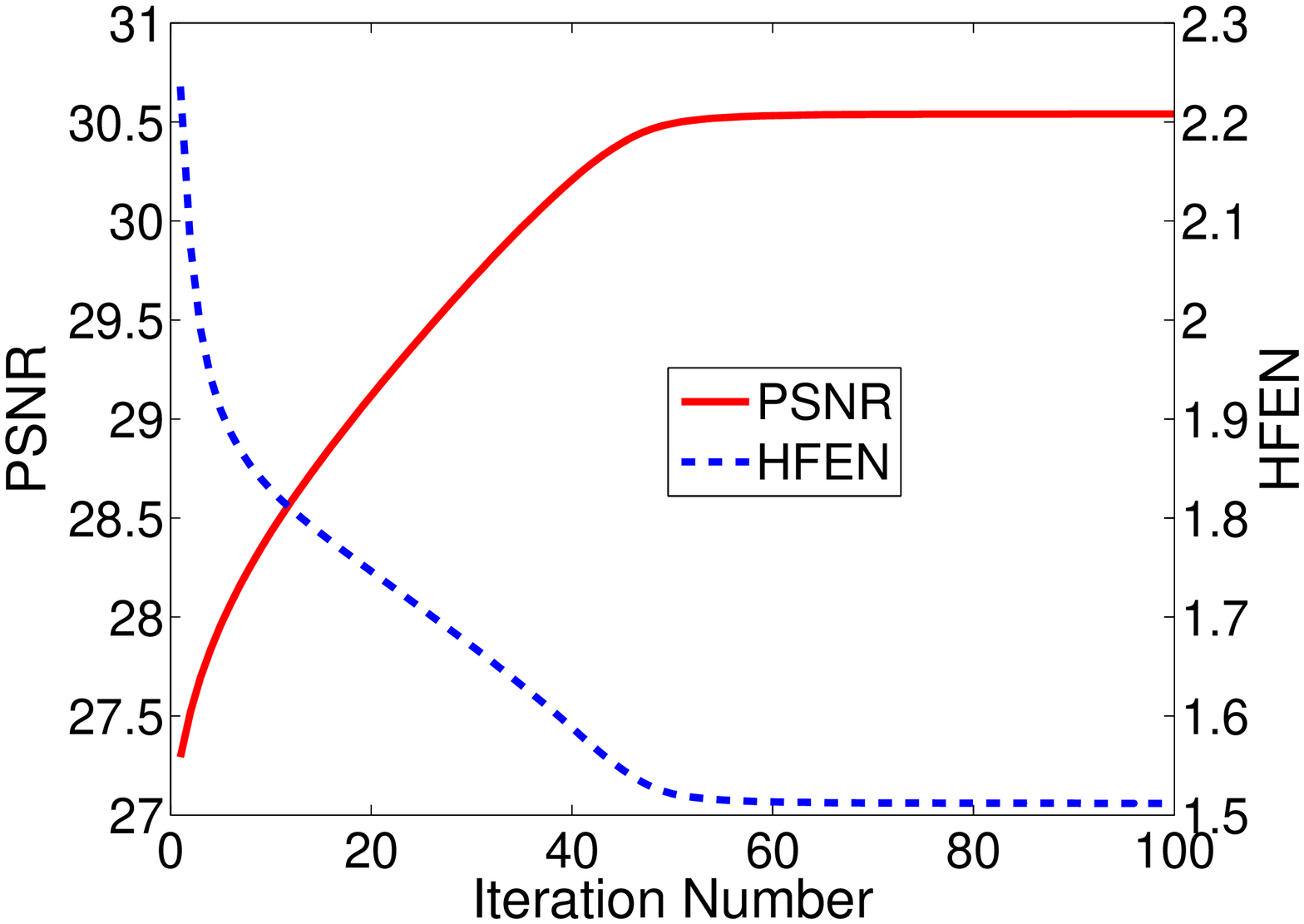}&
\includegraphics[height=1.2in]{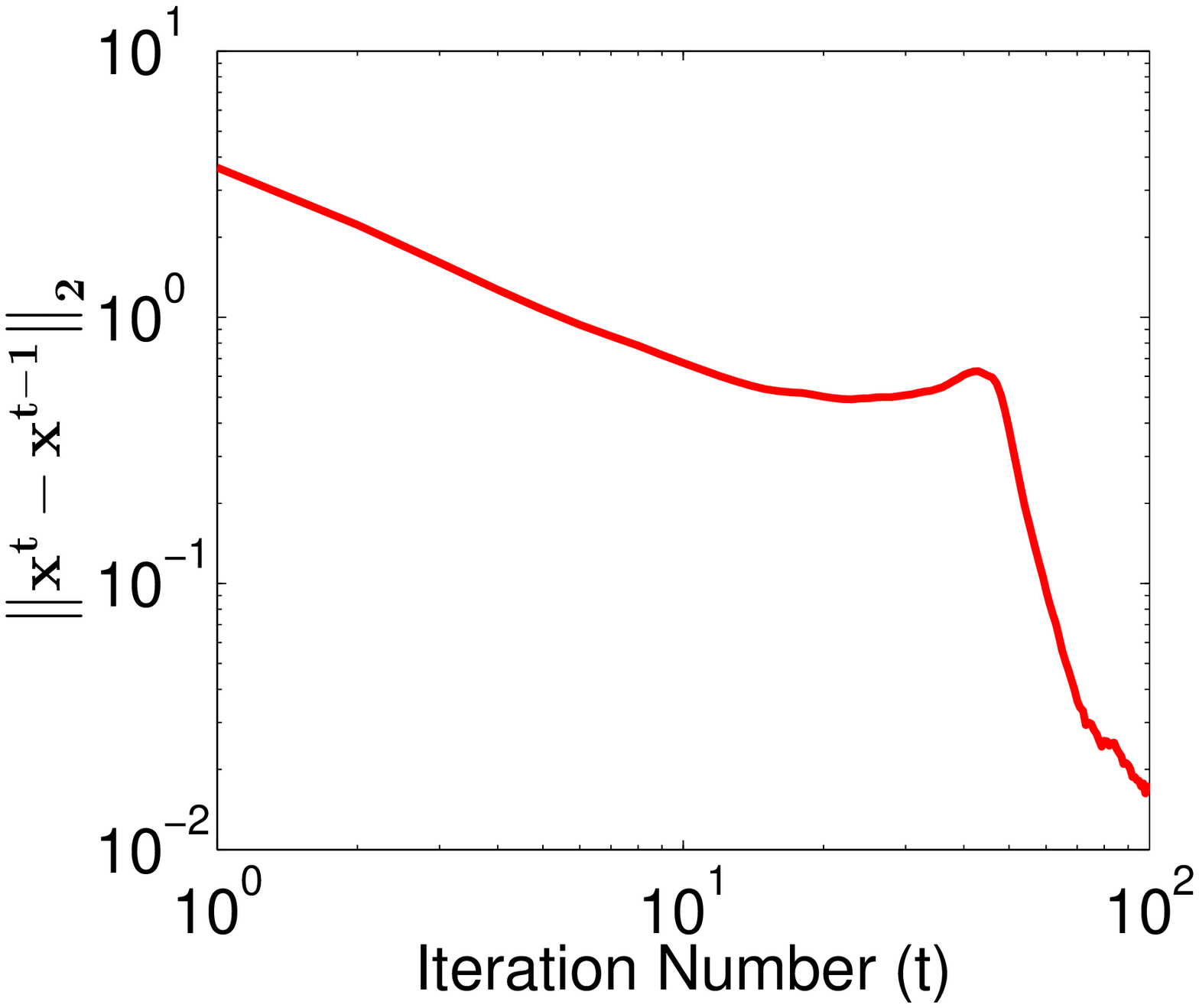}\\
(e) & (f) & (g)\\
\end{tabular}
\begin{tabular}{c}
\includegraphics[height=1.6in]{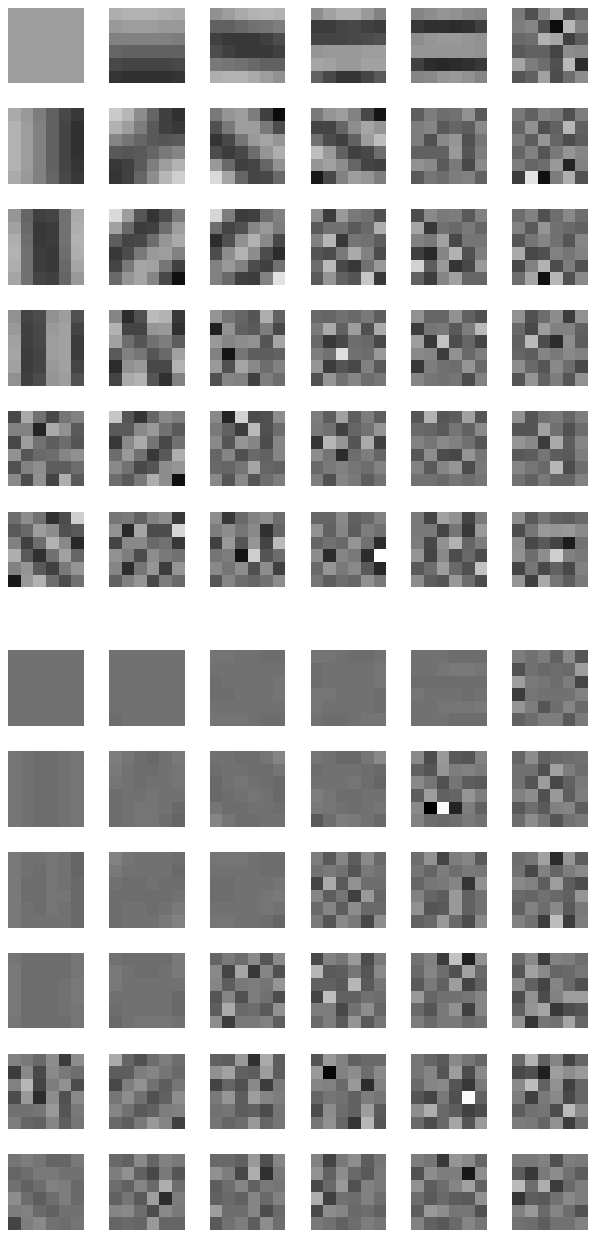}\\
(h)\\
\end{tabular}
\caption{Convergence of TLMRI with 5x undersampling: (a) Reference image; (b) sampling mask in k-space; (c) initial zero-filling reconstruction ($26.93$ dB); (d) TLMRI reconstruction ($30.54$ dB); (e) objective function (since the regularizer $Q(W) \geq n/2$ \cite{sabres}, we have subtracted out the constant offset of $n \lambda/2$ from the objective values here); (f) PSNR and HFEN; (g) changes between successive iterates ($\left \| x^{t}-x^{t-1} \right \|_{2}$), and (h) real (top) and imaginary (bottom) parts of the learnt $W$, with the matrix rows shown as patches.}
\label{im1bcs}
\end{center}
\end{figure}


In this experiment, we consider the reference image 
in Fig. \ref{im1bcs}(a).
We perform four fold undersampling of the k-space space of the (peak normalized \footnote{In practice, the data or k-space measurements can always be normalized prior to processing them for image reconstruction. Otherwise, the parameter settings for algorithms may need to be modified to account for data scaling.}) reference.
The (variable density \cite{bresai}) sampling mask is shown in Fig. \ref{im1bcs}(b). When the TLMRI algorithm is executed using the undersampled data, the objective function converges monotonically and quickly over the iterations as shown in Fig. \ref{im1bcs}(e). The changes between successive iterates $\left \| x^{t}-x^{t-1} \right \|_{2}$ (Fig. \ref{im1bcs}(g)) converge towards $0$. Such convergence was established by Theorem \ref{theorem1bc}, and is indicative (a necessary but not suffficient condition) of convergence of the entire sequence $\left \{ x^{t} \right \}$. As far as the performance metrics are concerned, the PSNR metric (Fig. \ref{im1bcs}(f)) increases over the iterations, and the HFEN metric decreases, indicating improving reconstruction quality over the algorithm iterations. These metrics also converge quickly.

The initial zero-filling reconstruction (Fig. \ref{im1bcs}(c)) shows aliasing artifacts that are typical in the undersampled measurement scenario, and has a PSNR of only $26.93$ dB. On the other hand, the final TLMRI reconstruction (Fig. \ref{im1bcs}(d)) is much enhanced (by $3.6$ dB), with a PSNR of $30.54$ dB. Since Algorithm A1 is guaranteed to converge to the set of critical points of Problem (P1), the result in Fig. \ref{im1bcs}(d) suggests that, in practice, the set of critical points may in fact include images that are close to the true image. Note that our identifiability result (Proposition \ref{optimalmodelsbcs}) in Section \ref{dfgy6} ensured global optimality of the underlying image only in a noiseless (or error-free) scenario. The learnt transform $W$ ($\kappa(W)= 1.01$) for this example is shown in Fig. \ref{im1bcs}(h). This is a complex valued transform. Both the real and imaginary parts of $W$ display texture or frequency like structures, that sparsify the patches of the MR image. Our algorithm is thus able to learn this structure and reconstruct the image using only the undersampled measurements. 




\subsection{Comparison to Other Methods}  \label{comparisons}

In the following experiments, we execute the TLMRI algorithm for 40 iterations (all other parameters are set to the values mentioned in Section \ref{sec5framwork}). We also use a lower sparsity level ($<0.055 \times n N$) during the initial several algorithm iterations, which leads to faster convergence. We consider the complex-valued reference images in Fig. \ref{im1bcs}(a) and Fig. \ref{im2bcs}(a) that are labeled as Image 1 and Image 2 respectively, and simulate variable density 2D random or Cartesian undersampling \cite{bresai} of the k-spaces of these images.
Table \ref{tab1bcs} lists the reconstruction PSNRs corresponding to the zero-filling, Sparse MRI, PBDWS, PANO, DLMRI, and TLMRI\footnote{We observed that in Table \ref{tab1bcs}, if we sampled vertical (instead of horizontal) lines for the Cartesian sampling patterns (by transposing the Cartesian sampling masks used in Table \ref{tab1bcs}), the reconstructed images for TLMRI had PSNRs (32.52 dB and 31.05 dB respectively, for the 4x and 7x undersampling cases) similar to the horizontal lines sampling case. However, the learnt sparsifying transforms for the two cases had many dissimilar rows. This is not surprising since prior work on transform learning \cite{sbclsTS2} has empirically shown that even the patches of a single image may admit multiple equally good sparsifying transforms, that may not be related by only row permutations or sign changes.} reconstructions for various cases.

\begin{table}[t]
\centering
\fontsize{7}{10pt}\selectfont
\begin{tabular}{|c|c|c|c|c|c|c|c|c|}
\hline
Image & Sampling Scheme & Undersampling & Zero-filling & Sparse MRI & PBDWS & PANO & DLMRI & TLMRI \\ 
\hline
2  & 2D Random     &     4x            &        25.3                  &    26.13   &    31.69      &  32.80     &   33.01         & \textbf{33.12} \\
 \hline
1 &  2D Random      &    5x             &        26.9                  &   27.84    &    30.27      &  30.37     &   30.49         & \textbf{30.56} \\
 \hline
2 &  2D Random       &   7x             &       25.3                   &   26.38    &    31.10       &  30.92     &   31.70         & \textbf{31.94} \\
 \hline
2 &   Cartesian     &       4x              &      28.9                    &   29.73    &    31.67      &   32.24      &  32.67         & \textbf{32.78} \\
 \hline
2 &    Cartesian    &       7x               &      27.9                    &   28.58    &    31.11      &   31.08       &  30.91         & \textbf{31.24} \\
 \hline
\end{tabular}
\caption{PSNRs corresponding to the Zero-filling, Sparse MRI \cite{lustig}, PBDWS \cite{Qu12}, PANO \cite{Qu2014843}, DLMRI \cite{bresai}, and TLMRI reconstructions, for various sampling schemes and undersampling factors. The best PSNRs are marked in bold.}
\label{tab1bcs}
\end{table}

The TLMRI algorithm is seen to provide the best PSNRs (analogous results were observed to typically hold with respect to the HFEN metric not shown in the table) for the various scenarios in Table \ref{tab1bcs}. Significant improvements (up to 7 dB) are observed over the Sparse MRI method, that uses fixed sparsifying transforms.
Moreover, TLMRI provides up to 1.4 dB improvement in PSNR over the recent (partially adaptive) PBDWS method, and up to  1 dB improvement over the recent non-local patch similarity-based PANO method.
Finally, the TLMRI reconstruction quality is somewhat (up to  0.33 dB) better than DLMRI. This is despite the latter using a 4 fold overcomplete (i.e., larger or richer) dictionary. 

Fig. \ref{im2bcs} shows the TLMRI reconstruction (Fig. \ref{im2bcs}(d)) of Image 2 for the case of 2D random sampling (sampling mask shown in Fig. \ref{im2bcs}(c)) and seven fold undersampling. The reconstruction errors (i.e., the magnitude of the difference between the magnitudes of the reconstructed and reference images) for several schemes are shown in Figs. \ref{im2bcs} (e)-(h). The error map for TLMRI clearly shows the smallest image distortions.
Fig. \ref{im3bcs} shows the TLMRI (Fig. \ref{im3bcs}(c)) and DLMRI (Fig. \ref{im3bcs}(e)) reconstructions and reconstruction error maps (Figs. \ref{im3bcs} (d), (f)) for Image 2 with Cartesian sampling (sampling mask shown in Fig. \ref{im3bcs}(b)) and seven fold undersampling. TLMRI provides a better reconstruction of image edges and better aliasing removal than DLMRI in this case.


\begin{figure}[!t]
\begin{center}
\begin{tabular}{cccc}
\includegraphics[height=1.2in]{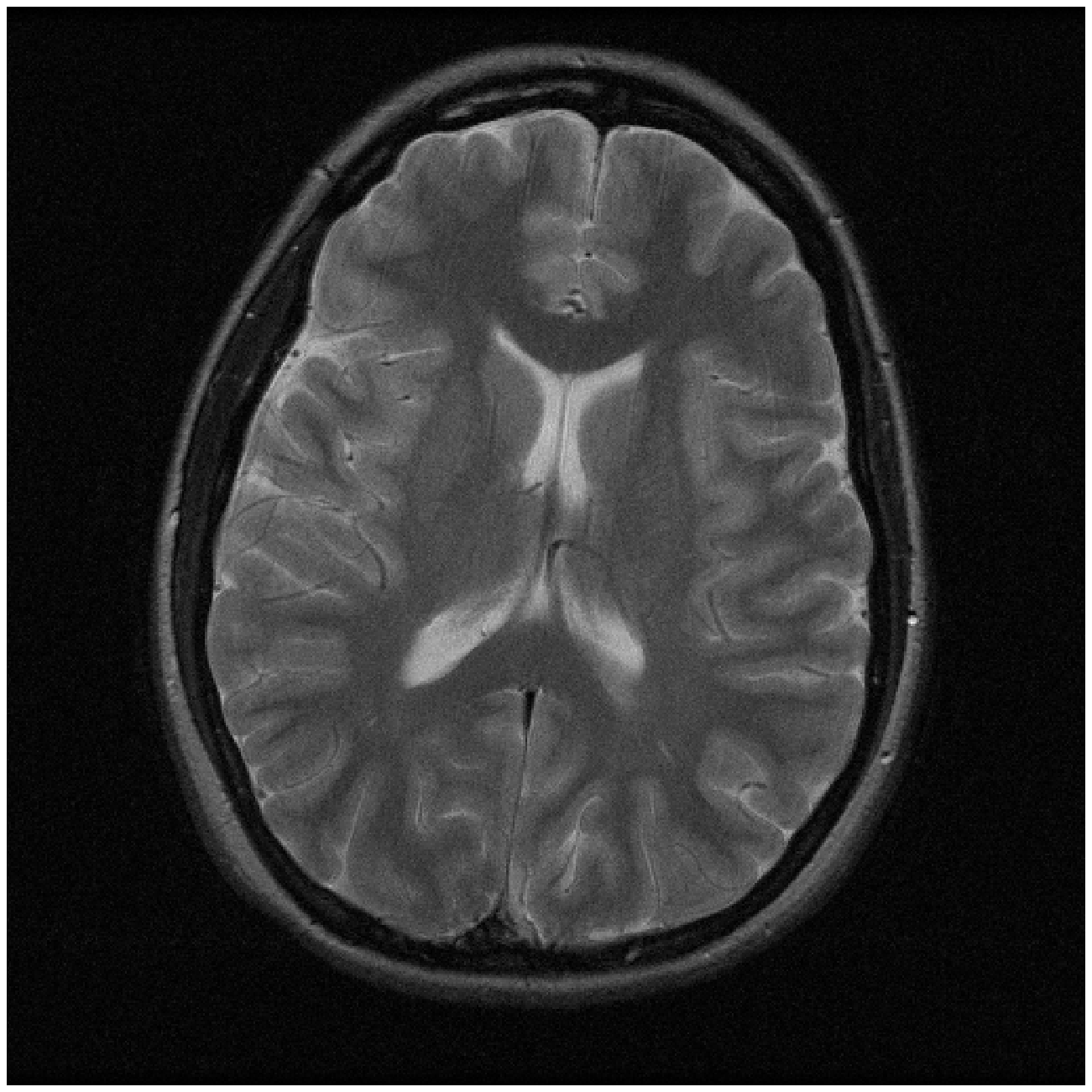}&
\includegraphics[height=1.2in]{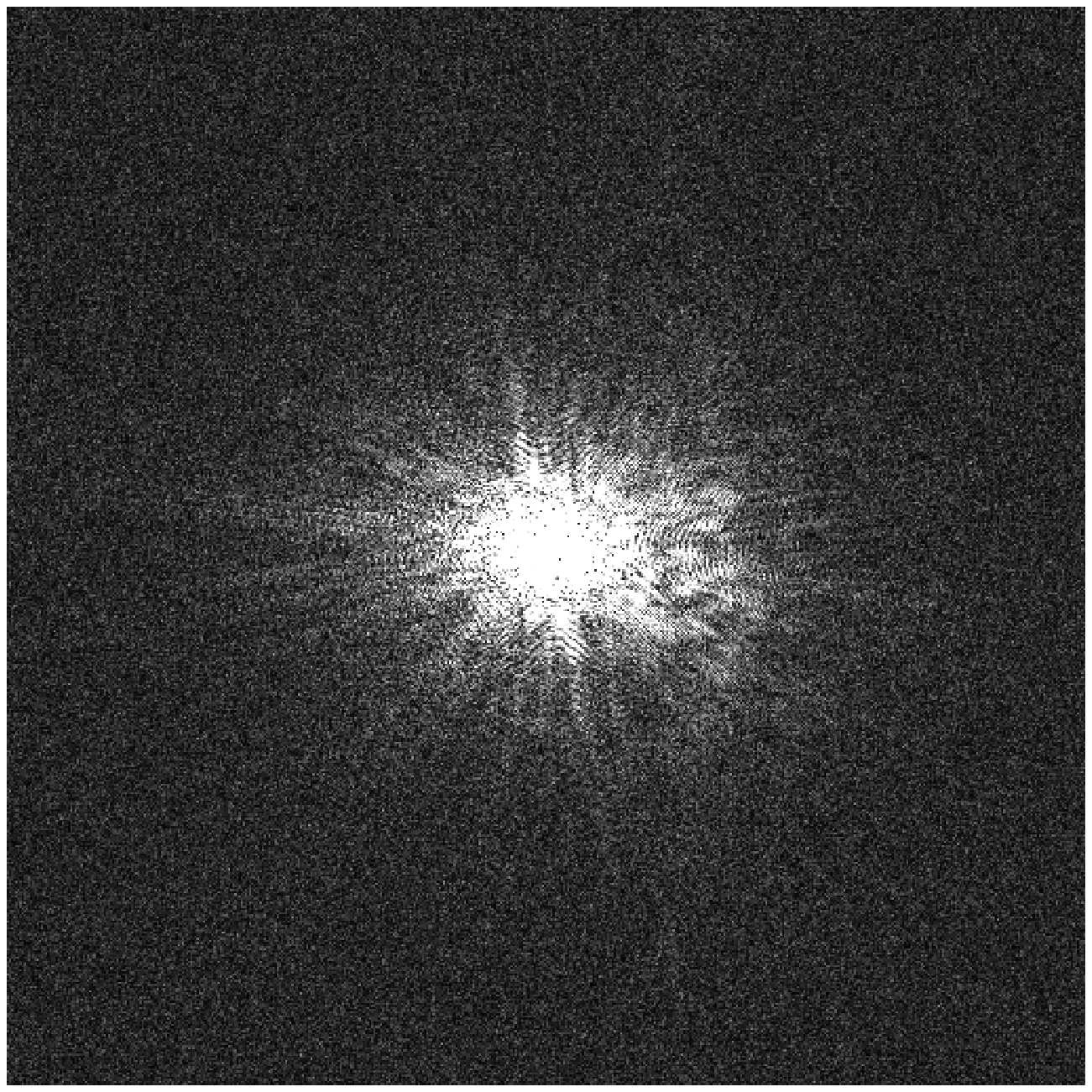}&
\includegraphics[height=1.2in]{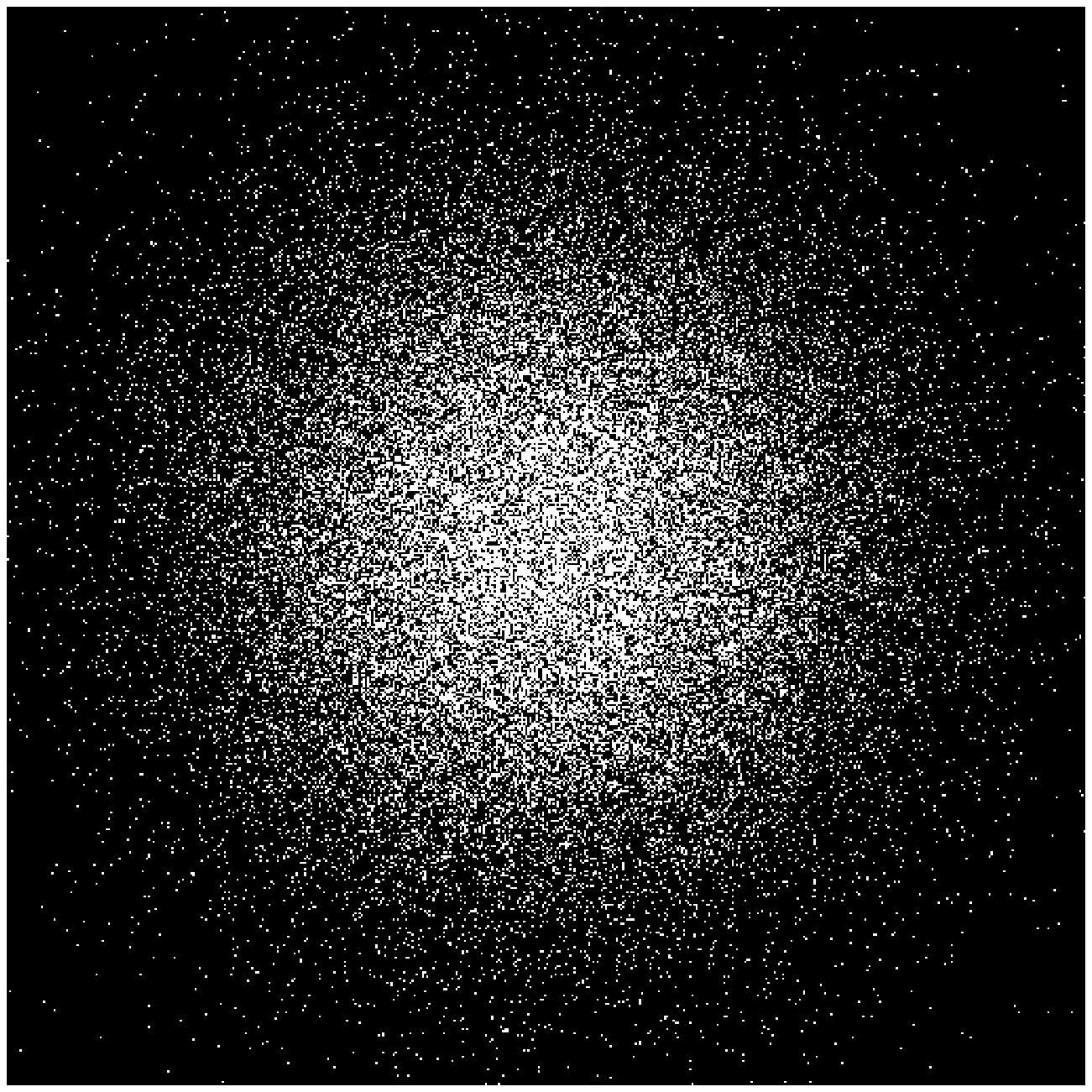}&
\includegraphics[height=1.2in]{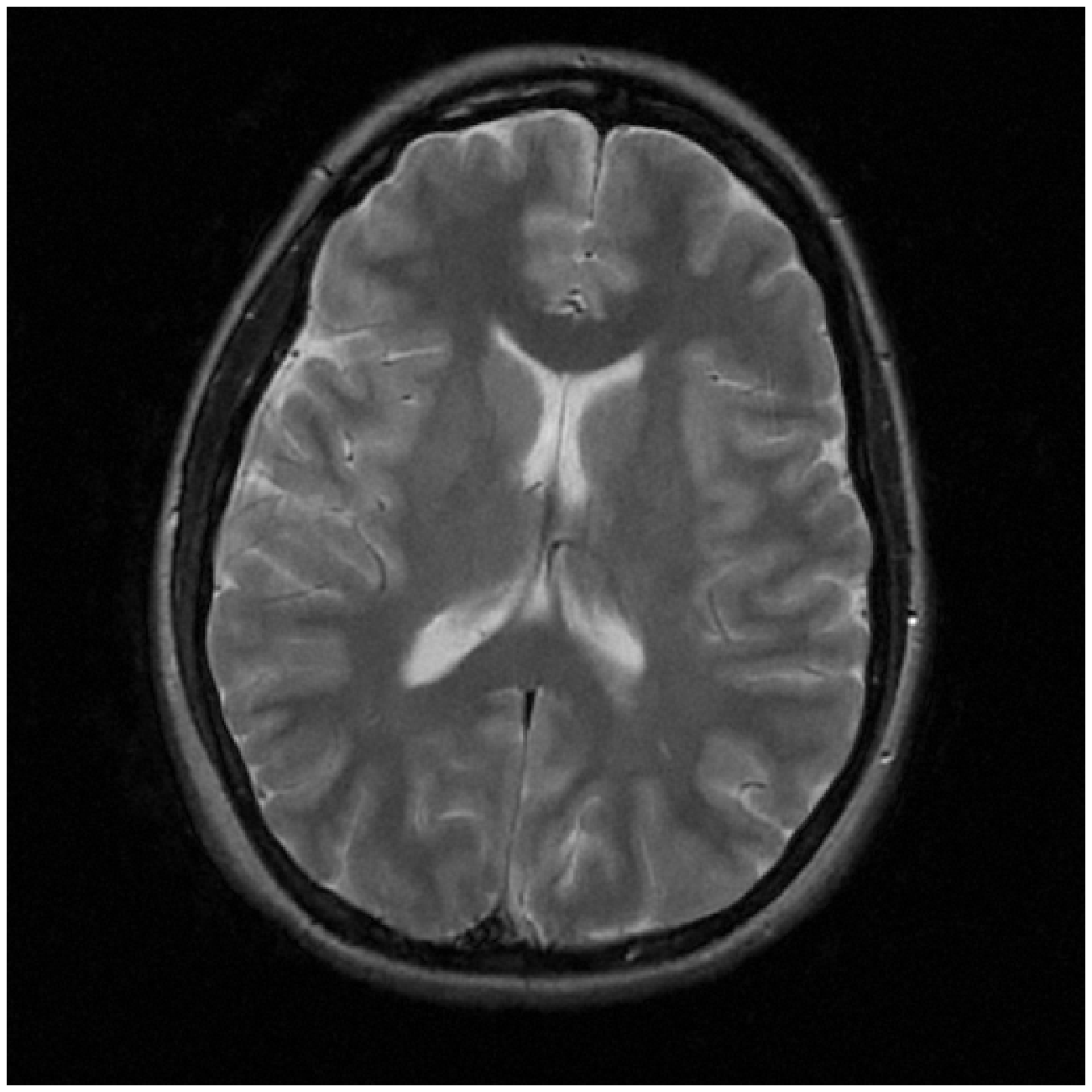} \\
(a) & (b) & (c) & (d)\\
\includegraphics[height=1.2in]{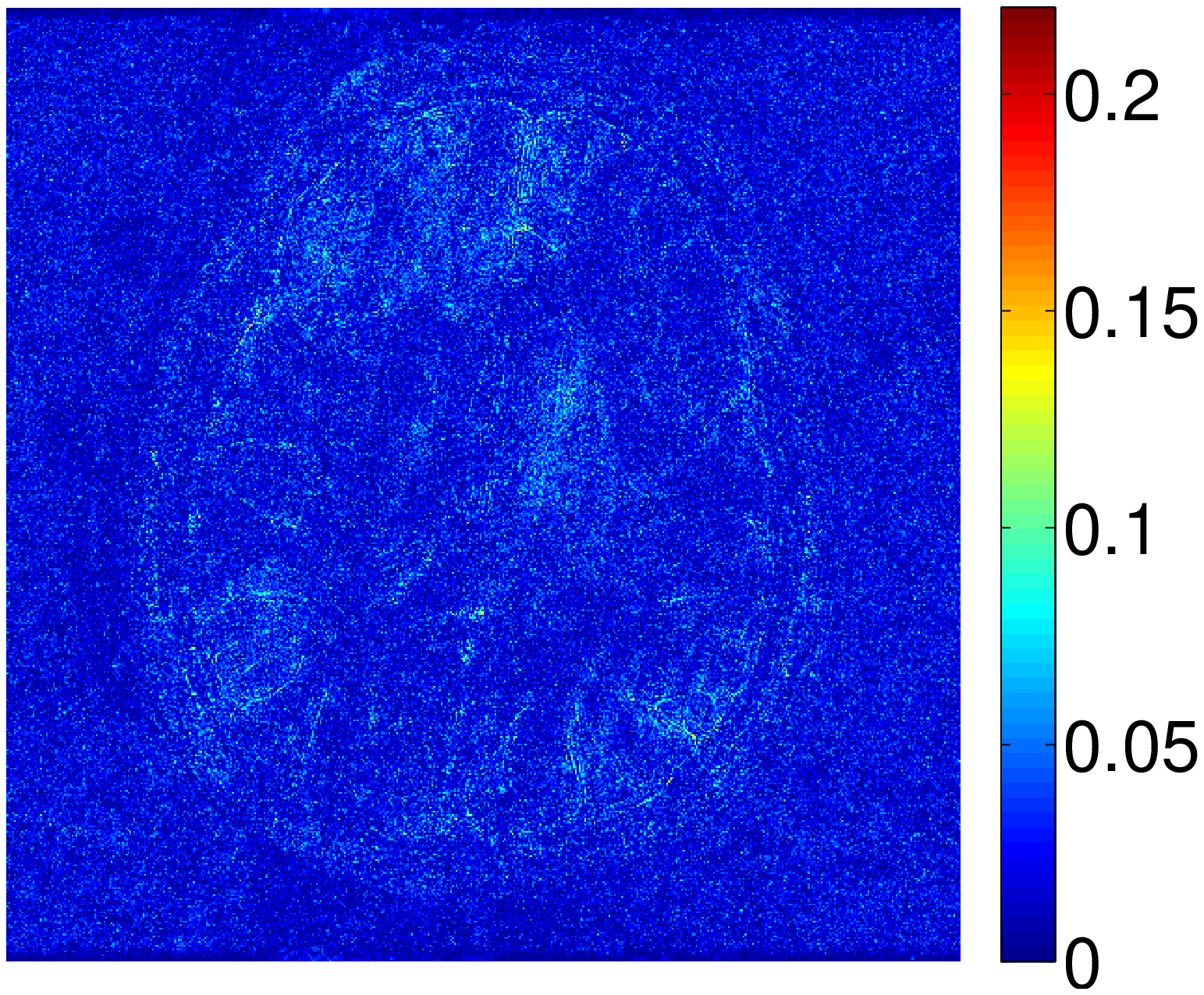}&
\includegraphics[height=1.2in]{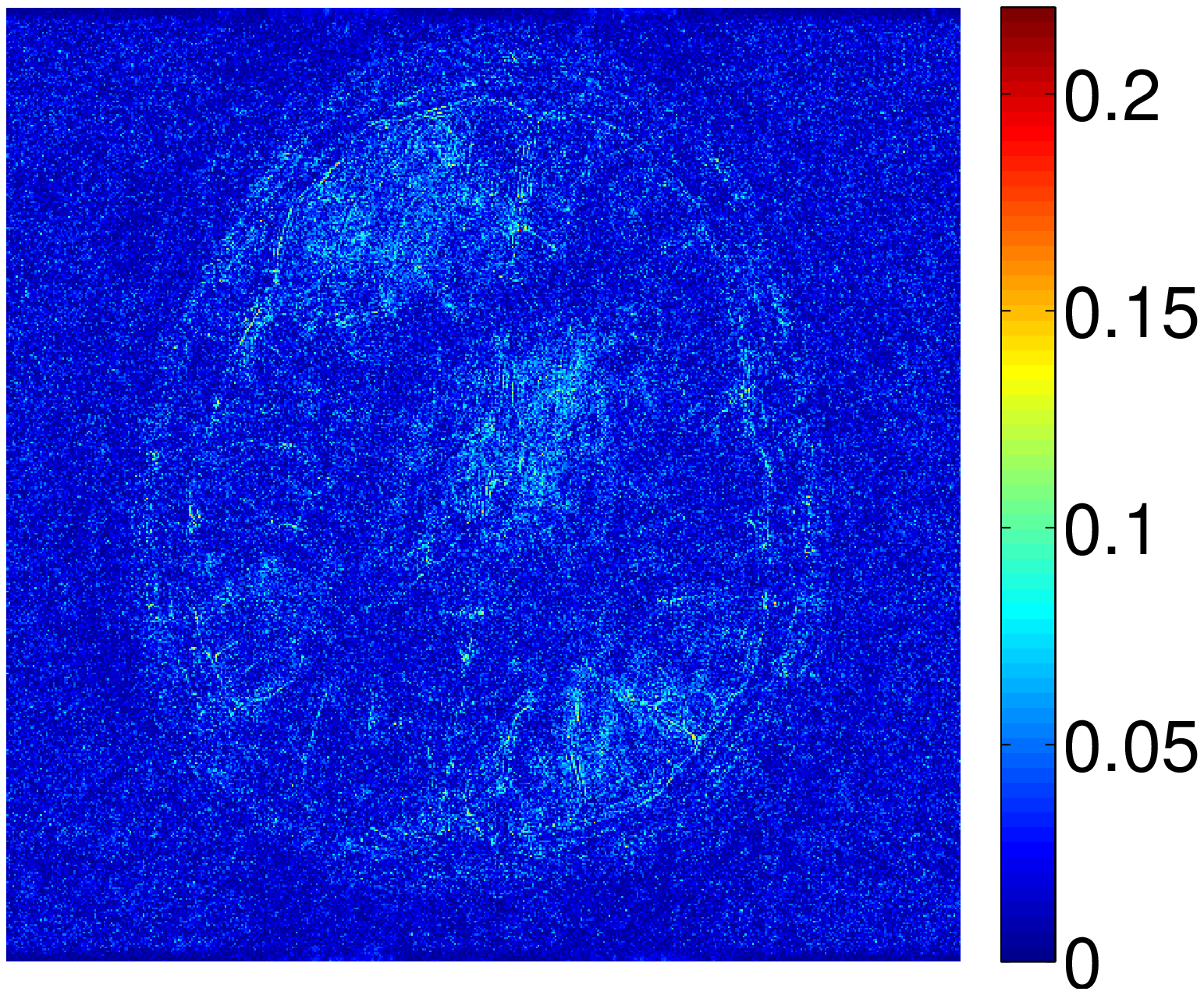}&
\includegraphics[height=1.2in]{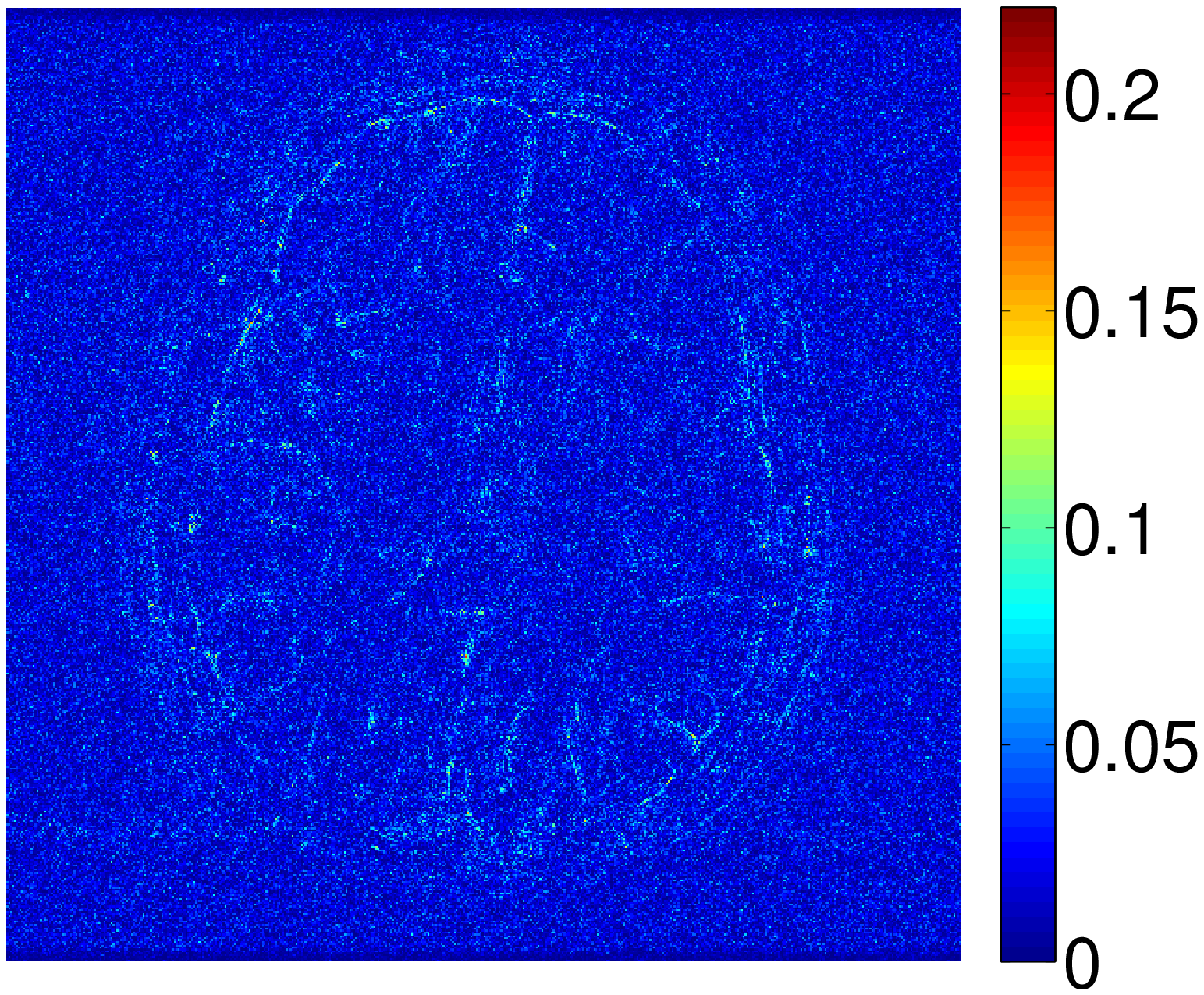}&
\includegraphics[height=1.2in]{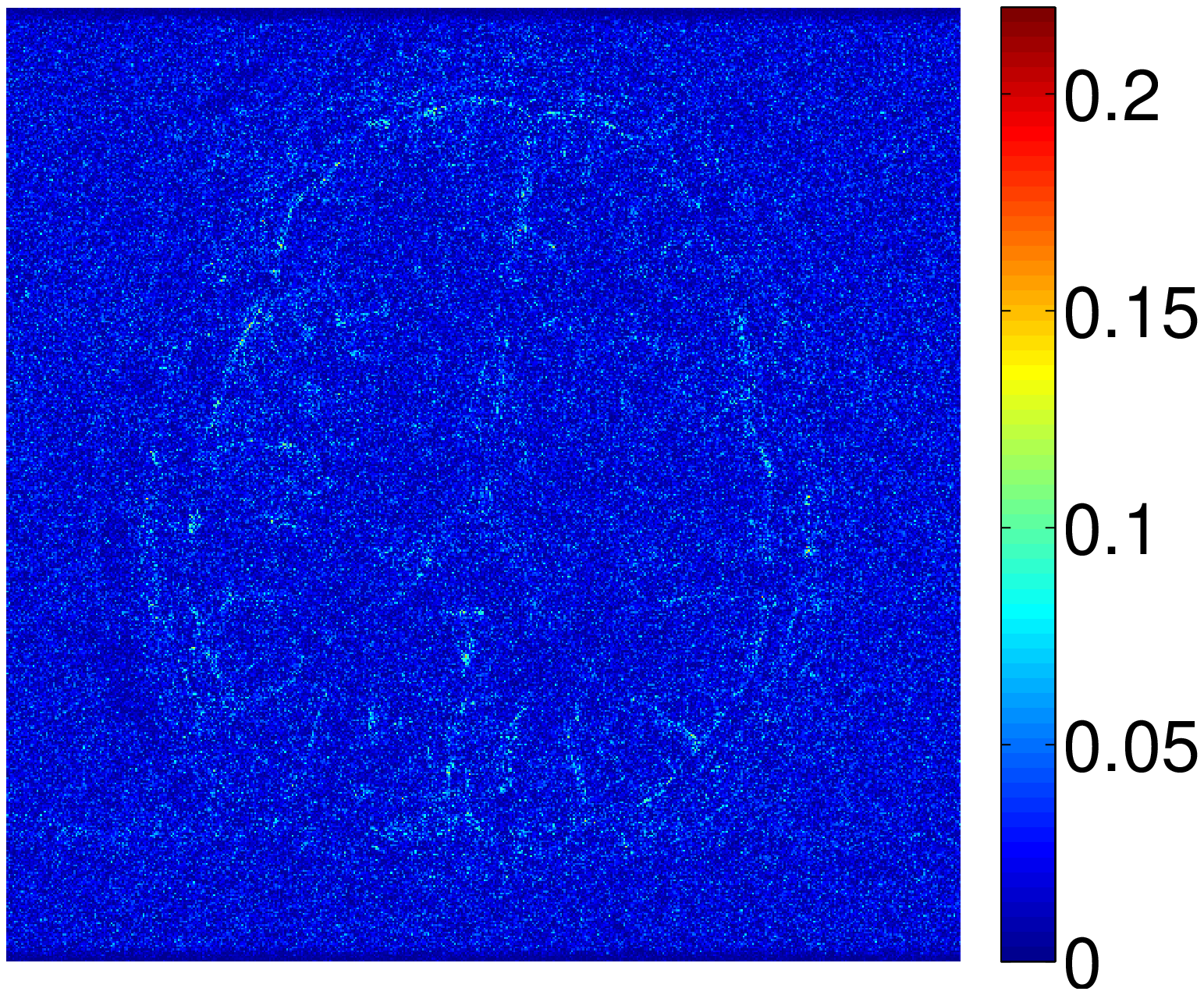} \\
(e) & (f) & (g) & (h)\\
\end{tabular}
\caption{Results for 2D random sampling and 7x undersampling: (a) Reference image; (b) k-space of reference; (c) sampling mask in k-space; (d) TLMRI reconstruction (31.94 dB); (e) magnitude of PBDWS \cite{Qu12} reconstruction error; (f) magnitude of PANO \cite{Qu2014843} reconstruction error; (g) magnitude of DLMRI \cite{bresai} reconstruction error; and (h) magnitude of TLMRI reconstruction error.}
\label{im2bcs}
\end{center}
\end{figure}


The average run times of the Sparse MRI, PBDWS, PANO, DLMRI, and TLMRI algorithms in Table \ref{tab1bcs} are 251 seconds, 797 seconds, 400 seconds, 3273 seconds, and 243 seconds, respectively. 
The PBDWS run time includes the time taken for computing the initial SIDWT based reconstruction or guide image \cite{Qu12} in the PBDWS software package \cite{Quweb}. 
The TLMRI algorithm is thus the fastest one in Table \ref{tab1bcs}, and provides a speedup of about 13.5x over the synthesis dictionary-based DLMRI, and a speedup of about 3.3x and 1.6x over the PBDWS and PANO
\footnote{Another faster version of the PANO method is also publicly available \cite{PANOwebFast}. However, we found that although this version has an average run time of only 40 seconds in Table \ref{tab1bcs}, it also provides worse reconstruction PSNRs than \cite{PANOweb} in Table \ref{tab1bcs}.} methods, respectively. Note that the speedups for TLMRI over PBDWS or PANO were obtained by comparing our unoptimized Matlab implementation of TLMRI against the MEX (or C) implementations of PBDWS and PANO.




\begin{figure}[!t]
\begin{center}
\begin{tabular}{ccc}
\includegraphics[height=1.4in]{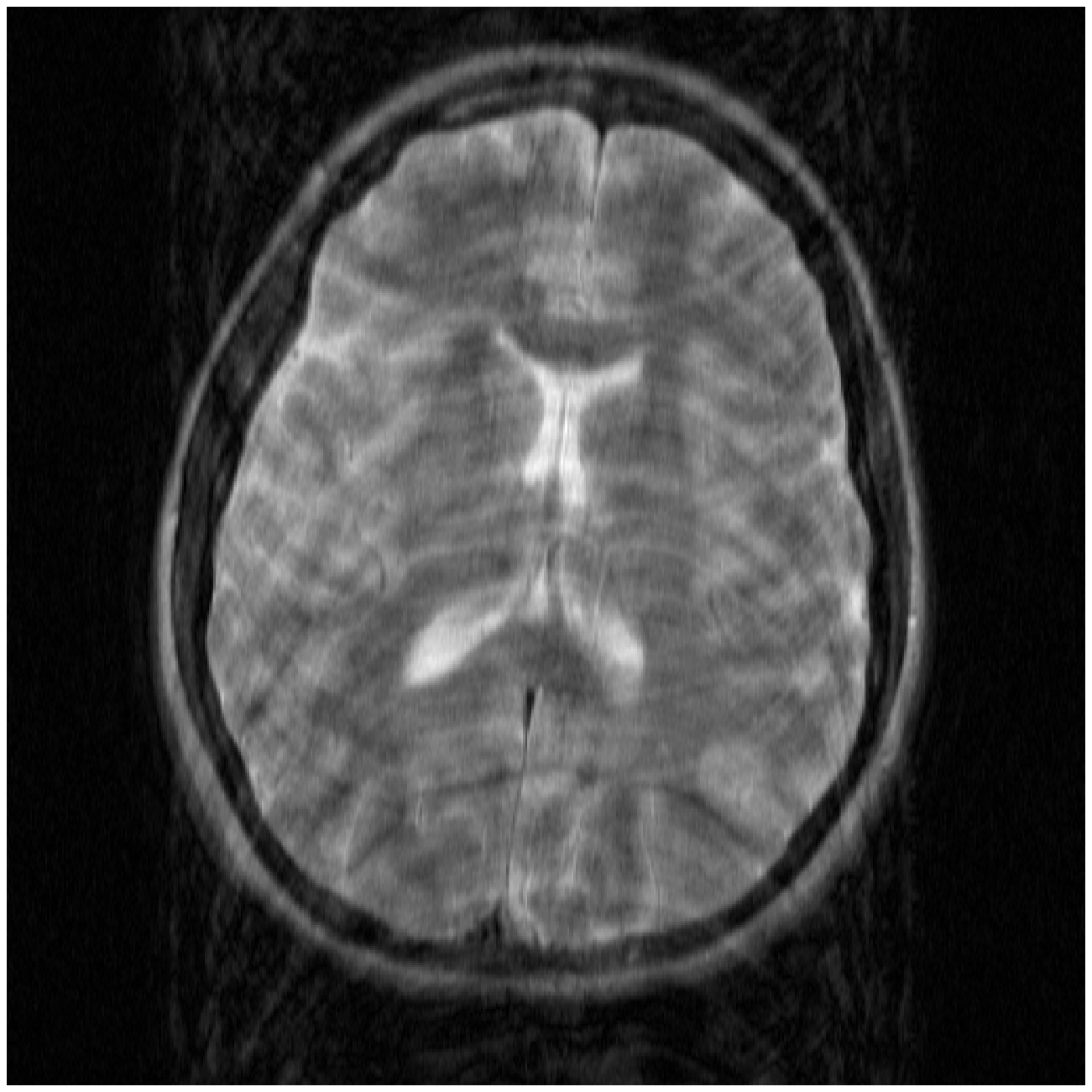}&
\includegraphics[height=1.4in]{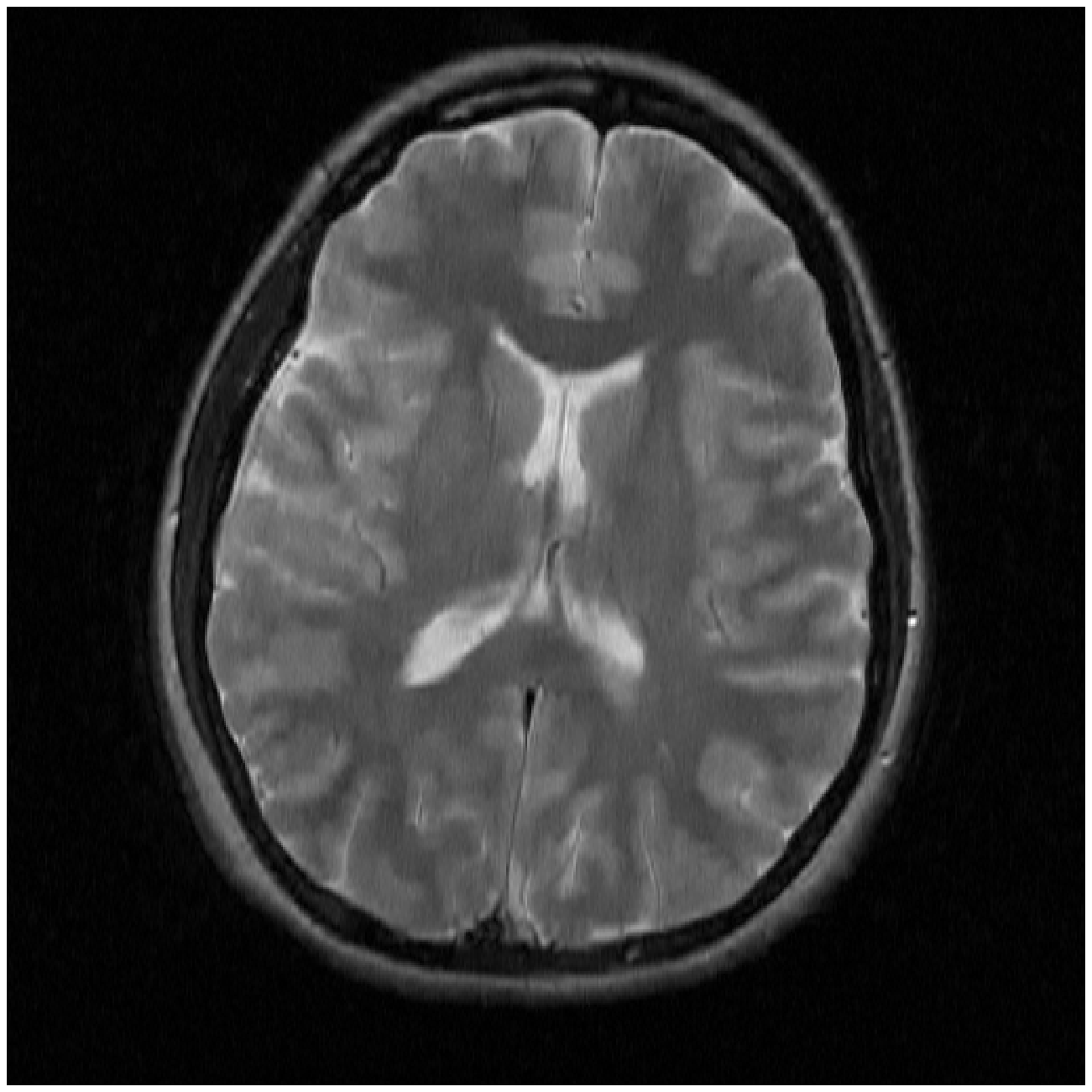}&
\includegraphics[height=1.4in]{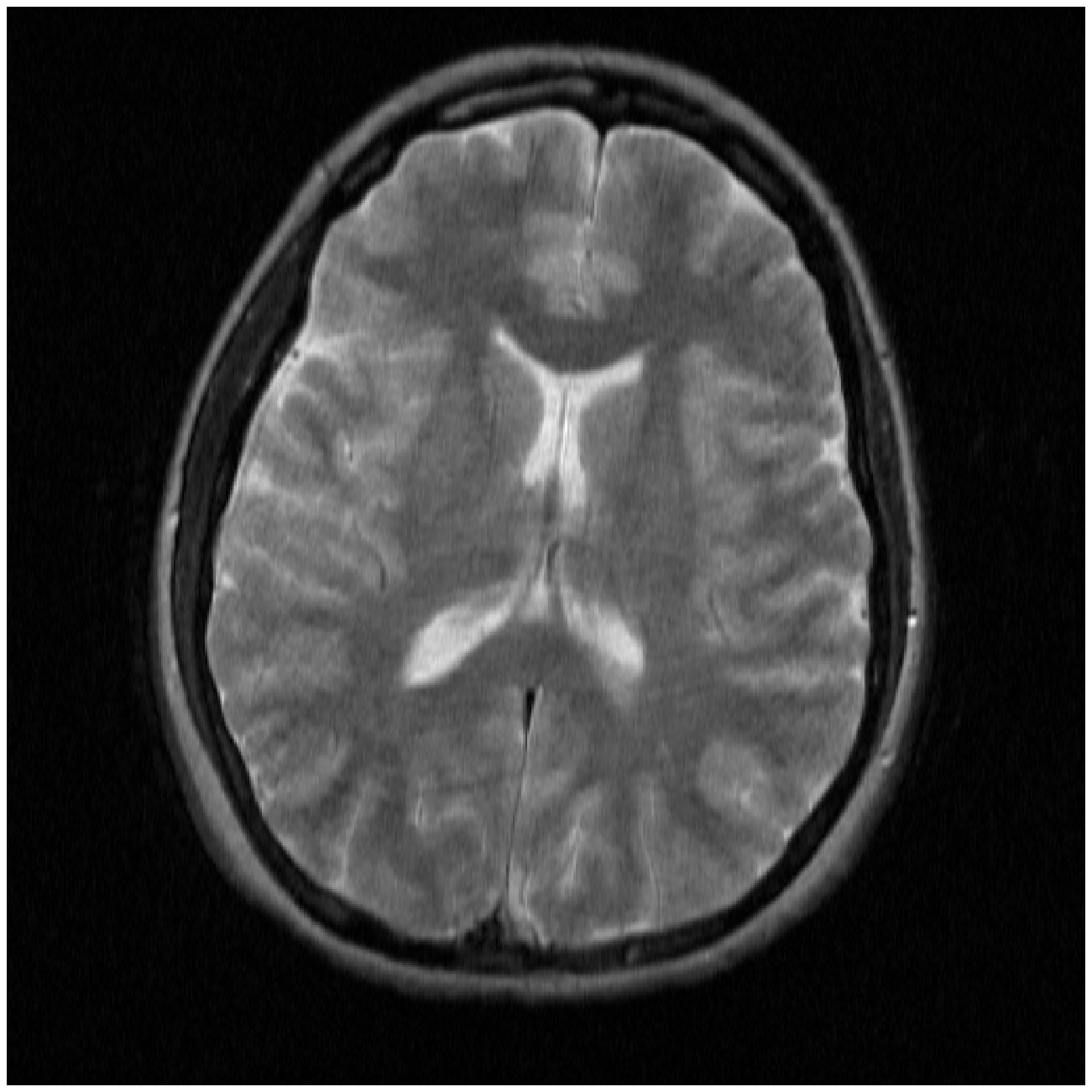}\\
(a) & (c) & (e)\\
\includegraphics[height=1.4in]{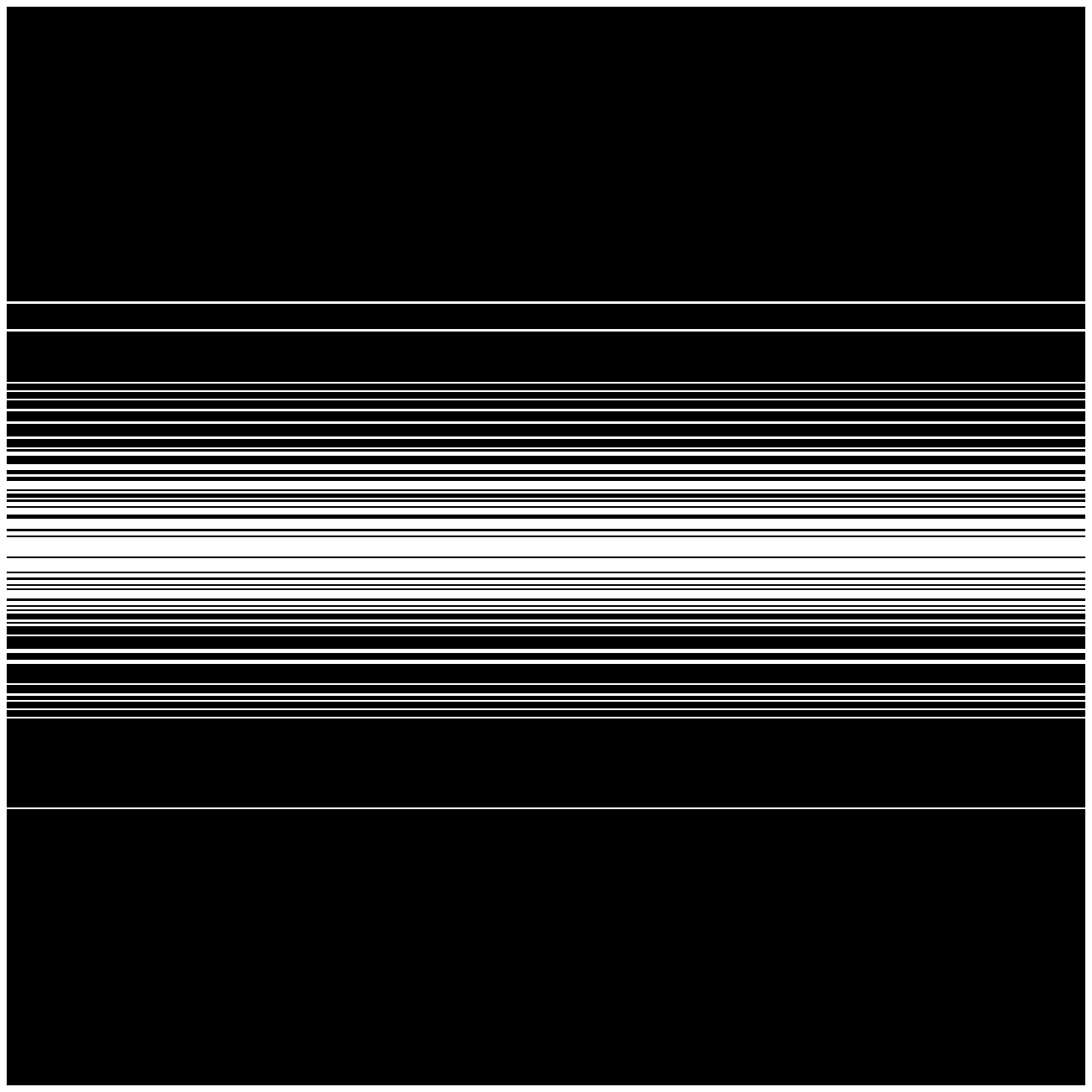}& 
\includegraphics[height=1.4in]{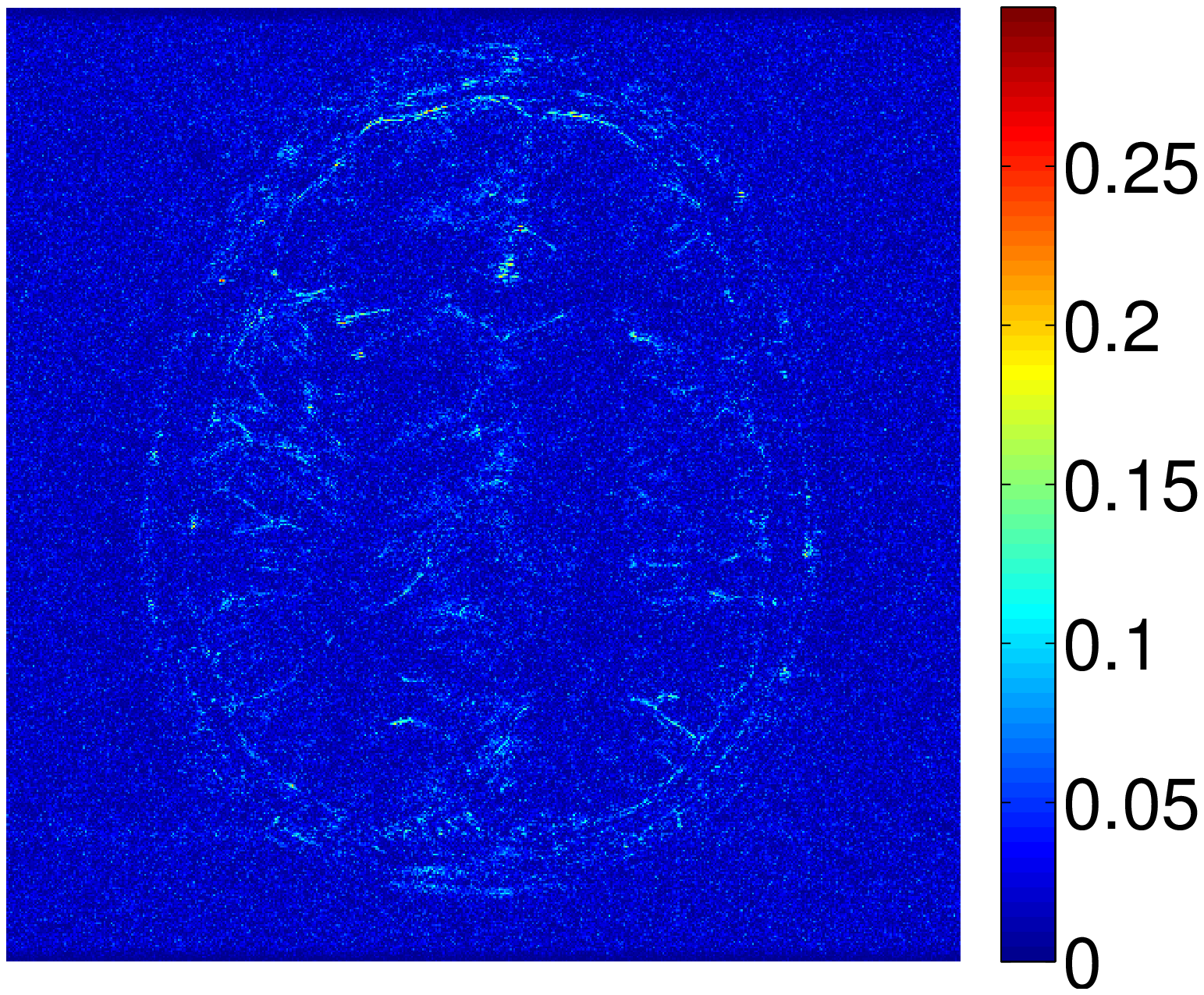}&
\includegraphics[height=1.4in]{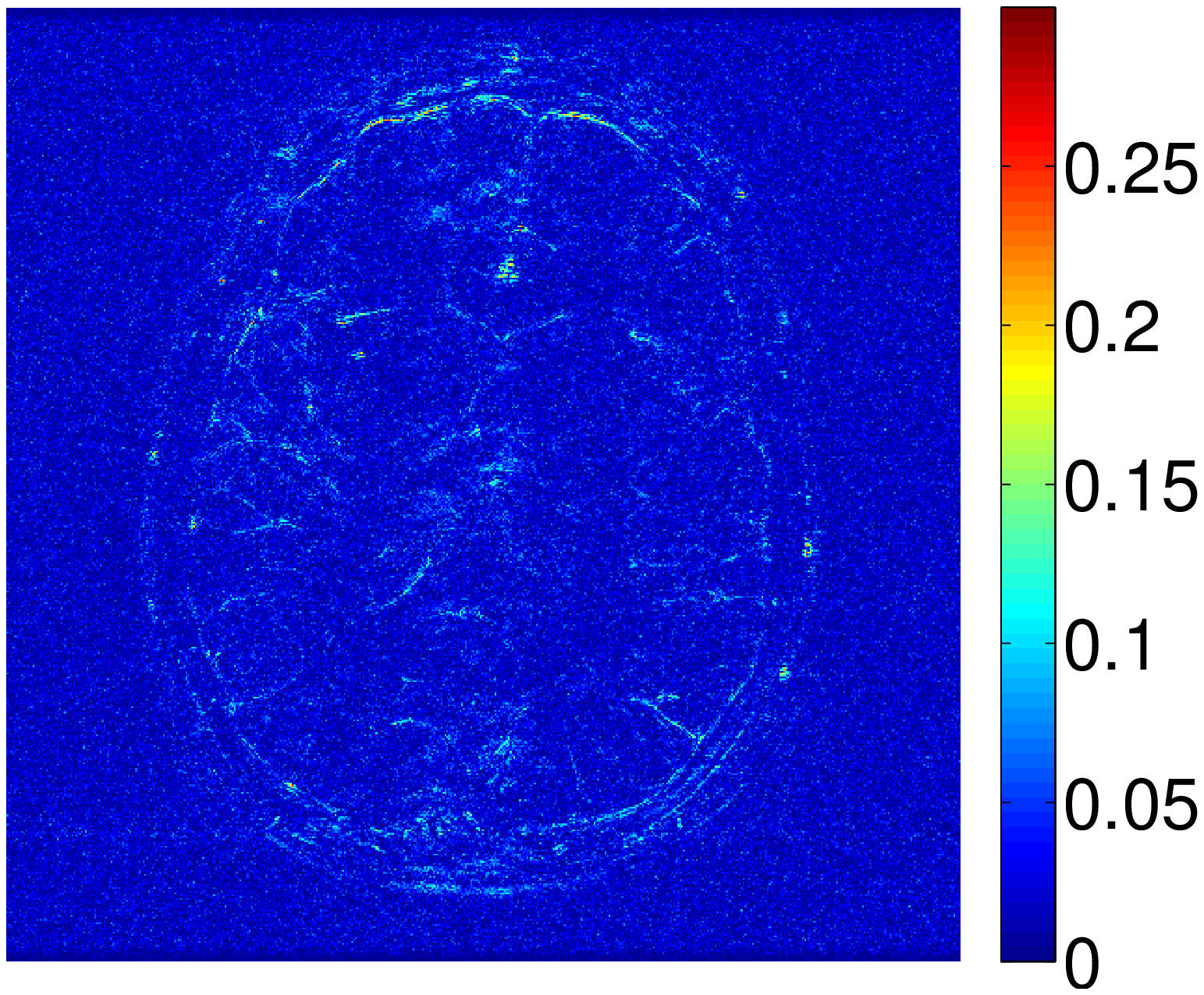}\\
(b) & (d) & (f)\\
\end{tabular}
\caption{Cartesian sampling with 7 fold undersampling: (a) Initial zero-filling reconstruction (27.9 dB); (b) sampling mask in k-space; (c) TLMRI reconstruction (31.24 dB); (d) magnitude of TLMRI reconstruction error; (e) DLMRI reconstruction (30.91 dB);  and (f) magnitude of DLMRI reconstruction error.}
\label{im3bcs}
\end{center}
\end{figure}

While our results show some (preliminary) potential for the proposed sparsifying transform-based blind compressed sensing framework (for MRI), a much more detailed investigation will be presented elsewhere. 
Combining the proposed scheme with the patch-based directional Wavelets ideas \cite{Qu11, Qu12}, or non-local patch similarity ideas \cite{Qu2014843, Y7027820}, or extending our framework to learning overcomplete sparsifying transforms (c.f., \cite{saiwen}) could potentially boost transform-based BCS performance further.



\section{Conclusions} \label{sec6}
In this work, we presented a novel sparsifying transform-based framework for blind compressed sensing. Our formulations exploit the (adaptive) transform domain sparsity of overlapping image patches in 2D, or voxels in 3D. The proposed formulations are however highly nonconvex. 
Our block coordinate descent-type algorithms for solving the proposed problems involve efficient update steps.
Importantly, our algorithms are guaranteed to converge to the critical points of the objectives defining the proposed formulations. These critical points are also guaranteed to be at least partial global and partial local minimizers. Our numerical examples showed the usefulness of the proposed scheme for magnetic resonance image reconstruction from highly undersampled k-space data. Our approach while being highly efficient also provides promising MR image reconstruction quality. The usefulness of the proposed blind compressed sensing methods in other inverse problems and imaging applications merits further study. A detailed investigation of the theoretical rate of convergence in our algorithms is also of potential interest.



\Appendix



\section{Convergence Proofs}  \label{apbcs1}

\subsection{Proof of Theorem \ref{theorem1bc}}  \label{ap1a}

In this proof, we let $\tilde{H_{s}}(Z)$ denote the \emph{set} of all optimal projections of $Z \in \mathbb{C}^{n \times N}$ onto the $s$-$\ell_{0}$ ball $\left \{ B \in \mathbb{C}^{n \times N} : \left \| B \right \|_{0} \leq s \right \}$, i.e., $\tilde{H_{s}}(Z)$ is the set of minimizers for the following problem.
\begin{equation} \label{dfdc22}
\tilde{H_{s}}(Z) = \underset{B \,:\, \left \| B \right \|_{0} \leq s}{\arg\min}\: \left \| Z-B \right \|_{F}^{2}
\end{equation}

Let $\left \{ W^{t}, B^{t}, x^{t} \right \}$ denote the iterate sequence generated by Algorithm A1 with measurements $y \in \mathbb{C}^{m}$ and initial $(W^{0}, B^{0}, x^{0})$. Assume that the initial $(W^{0}, B^{0}, x^{0})$ is such that $g\left ( W^{0}, B^{0}, x^{0}  \right )$ is finite.
Throughout this proof, we let $X^{t}$ be the matrix with $P_{j} x^{t}$ ($1\leq j \leq N$) as its columns.
The various results in Theorem \ref{theorem1bc} are now proved in the following order.
\begin{enumerate}[(i)]
\item The sequence of values of the objective in Algorithm A1 converges.
\item The iterate sequence generated by Algorithm A1 has an accumulation point.
\item All the accumulation points of the iterate sequence are equivalent in terms of their objective value.
\item Every accumulation point of the iterates is a critical point of the objective $g\left (W, B, x \right )$ satisfying \eqref{cnbcs4}, \eqref{cnbcs5}, and \eqref{cnbcs6}.
\item  The difference between successive image iterates  $\left \| x^{t} - x^{t-1} \right \|_{2}$, converges to zero.
\item Every accumulation point of the iterates is a local minimizer of $g\left (W, B, x \right )$ with respect to  $(B, x)$ or $(W, B)$.
\end{enumerate}

The following two Lemmas establish the convergence of the objective, and the boundedness of the iterate sequence. 

\begin{lemma}\label{lemma2bc} 
Let $\left \{ W^{t}, B^{t}, x^{t} \right \}$ denote the iterate sequence generated by Algorithm A1 with input $y \in \mathbb{C}^{m}$ and initial $(W^{0}, B^{0}, x^{0})$. Then, the sequence of objective function values  $\left \{ g\left ( W^{t}, B^{t}, x^{t}  \right ) \right \}$ is monotone decreasing, and converges to a finite value $g^{*} = g^{*}(W^{0}, B^{0}, x^{0})$.
\end{lemma}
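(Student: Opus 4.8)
The plan is to prove Lemma \ref{lemma2bc} by exploiting the block-coordinate structure of Algorithm A1 and the fact that each update step is solved exactly (by the explicit formulas stated in Propositions \ref{propel1bcs}, the sparse-coding projection $H_s(\cdot)$, and the exact image update of Section \ref{imupstepg}). First I would observe that the constrained objective of (P1) is, as noted in the ``Notations'' section, equivalent to the unconstrained objective $g(W,B,x)$ with barrier penalties $\psi$, $\chi$; since the initial triplet $(W^0,B^0,x^0)$ is feasible (satisfies $\|B^0\|_0\le s$, $\|x^0\|_2\le C$), $g(W^0,B^0,x^0)$ is finite. I would then track the objective through one full outer iteration, broken into the inner alternations and the image update.

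The key step is a chain of inequalities. Within outer iteration $t$, for each inner iteration $l$ the transform update replaces $\tilde W^{l-1}$ by a \emph{global} minimizer $\tilde W^{l}$ of \eqref{bcs7} (for fixed $X^{t-1}$ and $\tilde B^{l-1}$); by Proposition \ref{propel1bcs} this minimizer exists and is given in closed form, so the objective does not increase. Then the sparse-coding step replaces $\tilde B^{l-1}$ by $\tilde B^{l}=H_s(\tilde W^{l}X^{t-1})$, which is a \emph{global} minimizer of \eqref{bcs1}/\eqref{bcs2} over $\{B:\|B\|_0\le s\}$; again the objective cannot increase, and the iterate stays feasible so $\psi(\tilde B^{l})=0$. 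Iterating over $l=1,\dots,\hat M$ and then performing the image update — which by Section \ref{imupstepg} returns the \emph{unique} exact minimizer of \eqref{bcs9} over $\{x:\|x\|_2\le C\}$ for fixed $(W^t,B^t)$, using positive-definiteness of $G$ and the monotone Newton solve for $\hat\mu$ — yields
\begin{equation}
g(W^{t},B^{t},x^{t}) \;\le\; g(W^{t},B^{t},x^{t-1}) \;\le\; g(W^{t-1},B^{t-1},x^{t-1}),
\end{equation}
i.e.\ $\{g^t\}$ is monotone decreasing. Here I would be careful that each ``$\le$'' is justified because the relevant step is an exact global minimization of $g$ over one block with the others fixed, and that feasibility (hence finiteness of the barrier terms) is preserved at every sub-step.

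To conclude convergence to a finite value it remains to show $\{g^t\}$ is bounded below. This follows from the structure of $g$: the three fidelity/sparsification terms $\nu\|Ax-y\|_2^2$, $\sum_j\|WP_jx-b_j\|_2^2$ are nonnegative, the barriers $\psi,\chi$ are nonnegative, and the transform regularizer $\lambda Q(W)=\lambda(-\log|\det W|+0.5\|W\|_F^2)$ is bounded below — indeed, as recalled in Section \ref{dfgy6} (and as used in the caption of Fig.\ \ref{im1bcs}), $Q(W)\ge n/2$ for all $W$, so $g\ge \lambda n/2 \ge$ some finite constant. A monotone decreasing sequence bounded below converges, so $g^t\to g^*$ for some finite $g^*$, which by construction depends only on $(W^0,B^0,x^0)$.

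The only mild obstacle I anticipate is bookkeeping rather than conceptual: one must make sure the non-uniqueness of the optimal projection $H_s(\cdot)$ and of the SVD/square-root in Proposition \ref{propel1bcs} does not break monotonicity — but it does not, since \emph{any} choice returned by these steps is a global minimizer over its block, so the objective decrease inequality holds regardless of which tie-breaking rule is used. (The non-uniqueness will matter later, for the accumulation-point analysis, but not for this lemma.) One should also note explicitly that the lower bound $Q(W)\ge n/2$, combined with monotone decrease of $g^t$ from a finite initial value, already forces the iterates $W^t$ into a region where $|\det W^t|$ is bounded away from $0$ and $\|W^t\|_F$ is bounded — a fact that will be reused for boundedness of the iterate sequence, though Lemma \ref{lemma2bc} itself only requires the convergence of $\{g^t\}$.
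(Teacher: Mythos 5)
Your proposal is correct and follows essentially the same route as the paper's proof: exact block-wise global minimization in the transform update, sparse coding, and image update steps yields the chain $g(W^{t},B^{t},x^{t}) \leq g(W^{t},B^{t},x^{t-1}) \leq g(W^{t-1},B^{t-1},x^{t-1})$, and the lower bound follows from nonnegativity of all terms together with $Q(W)\geq n/2$, so the monotone bounded sequence converges. Your additional remarks on feasibility preservation and on tie-breaking in $H_s(\cdot)$ and the SVD are sound but not needed beyond what the paper states.
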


\begin{proof}: Algorithm A1 first alternates between the transform update and sparse coding steps (Step 2 in algorithm pseudocode), with fixed image $x$. In the transform update step, we obtain a global minimizer (i.e., \eqref{tru1bcs}) with respect to $W$ for Problem \eqref{bcs6}. In the sparse coding step too, we obtain an exact solution for $B$ in Problem \eqref{bcs2} as $\hat{B} = H_{s}(Z)$. Therefore, the objective function can only decrease when we alternate between the transform update and sparse coding steps (similar to the case in \cite{sbclsTS2}). Thus, we have $g\left ( W^{t+1}, B^{t+1}, x^{t}  \right ) \leq g\left ( W^{t}, B^{t}, x^{t}  \right )$. 

In the image update step of Algorithm A1 (Step 4 in algorithm pseudocode), we obtain an exact solution to the constrained least squares problem \eqref{bcs9}. Therefore, the objective in this step satisfies $g\left (W^{t+1}, B^{t+1}, x^{t+1}  \right ) \leq g\left ( W^{t+1}, B^{t+1}, x^{t}  \right )$. By the preceding arguments, we have $g\left ( W^{t+1}, B^{t+1}, x^{t+1}  \right ) \leq g\left ( W^{t}, B^{t}, x^{t}  \right )$, for every $t$. Now, every term, except $\lambda Q(W)$, in the objective \eqref{cnbcs1} is trivially non-negative. Furthermore, the $Q(W)$ regularizer is bounded as $Q(W) \geq  \frac{n}{2}$ (cf. \cite{sabres}). Therefore, the objective $g\left (W, B, x \right ) > 0$. Since the sequence $ \left \{ g\left ( W^{t}, B^{t}, x^{t}  \right ) \right \}$ is monotone decreasing and lower bounded, it converges.
\end{proof}

\begin{lemma}\label{lemma3bc} 
The iterate sequence  $\left \{ W^{t}, B^{t}, x^{t} \right \}$ generated by Algorithm A1 is bounded, and it has at least one accumulation point.
\end{lemma}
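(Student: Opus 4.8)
\subsection*{Proof proposal for Lemma \ref{lemma3bc}}

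The plan is to derive boundedness of each of the three blocks of iterates directly from the descent property already proved in Lemma~\ref{lemma2bc}. Since $\{g^t\}$ with $g^t \triangleq g(W^t, B^t, x^t)$ is monotone decreasing and $g^0 = g(W^0, B^0, x^0)$ is finite (by the standing assumption on the initialization), we have $g(W^t, B^t, x^t) \le g^0$ for every $t$. In the objective \eqref{cnbcs1}, the terms $\nu\|Ax-y\|_2^2$, $\sum_{j=1}^N \|WP_jx - b_j\|_2^2$, $\psi(B)$, and $\chi(x)$ are all non-negative, and $\lambda Q(W) \ge \lambda n/2 > 0$ (cf.~\cite{sabres}); hence from the single bound $g^t \le g^0$ we can extract a uniform (in $t$) bound on each individual term.

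First, finiteness of $g^t$ forces $\chi(x^t) < +\infty$, i.e., $\|x^t\|_2 \le C$, so $\{x^t\}$ is bounded. Second, $\lambda Q(W^t) \le g^0$ gives $Q(W^t) \le g^0/\lambda$; writing $W^t$ through its singular value decomposition and using the identity $Q(W) = \sum_{i=1}^n \big(0.5\,\sigma_i^2(W) - \log \sigma_i(W)\big)$, we observe that each scalar map $\sigma \mapsto 0.5\sigma^2 - \log\sigma$ is convex on $(0,\infty)$, has minimum value $1/2$ at $\sigma = 1$, and tends to $+\infty$ as $\sigma \to 0^+$ or $\sigma \to \infty$. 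Consequently the bound $Q(W^t) \le g^0/\lambda$ confines every $\sigma_i(W^t)$ to a fixed compact interval $[a,b] \subset (0,\infty)$ independent of $t$, so $\|W^t\|_F = \big(\sum_{i}\sigma_i^2(W^t)\big)^{1/2} \le b\sqrt{n}$; this also shows $\sigma_{\min}(W^t) \ge a > 0$. Third, $\sum_{j=1}^N \|W^t P_j x^t - b_j^t\|_2^2 \le g^0$ together with the triangle inequality and $\|P_j x^t\|_2 \le \|x^t\|_2 \le C$ yields $\|b_j^t\|_2 \le \|W^t\|_2\,C + \sqrt{g^0} \le b\,C + \sqrt{g^0}$ for each $j$, so $\{B^t\}$ is bounded as well.

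The sequence $\{(W^t, B^t, x^t)\}$ thus lies in a bounded subset of the finite-dimensional space $\mathbb{C}^{n\times n} \times \mathbb{C}^{n\times N} \times \mathbb{C}^p$, and the Bolzano--Weierstrass theorem provides a convergent subsequence, hence at least one accumulation point. The only point needing care is the boundedness of $\|W^t\|_F$: the $-\log|\det W|$ term in $Q$ is unbounded below, so coercivity is not immediate; the SVD reparametrization, which decouples $Q$ into a sum of identical one-dimensional coercive functions of the singular values, is what resolves it (and the incidental lower bound $\sigma_{\min}(W^t) \ge a$ will be convenient in later parts of the proof, where the exact transform-update and sparse-coding solutions are analyzed). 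The remaining estimates are routine.
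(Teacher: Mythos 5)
Your proof is correct. For the blocks $\{x^t\}$ and $\{W^t\}$ it follows essentially the paper's own route: $\|x^t\|_2 \le C$ from the constraint (the paper invokes this directly from the algorithm, you via finiteness of $\chi(x^t)$ — same content), and boundedness of $\{W^t\}$ from $\lambda Q(W^t) \le g^0$ together with coercivity of $\sum_i \left(0.5\sigma_i^2 - \log \sigma_i\right)$ in the singular values; your explicit compact interval $[a,b]$ for the singular values is just a more detailed version of the paper's ``bounded lower level sets'' statement, and the by-product $\sigma_{\min}(W^t) \ge a$ is indeed a fact the paper needs later (it gets it in Lemma \ref{lemma7bc} via $|\det W^{q_t}| \ge e^{-g^0/\lambda}$). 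Where you genuinely diverge is the bound on $\{B^t\}$: the paper uses the algorithmic identity $B^t = H_s(W^t X^{t-1})$ and the non-expansiveness of the projection, $\|H_s(W^t X^{t-1})\|_F \le \|W^t\|_2 \|X^{t-1}\|_F$, whereas you bound $\sum_j \|W^t P_j x^t - b_j^t\|_2^2 \le g^0$ from monotone descent and use the triangle inequality with $\|P_j x^t\|_2 \le \|x^t\|_2 \le C$. Both are valid; the paper's is a one-line norm inequality tied to the exact form of the sparse-coding update, while yours is agnostic to that form (it would survive, e.g., an inexact sparse-coding step that still decreases the objective) at the cost of invoking the patch-extraction bound and the descent property once more.
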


\begin{proof}: The existence of a convergent subsequence (and hence, an accumulation point) for a bounded sequence is a standard result. Therefore, we only prove the boundedness of the iterate sequence.

Since $\left \| x^{t} \right \|_{2}\leq C$ $\forall$ $t$ trivially, we have that the sequence $ \left \{ x^{t}\right \}$ is bounded. We now show the boundedness of $ \left \{ W^{t}\right \}$.
Let us denote the objective $g(W^{t}, B^{t}, x^{t})$ as $g^{t}$. It is obvious that the squared $\ell_{2}$ norm terms and the barrier functions $\psi(B^{t})$ and $\chi(x^{t})$
in the objective $g^{t}$ \eqref{cnbcs1}, are non-negative. Therefore, we have
\begin{equation} \label{cstls1}
\lambda Q(W^{t}) \leq  g^{t} \leq g^{0}
\end{equation} 
where the second inequality follows from Lemma \ref{lemma2bc}. Now, the function $ Q(W^{t}) =\sum_{i=1}^{n}  (0.5\alpha_{i}^{2} - \log \, \alpha_{i}) $, where  $\alpha_{i}$ ($1 \leq i \leq n$) are the singular values of $W^{t}$, is a coercive  function of the (non-negative) singular values, and therefore, it has bounded lower level sets \footnote{The lower level sets of a function $f: A \subset \mathbb{R}^{n} \mapsto \mathbb{R}$ (where $A$ is unbounded) are bounded if $\lim_{t \to \infty} f(z^{t}) = + \infty$ whenever $\left \{ z^{t} \right \}\subset A $ and $\lim_{t \to \infty} \left \| z^{t} \right \| = \infty$.}. Combining this fact with \eqref{cstls1}, we can immediately conclude that $\exists$ $c_{0} \in \mathbb{R}$ depending on $g^{0}$ and $\lambda$, such that $\left \| W^{t} \right \|_{F}  \leq c_0$ $\forall$ $t$.

Finally, the boundedness of $ \left \{ B^{t}\right \}$ follows from the following arguments. First, for Algorithm A1 (see pseudocode), we have that $B^{t} = H_{s}\begin{pmatrix}
W^{t} X^{t-1}
\end{pmatrix}$. 
Therefore, by the definition of $H_{s} (\cdot)$, we have
\begin{equation} \label{cstls1b}
\begin{Vmatrix}
B^{t} 
\end{Vmatrix}_{F} = \begin{Vmatrix}
H_{s}\begin{pmatrix}
W^{t} X^{t-1}
\end{pmatrix}
\end{Vmatrix}_{F} \leq \begin{Vmatrix}
W^{t} X^{t-1}
\end{Vmatrix}_{F} \leq \begin{Vmatrix}
W^{t}
\end{Vmatrix}_{2}\begin{Vmatrix}
X^{t-1}
\end{Vmatrix}_{F}
\end{equation}
Since, by our previous arguments, $W^{t}$ and $X^{t-1}$ are both bounded by constants independent of $t$, we have that the sequence of sparse code matrices $ \left \{ B^{t}\right \}$, is also bounded.
\end{proof}

We now establish some key optimality properties of the accumulation points of the iterate sequence in Algorithm A1.

\begin{lemma}\label{lemma7bc} \vspace{0.02in}
All the accumulation points of the iterate sequence generated by Algorithm A1 with a given initial $(W^{0}, B^{0}, x^{0})$ correspond to a common objective value $g^{*}$. Thus, they are equivalent in that sense. 
\end{lemma}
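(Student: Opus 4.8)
The plan is to exploit the monotone convergence of the objective (Lemma \ref{lemma2bc}) together with a demonstration that, along any convergent subsequence of iterates, \emph{every} term of $g$ in \eqref{cnbcs1} actually converges to its value at the limit point — i.e.\ $g$ behaves continuously along the iterate subsequence even though it is only lower semicontinuous in general. Granting this, if $(W^{*}, B^{*}, x^{*})$ is any accumulation point with $\{W^{t_{k}}, B^{t_{k}}, x^{t_{k}}\} \to (W^{*}, B^{*}, x^{*})$ (such a subsequence exists by Lemma \ref{lemma3bc}), then $g(W^{t_{k}}, B^{t_{k}}, x^{t_{k}}) \to g(W^{*}, B^{*}, x^{*})$; but the left side is the subsequence $\{g^{t_{k}}\}$ of the convergent sequence $\{g^{t}\}$, which tends to $g^{*}$ by Lemma \ref{lemma2bc}. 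Hence $g(W^{*}, B^{*}, x^{*}) = g^{*}$, and since the accumulation point was arbitrary this proves the claim.

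The substantive content is therefore the continuity of the individual terms along the subsequence. The quadratic terms $\nu \| Ax - y \|_{2}^{2}$ and $\sum_{j} \| W P_{j} x - b_{j} \|_{2}^{2}$ are polynomials in the entries of $(W, B, x)$, hence jointly continuous, so these pose no difficulty. For the remaining three terms I would argue: (a) $\psi(B^{t}) = 0$ and $\chi(x^{t}) = 0$ for all $t$ by feasibility of the iterates, and both the $s$-$\ell_{0}$ ball $\{B : \| B \|_{0} \le s\}$ (a finite union of coordinate subspaces) and the energy ball $\{x : \| x \|_{2} \le C\}$ are closed, so the limits satisfy $\| B^{*} \|_{0} \le s$ and $\| x^{*} \|_{2} \le C$, giving $\psi(B^{*}) = \chi(x^{*}) = 0$; (b) from \eqref{cstls1} we have the uniform bound $\lambda Q(W^{t}) \le g^{0} < \infty$, and writing $Q(W^{t}) = \sum_{i=1}^{n} (0.5\alpha_{i}^{2} - \log \alpha_{i})$ in terms of the singular values $\alpha_{i}$ of $W^{t}$, the fact that $0.5\alpha^{2} - \log \alpha \to +\infty$ as $\alpha \to 0^{+}$ (while every such term is bounded below by $1/2$) forces $\inf_{t} \min_{i} \alpha_{i}(W^{t}) > 0$; consequently $W^{*}$ is nonsingular and $Q$ is continuous on a neighborhood of $W^{*}$, so $\lambda Q(W^{t_{k}}) \to \lambda Q(W^{*})$. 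Assembling these, $g(W^{t_{k}}, B^{t_{k}}, x^{t_{k}}) \to g(W^{*}, B^{*}, x^{*})$, as needed.

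I expect the main obstacle to be precisely step (b): one must rule out the limiting transform $W^{*}$ becoming singular, since otherwise $Q$ could jump to $+\infty$ at the limit and the argument would break. This is handled by the coercivity of $Q$ in the singular values combined with the monotone upper bound on the objective — a short argument, but the only place where the structure of the regularizer is really used. The closedness of the $s$-$\ell_{0}$ ball (needed so the sparsity barrier does not jump at the limit) is the other point requiring a brief justification; everything else is continuity of polynomials plus Lemma \ref{lemma2bc}.
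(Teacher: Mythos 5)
Your proposal is correct and follows essentially the same route as the paper's proof: show that along any convergent subsequence the barrier terms vanish at both the iterates and the limit (feasibility plus closedness of the $s$-$\ell_0$ and energy balls), that $W^{*}$ is nonsingular so $Q$ is continuous there, and then combine continuity of $g$ along the subsequence with the monotone convergence of $\{g^{t}\}$ from Lemma \ref{lemma2bc}. The only cosmetic difference is that the paper rules out a singular limit via the bound $\left | \det W^{q_t} \right | \geq e^{-g^{0}/\lambda}$ rather than your (equally valid) singular-value coercivity argument.
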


\begin{proof}:
Consider the subsequence $\left \{ W^{q_t}, B^{q_t}, x^{q_t} \right \}$ (indexed by $q_t$) of the iterate sequence, that converges to the accumulation point $\left ( W^{*}, B^{*}, x^{*} \right )$. 
Before proving the lemma, we establish some simple properties of $\left ( W^{*}, B^{*}, x^{*} \right )$.
First, equation \eqref{cstls1} implies that $- \log \left | \det W^{q_t} \right | \leq (g^{0}/\lambda)$, for every $t$. This further implies that $\left | \det W^{q_t} \right | \geq e^{-g^{0}/\lambda} >0$ $\forall$ $t$.
Therefore, due to the continuity of the function $\left | \det W \right |$, the limit $W^{*}$ of the subsequence is also non-singular, with $\left | \det W^{*} \right | \geq e^{-g^{0}/\lambda}$. 
Second, $B^{q_t} =  H_{s}\begin{pmatrix} W^{q_t} X^{q_{t} - 1} \end{pmatrix} $, where $X^{q_{t}-1}$ is the matrix with $P_{j} x^{q_{t} - 1}$ ($1\leq j \leq N$) as its columns. Thus, $B^{q_t}$
trivially satisfies $\psi (B^{q_t}) = 0$ for every $t$. 
Now, $\left \{ B^{q_t} \right \}$ converges to $B^{*}$,  which makes $B^{*}$ the limit of a sequence of matrices, each of which has no more than $s$ non-zeros. Thus, the limit $B^{*}$ obviously cannot have more than $s$ non-zeros.
Therefore, $ \left \| B^{*} \right \|_{0} \leq s$, or equivalently $\psi (B^{*}) = 0$.
Finally, since $x^{q_{t}}$ satisfies the constraint $\left \| x^{q_{t}} \right \|_{2}^{2}\leq C^{2}$, we have $\chi(x^{q_t})=0$ $\forall$ $t$. Additionally, since $x^{q_t} \to x^{*}$ as $t \to \infty$, we also have $\left \| x^{*} \right \|_{2}^{2} = \lim_{t\to \infty} \left \| x^{q_{t}} \right \|_{2}^{2}\leq C^{2}$. Therefore, $\chi(x^{*})=0$.  Now, it is obvious from the above arguments that 
\begin{equation} \label{cstl2922}
\lim_{t \to \infty} \chi(x^{q_t}) = \chi(x^{*}), \;\; \lim_{t \to \infty} \psi (B^{q_t}) = \psi (B^{*})
\end{equation}
Moreover, due to the continuity of $Q(W)$ at non-singular matrices $W$, we have $\lim_{t \to \infty} Q(W^{q_t}) = Q(W^{*})$. We now use these limits along with some simple properties of convergent sequences (e.g., the limit of the sum/product of convergent sequences is equal to the sum/product of their limits), to arrive at the following result.
\begin{align}
\nonumber & \lim_{t \to \infty} g(W^{q_t}, B^{q_t}, x^{q_t}) = \\ 
 \nonumber &   \lim_{t \to \infty}\begin{Bmatrix}
\sum_{j=1}^{N} \left \| W^{q_t} P_{j}x^{q_t}- b_{j}^{q_t} \right \|_{2}^{2} + \nu \left \| Ax^{q_t}-y \right \|_{2}^{2} + \lambda \, Q(W^{q_t}) + \psi (B^{q_t})
\end{Bmatrix}\\
\nonumber & + \lim_{t \to \infty} \chi(x^{q_t}) = \sum_{j=1}^{N} \left \| W^{*} P_{j}x^{*}- b_{j}^{*} \right \|_{2}^{2} + \nu \left \| Ax^{*}-y \right \|_{2}^{2} + \lambda \, Q(W^{*}) + \psi (B^{*}) + \chi(x^{*}) \\
 & = g(W^{*}, B^{*}, x^{*})   \label{csbldf00}
\end{align}
The above result together with the fact (from Lemma \ref{lemma2bc}) that $\lim_{t \to \infty} g(W^{q_t}, B^{q_t}, x^{q_t}) = g^{*}$ implies that $g(W^{*}, B^{*}, x^{*}) = g^{*}$.
\end{proof}

The following lemma establishes partial global optimality with respect to the image, of every accumulation point.

\begin{lemma}\label{lemma6bc} \vspace{0.02in}
Any accumulation point $\left ( W^{*}, B^{*}, x^{*} \right )$ of the iterate sequence generated by Algorithm A1 satisfies
\begin{equation} \label{cstls6}
 x^{*} \in \underset{x}{\arg\min} \; \,  g\left ( W^{*}, B^{*}, x \right )
\end{equation}
\end{lemma}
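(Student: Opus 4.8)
The plan is to exploit the fact that in the image update step of Algorithm A1, $x^{q_t}$ is computed as an \emph{exact} minimizer of the constrained least squares problem \eqref{bcs9} with $W = W^{q_t}$ and $B = B^{q_t}$. Since the terms $\lambda Q(W)$ and $\psi(B)$ appearing in $g$ do not depend on $x$, this is equivalent to the statement $x^{q_t} \in \arg\min_{x} g(W^{q_t}, B^{q_t}, x)$, where $\left \{ W^{q_t}, B^{q_t}, x^{q_t} \right \}$ denotes the subsequence converging to the accumulation point $\left ( W^{*}, B^{*}, x^{*} \right )$. Hence, for every $t$ and every $x \in \mathbb{C}^{p}$,
\[
g(W^{q_t}, B^{q_t}, x^{q_t}) \le g(W^{q_t}, B^{q_t}, x),
\]
and the proof amounts to passing to the limit $t \to \infty$ on both sides.

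For the left-hand side, I would reuse the computation already carried out in the proof of Lemma \ref{lemma7bc} (equation \eqref{csbldf00}), which gives $g(W^{q_t}, B^{q_t}, x^{q_t}) \to g(W^{*}, B^{*}, x^{*})$. For the right-hand side, fix an arbitrary $x \in \mathbb{C}^{p}$: if $\left \| x \right \|_{2} > C$ the inequality is trivial, since its right-hand side is $+\infty$; if $\left \| x \right \|_{2} \le C$, then $\chi(x) = 0$ is constant in $t$, the quadratic terms $\sum_{j=1}^{N} \left \| W^{q_t} P_{j} x - b_{j}^{q_t} \right \|_{2}^{2} + \nu \left \| Ax - y \right \|_{2}^{2}$ converge (being polynomial in the entries of $W^{q_t}$ and $B^{q_t}$), $\lambda Q(W^{q_t}) \to \lambda Q(W^{*})$ by continuity of $Q$ at the nonsingular matrix $W^{*}$ (nonsingularity of $W^{*}$ having been established within the proof of Lemma \ref{lemma7bc}), and $\psi(B^{q_t}) = \psi(B^{*}) = 0$ for all $t$; therefore $g(W^{q_t}, B^{q_t}, x) \to g(W^{*}, B^{*}, x)$.

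Combining the two limits gives $g(W^{*}, B^{*}, x^{*}) \le g(W^{*}, B^{*}, x)$ for every $x \in \mathbb{C}^{p}$, which is exactly \eqref{cstls6}. I do not expect a genuine obstacle here: the only delicate point is the continuity of the objective along the subsequence for a fixed competitor $x$, and specifically the behavior of the non-continuous pieces $Q$ and $\psi$ — but both are controlled by facts (nonsingularity of the limit $W^{*}$, and $\psi$ vanishing along the subsequence and at the limit) already proved in Lemma \ref{lemma7bc}, so this step is essentially bookkeeping rather than a real difficulty.
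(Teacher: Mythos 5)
Your proposal is correct and follows essentially the same route as the paper: both use the exact optimality of $x^{q_t}$ in the image update step to get the inequality $g(W^{q_t}, B^{q_t}, x^{q_t}) \leq g(W^{q_t}, B^{q_t}, x)$ for a fixed competitor $x$, and then pass to the limit along the convergent subsequence using the continuity facts (nonsingularity of $W^{*}$, vanishing of $\psi$ and $\chi$) established for Lemma \ref{lemma7bc}. The only cosmetic difference is that the paper strips the $x$-independent terms before taking limits, whereas you take limits of the full objective; the substance is identical.
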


\begin{proof}:
Consider the subsequence $\left \{ W^{q_t}, B^{q_t}, x^{q_t} \right \}$ (indexed by $q_t$) of the iterate sequence, that converges to the accumulation point $\left ( W^{*}, B^{*}, x^{*} \right )$. Then, due to the optimality of $x^{q_t}$ in the image update step of Algorithm A1, we have the following inequality for any $t$ and any $x \in \mathbb{C}^{p}$.
\begin{align}
\nonumber \sum_{j=1}^{N} \left \| W^{q_t} P_{j}x^{q_t}- b_{j}^{q_t} \right \|_{2}^{2} + \nu \left \| Ax^{q_t}-y \right \|_{2}^{2} & + \chi(x^{q_t}) \leq \sum_{j=1}^{N} \left \| W^{q_t} P_{j}x- b_{j}^{q_t} \right \|_{2}^{2} \\
& \;\;\; + \nu \left \| Ax-y \right \|_{2}^{2} + \chi(x)  \label{cstls6b}
\end{align}
Now, by \eqref{cstl2922}, we have that $\lim_{t \to \infty} \chi(x^{q_t}) = \chi(x^{*})$. Taking the limit $t \to \infty$ term by term on both sides of \eqref{cstls6b} for some fixed $x \in \mathbb{C}^{p}$ yields the following result
\begin{align}
\nonumber \sum_{j=1}^{N} \left \| W^{*} P_{j}x^{*}- b_{j}^{*} \right \|_{2}^{2} + \nu \left \| Ax^{*}-y \right \|_{2}^{2} & + \chi(x^{*}) \leq \sum_{j=1}^{N} \left \| W^{*} P_{j}x- b_{j}^{*} \right \|_{2}^{2} \\
& \;\;\;  + \nu \left \| Ax-y \right \|_{2}^{2} + \chi(x)  \label{cstls6d}
\end{align}



Since the choice of $x$ in \eqref{cstls6d} is arbitrary, \eqref{cstls6d} holds for any $x \in \mathbb{C}^{p}$.
Recall that $\psi (B^{*})=0$ and $Q(W^{*})$ is finite based on the arguments in the proof of Lemma \ref{lemma7bc}.
Therefore, \eqref{cstls6d} implies that $g(W^{*}, B^{*}, x^{*}) \leq g(W^{*}, B^{*}, x)$ $\forall$ $x \in \mathbb{C}^{p}$. This establishes the result \eqref{cstls6} of the Lemma.
\end{proof}

The following lemma will be used to establish that the change between successive image iterates $\left \| x^{t} - x^{t-1} \right \|_{2}$, converges to 0.

\begin{lemma}\label{lemma6bcwe} \vspace{0.02in}
Consider the subsequence $\left \{ W^{q_t}, B^{q_t}, x^{q_t} \right \}$ (indexed by $q_t$) of the iterate sequence, that converges to the accumulation point $\left ( W^{*}, B^{*}, x^{*} \right )$. Then, the subsequence  $\left \{ x^{q_{t} - 1} \right \}$ also converges to $x^{*}$.
\end{lemma}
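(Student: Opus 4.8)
The plan is to use the boundedness of $\left \{ x^{q_t - 1} \right \}$ (trivially $\left \| x^{q_t - 1} \right \|_{2} \leq C$ for all $t$) and reduce the claim to showing that every convergent subsequence of $\left \{ x^{q_t - 1} \right \}$ has $x^{*}$ as its limit; a bounded sequence all of whose convergent subsequences share a common limit converges to that limit, so this suffices. Accordingly, I would fix an arbitrary convergent subsequence, say $x^{q_{n_t} - 1} \to \bar{x}$ (so $\left \| \bar{x} \right \|_{2} \leq C$), and observe that along this further subsequence we still have $W^{q_{n_t}} \to W^{*}$ and $B^{q_{n_t}} \to B^{*}$, with $W^{*}$ nonsingular and $\left \| B^{*} \right \|_{0} \leq s$ (both shown in the proof of Lemma \ref{lemma7bc}).

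The heart of the argument is a sandwich on the objective value at $\left ( W^{q_{n_t}}, B^{q_{n_t}}, x^{q_{n_t} - 1} \right )$. On one side, the transform update and sparse coding steps performed in outer iteration $q_{n_t}$ keep the image fixed at $x^{q_{n_t} - 1}$ and do not increase the objective (as established in the proof of Lemma \ref{lemma2bc}), so $g\left ( W^{q_{n_t}}, B^{q_{n_t}}, x^{q_{n_t} - 1} \right ) \leq g\left ( W^{q_{n_t} - 1}, B^{q_{n_t} - 1}, x^{q_{n_t} - 1} \right ) = g^{q_{n_t} - 1}$. On the other side, $x^{q_{n_t} - 1}$ is feasible for the image update problem \eqref{bcs9} and $x^{q_{n_t}}$ is its minimizer, whence $g^{q_{n_t}} \leq g\left ( W^{q_{n_t}}, B^{q_{n_t}}, x^{q_{n_t} - 1} \right )$. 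Since $g^{q_{n_t} - 1} \to g^{*}$ and $g^{q_{n_t}} \to g^{*}$ by Lemma \ref{lemma2bc}, I conclude $g\left ( W^{q_{n_t}}, B^{q_{n_t}}, x^{q_{n_t} - 1} \right ) \to g^{*}$.

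Next I would pass to the limit term by term, exactly as in the proof of Lemma \ref{lemma7bc}: the squared $\ell_{2}$ terms are continuous in $\left ( W, B, x \right )$; $\chi(x^{q_{n_t} - 1}) = 0 \to 0 = \chi(\bar{x})$ and $\psi(B^{q_{n_t}}) = 0 \to 0 = \psi(B^{*})$; and $Q(W^{q_{n_t}}) \to Q(W^{*})$ by continuity of $Q$ at the nonsingular matrix $W^{*}$. This yields $g\left ( W^{*}, B^{*}, \bar{x} \right ) = g^{*}$. Now, by Lemma \ref{lemma6bc}, $x^{*}$ is a global minimizer of $g\left ( W^{*}, B^{*}, \cdot \right )$, and by Lemma \ref{lemma7bc}, $g\left ( W^{*}, B^{*}, x^{*} \right ) = g^{*}$; hence $g^{*}$ is the minimum value of $g\left ( W^{*}, B^{*}, \cdot \right )$, and $\bar{x}$ attains it. But that minimizer is unique: on the feasible set $\left \{ x : \left \| x \right \|_{2} \leq C \right \}$, which is convex, $g\left ( W^{*}, B^{*}, \cdot \right )$ is strictly convex, because its quadratic part is governed by the positive-definite matrix $\nu A^{H}A + \sum_{j=1}^{N} P_{j}^{T}(W^{*})^{H}W^{*}P_{j}$ — the second summand being $\succ 0$ since $W^{*}$ is invertible and the patches cover every pixel, exactly the $G \succ 0$ argument of Section \ref{imupstepg}. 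Therefore $\bar{x} = x^{*}$, and since the convergent subsequence was arbitrary, $x^{q_t - 1} \to x^{*}$.

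I expect the main obstacle to be seeing the sandwich in the second paragraph: the maps producing $\left ( W^{q_t}, B^{q_t} \right )$ from $x^{q_t - 1}$ (via the SVD-based formula \eqref{tru1bcs} and hard thresholding) are not continuous, so one cannot simply take limits of the iterates; instead one works at the level of objective \emph{values}, where monotonicity squeezes $g\left ( W^{q_t}, B^{q_t}, x^{q_t - 1} \right )$ between $g^{q_t}$ and $g^{q_t - 1}$. Everything afterward is a continuity/uniqueness argument paralleling Lemmas \ref{lemma7bc} and \ref{lemma6bc}.
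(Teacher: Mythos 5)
Your proposal is correct and follows essentially the same route as the paper: both arguments sandwich $g\left ( W^{q_{n_t}}, B^{q_{n_t}}, x^{q_{n_t}-1} \right )$ between $g^{q_{n_t}}$ and $g^{q_{n_t}-1}$ using monotonicity, pass to the limit term by term as in Lemma \ref{lemma7bc} to conclude $g\left ( W^{*}, B^{*}, \bar{x} \right ) = g^{*}$, and then invoke the uniqueness of the minimizer of $g\left ( W^{*}, B^{*}, \cdot \right )$ (the $G \succ 0$ argument of Section \ref{imupstepg}, which you make slightly more explicit via strict convexity) together with Lemma \ref{lemma6bc} to force $\bar{x} = x^{*}$, finishing by the standard subsequence argument for bounded sequences. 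No gaps.
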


\begin{proof}:
First, by Lemma \ref{lemma7bc}, we have that 
\begin{equation} \label{csbldf0}
g(W^{*}, B^{*}, x^{*}) =g^{*}
\end{equation}
Next, consider a convergent subsequence $\begin{Bmatrix} x^{q_{n_t} - 1} \end{Bmatrix}$ of (the bounded sequence) $\left \{ x^{q_{t} - 1} \right \}$ that converges to say $x^{**}$. Now, applying the same arguments as in Equation \eqref{csbldf00} (in the proof of Lemma \ref{lemma7bc}), but with respect to the (convergent) subsequence $\begin{Bmatrix}  x^{q_{n_t} - 1},  W^{q_{n_t}}, B^{q_{n_t}}
\end{Bmatrix}$, we have that 
\begin{equation} \label{csbldf2}
\lim_{t \to \infty} g(W^{q_{n_t}}, B^{q_{n_t}}, x^{q_{n_t}-1}) = g(W^{*}, B^{*}, x^{**}) 
\end{equation}
Now, the monotonic decrease of the objective in Algorithm A1 implies
\begin{equation} \label{csbldf1}
 g(W^{q_{n_t}}, B^{q_{n_t}}, x^{q_{n_t}})  \leq g(W^{q_{n_t}}, B^{q_{n_t}}, x^{q_{n_t}-1})  \leq g(W^{q_{n_t}-1}, B^{q_{n_t}-1}, x^{q_{n_t}-1}) 
\end{equation}
Taking the limit $t \to \infty$ all through \eqref{csbldf1} and using Lemma \ref{lemma2bc} (for the extreme left/right limits), and Equation \ref{csbldf2} (for the middle limit), immediately yields that $g(W^{*}, B^{*}, x^{**})  = g^{*}$. This result together with \eqref{csbldf0} implies $g(W^{*}, B^{*}, x^{**}) =g(W^{*}, B^{*}, x^{*})$.

 Now, we know by Lemma \ref{lemma6bc} that
\begin{equation} \label{csbldf11}
x^{*} \in \underset{x}{\arg\min} \; \,  g\left ( W^{*}, B^{*}, x \right )
\end{equation}
Furthermore, it was shown in Section \ref{imupstepg} that if the set of patches in our formulation (P1) cover all pixels in the image (always true for the case of periodically positioned overlapping image patches), then the minimization of $g\left ( W^{*}, B^{*}, x \right )$ with respect to $x$ has a unique solution. Therefore, $x^{*}$ is the unique minimizer in \eqref{csbldf11}. Combining this with the fact that $g(W^{*}, B^{*}, x^{**}) =g(W^{*}, B^{*}, x^{*})$ yields that $ x^{**} =  x^{*}$.
Since we worked with an arbitrary convergent subsequence  $\begin{Bmatrix} x^{q_{n_t} - 1} \end{Bmatrix}$ (of $\left \{ x^{q_{t} - 1} \right \}$) in the above proof,  we have that $x^{*}$ is the limit of any convergent subsequence of $\left \{ x^{q_{t} - 1} \right \}$.
Finally, since every convergent subsequence of the bounded sequence $\left \{ x^{q_{t} - 1} \right \}$ converges to $x^{*}$, it means that the (compact) set of accumulation points of $\left \{ x^{q_{t} - 1} \right \}$ coincides with $x^{*}$.
Since $x^{*}$ is the unique accumulation point of a bounded sequence, we therefore have \cite{malk122} that the sequence $\left \{ x^{q_{t} - 1} \right \}$ itself converges to $x^{*}$, which completes the proof.
\end{proof}

\begin{lemma}\label{lemma6bcsjdsd} \vspace{0.02in}
The iterate sequence $ \left \{ W^{t}, B^{t}, x^{t} \right \}$ in Algorithm A1 satisfies
\begin{equation}\label{csbldf900}
\lim_{t \to \infty} \left \| x^{t} - x^{t-1} \right \|_{2} =  0
\end{equation}
\end{lemma}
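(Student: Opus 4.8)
The plan is to argue by contradiction, reducing everything to Lemma \ref{lemma6bcwe} together with the boundedness of the iterates (Lemma \ref{lemma3bc}). First I would suppose \eqref{csbldf900} fails. Then there exist $\epsilon > 0$ and a subsequence $\{x^{q_t}\}$ of $\{x^t\}$ such that $\left\| x^{q_t} - x^{q_t - 1} \right\|_2 \geq \epsilon$ for every $t$.

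Next, since the whole iterate sequence $\{W^t, B^t, x^t\}$ is bounded by Lemma \ref{lemma3bc}, the subsequence $\{W^{q_t}, B^{q_t}, x^{q_t}\}$ is also bounded, so it has a convergent sub-subsequence. Passing to it (and keeping the label $q_t$, which preserves the lower bound $\left\| x^{q_t} - x^{q_t - 1} \right\|_2 \geq \epsilon$), I may assume $(W^{q_t}, B^{q_t}, x^{q_t}) \to (W^{*}, B^{*}, x^{*})$ for some accumulation point $(W^{*}, B^{*}, x^{*})$ of the iterate sequence. I would then invoke Lemma \ref{lemma6bcwe} for exactly this convergent subsequence, which yields that $\{x^{q_t - 1}\}$ also converges to the same limit $x^{*}$. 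Taking limits, $\left\| x^{q_t} - x^{q_t - 1} \right\|_2 \to \left\| x^{*} - x^{*} \right\|_2 = 0$, contradicting the standing assumption $\left\| x^{q_t} - x^{q_t - 1} \right\|_2 \geq \epsilon > 0$. This contradiction forces \eqref{csbldf900}.

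I expect no real obstacle: the substantive work is already packaged in Lemma \ref{lemma6bcwe}, whose proof relies on the monotone decrease of the objective (Lemma \ref{lemma2bc}), the common-objective-value property of accumulation points (Lemma \ref{lemma7bc}), and the uniqueness of the image-update solution established in Section \ref{imupstepg}. The only point requiring a little care is the nesting of subsequences — one first selects the indices along which the increment is bounded below, and only afterwards thins out further to obtain joint convergence of $(W, B, x)$ — so that Lemma \ref{lemma6bcwe} is applied to a genuinely convergent subsequence while the lower bound $\geq \epsilon$ on the increment survives the extraction.
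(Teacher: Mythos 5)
Your argument is correct and is essentially the paper's own proof in contrapositive form: the paper shows every convergent subsequence of $a^{t}=\left \| x^{t}-x^{t-1} \right \|_{2}$ has limit zero, while you assume a subsequence bounded below by $\epsilon$ and derive a contradiction, in both cases by extracting a convergent sub-subsequence of the bounded iterates (Lemma \ref{lemma3bc}) and invoking Lemma \ref{lemma6bcwe}. You also correctly handle the one delicate point, namely that Lemma \ref{lemma6bcwe} must be applied only after thinning to a subsequence along which the full triplet $(W,B,x)$ converges.
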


\begin{proof}: 
Consider the sequence $ \left \{ a^{t} \right \}$ with  $a^{t} \triangleq  \left \| x^{t} - x^{t-1} \right \|_{2} $. We will show below that every convergent subsequence of the bounded sequence $ \left \{ a^{t} \right \}$ converges to 0, thereby implying that 0 is both the limit inferior and limit superior of $ \left \{ a^{t} \right \}$, which means that the sequence $ \left \{ a^{t} \right \}$ itself converges to 0.


Now, consider a convergent subsequence $ \left \{ a^{q_{t}} \right \}$ of $ \left \{ a^{t} \right \}$. Since the sequence $\left \{ W^{q_t}, B^{q_t}, x^{q_t} \right \}$  is bounded, there exists a convergent subsequence $\left \{ W^{q_{n_t}}, B^{q_{n_t}}, x^{q_{n_t}} \right \}$ converging to say $\left ( W^{*}, B^{*}, x^{*} \right )$.
By Lemma \ref{lemma6bcwe}, we then have that $\begin{Bmatrix} x^{q_{n_t} - 1} \end{Bmatrix}$ also converges to $x^{*}$.
Thus, the subsequence  $ \left \{ a^{q_{n_t}} \right \}$ with $a^{q_{n_t}} \triangleq \left \|  x^{q_{n_{t}}} -  x^{q_{n_{t}} - 1} \right \|_{2}$ converges to $0$. Since, $ \left \{ a^{q_{n_t}} \right \}$ itself is a subsequence of a convergent sequence, we must have that $ \left \{ a^{q_{t}} \right \}$ converges to the same limit (i.e., 0). We have thus shown that zero is the limit of any convergent subsequence of  $ \left \{ a^{t} \right \}$.
\end{proof}


The next property is the partial global optimality of every accumulation point with respect to the sparse code. In order to establish this property, we need the following result.

\begin{lemma}\label{lemma4bc} 
Consider a  bounded matrix sequence $\left \{ Z^{k} \right \} $ with $Z^{k} \in \mathbb{C}^{n \times N}$, that converges to $Z^{*}$. 
Then, every accumulation point of $\left \{ H_{s}(Z^{k})  \right \}$ belongs to the set $\tilde{H_{s}}(Z^{*})$. 
\end{lemma}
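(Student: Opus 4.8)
The plan is to exploit the single defining property of the operator $H_s$ that we actually need — that $H_s(Z)$ is \emph{a} minimizer of $\left \| Z-B \right \|_F^2$ over the $s$-$\ell_0$ ball, i.e.\ $H_s(Z) \in \tilde{H_s}(Z)$ in the notation of \eqref{dfdc22} — and then simply pass the associated optimality inequality to the limit. Fix an accumulation point $B^*$ of $\left \{ H_s(Z^{k}) \right \}$ and a subsequence $\left \{ Z^{k_j} \right \}$ with $H_s(Z^{k_j}) \to B^*$; such a subsequence exists by definition, and note that $\left \{ H_s(Z^{k}) \right \}$ is automatically bounded since $\left \| H_s(Z^{k}) \right \|_F \leq \left \| Z^{k} \right \|_F$ and $\left \{ Z^{k} \right \}$ is bounded.

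The first step would be to check that $B^*$ is \emph{feasible} for the projection problem \eqref{dfdc22} defining $\tilde{H_s}(Z^*)$, namely $\left \| B^* \right \|_0 \leq s$. This follows from the lower semi-continuity of the $\ell_0$ ``norm'': $\left \| B^* \right \|_0 \leq \liminf_j \left \| H_s(Z^{k_j}) \right \|_0 \leq s$; equivalently, a coordinate that is zero for all terms of a convergent sequence of matrices is zero in the limit, so the limit of matrices each having at most $s$ nonzeros has at most $s$ nonzeros. The second step is the optimality-plus-continuity argument. For \emph{any} $\tilde{B} \in \mathbb{C}^{n \times N}$ with $\left \| \tilde{B} \right \|_0 \leq s$, the optimality of $H_s(Z^{k_j})$ for \eqref{bcs2} with data $Z^{k_j}$ gives $\left \| Z^{k_j} - H_s(Z^{k_j}) \right \|_F^2 \leq \left \| Z^{k_j} - \tilde{B} \right \|_F^2$. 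Letting $j \to \infty$ and using $Z^{k_j} \to Z^*$, $H_s(Z^{k_j}) \to B^*$, together with the joint continuity of $(Z,B) \mapsto \left \| Z - B \right \|_F^2$, yields $\left \| Z^* - B^* \right \|_F^2 \leq \left \| Z^* - \tilde{B} \right \|_F^2$. Since $\tilde{B}$ was an arbitrary element of the $s$-$\ell_0$ ball and $B^*$ itself lies in that ball by the first step, $B^*$ minimizes $\left \| Z^* - B \right \|_F^2$ over the ball, i.e.\ $B^* \in \tilde{H_s}(Z^*)$, as claimed.

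I do not expect any genuine obstacle here: the whole argument is a routine ``pass to the limit in an optimality inequality.'' The only points needing a word of care are (i) confirming that the limit $B^*$ inherits the constraint $\left \| B^* \right \|_0 \leq s$, so that it is an admissible candidate and not merely a point satisfying the comparison inequality — handled by the lower semi-continuity remark; and (ii) emphasizing that $\tilde{H_s}(Z^*)$ is the \emph{full} set of minimizers rather than the lexicographic tie-break chosen by $H_s(\cdot)$, so that $B^*$ need not equal $H_s(Z^*)$ — it merely belongs to $\tilde{H_s}(Z^*)$, which is exactly the assertion of the lemma.
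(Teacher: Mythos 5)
Your proof is correct. The two steps you isolate are exactly what is needed: (i) feasibility of the limit, via lower semicontinuity of $\left \| \cdot \right \|_{0}$ (your liminf argument is the right way to state it, since supports may vary along the sequence, but any entry that is nonzero in $B^{*}$ is eventually nonzero along the subsequence); and (ii) passing the optimality inequality $\left \| Z^{k_j} - H_{s}(Z^{k_j}) \right \|_{F}^{2} \leq \left \| Z^{k_j} - \tilde{B} \right \|_{F}^{2}$ to the limit for each fixed feasible $\tilde{B}$. Your closing remark is also the right one: nothing forces $B^{*} = H_{s}(Z^{*})$, only membership in $\tilde{H_{s}}(Z^{*})$, and your argument never uses the lexicographic tie-breaking rule — only that $H_{s}(Z^{k}) \in \tilde{H_{s}}(Z^{k})$.

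Note that the paper does not write out a proof at all; it defers to Lemma 10 of the transform-learning paper \cite{sbclsTS2}, whose argument is tailored to the projection operator itself (it reasons about the entries retained by $H_{s}$, their magnitudes relative to the $s$-th largest magnitude of the limit matrix, and the possible ties). Your route is a generic argmin-stability argument: it uses only the defining minimization property of $H_{s}$ together with joint continuity of the squared Frobenius distance and closedness (under limits) of the $s$-$\ell_{0}$ ball. That buys brevity and robustness — the same three lines would prove the analogous statement for any selection from the set of optimal projections onto any closed set — whereas the operator-specific analysis in the cited lemma gives finer information about which projections can arise as limits. For the purposes of this lemma, your argument is fully sufficient.
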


\begin{proof}: The proof is very similar to that for Lemma 10 in \cite{sbclsTS2}.
\end{proof}

\begin{lemma}\label{lemma5bc} \vspace{0.02in}
Any accumulation point $\left ( W^{*}, B^{*}, x^{*} \right )$ of the iterate sequence generated by Algorithm A1 satisfies
\begin{equation} \label{cstls3}
 B^{*} \in \underset{B}{\arg\min} \; \,  g\left ( W^{*}, B, x^{*} \right )
\end{equation}
Moreover, denoting by $X^{*} \in \mathbb{C}^{n \times N}$ the matrix whose $\mathrm{j^{th}}$ column is $P_{j} x^{*}$, for $1\leq j \leq N$, the above condition can be equivalently stated as
\begin{equation} \label{cstls4}
B^{*} \in \tilde{H_{s}}(W^{*}X^{*})
\end{equation}
\end{lemma}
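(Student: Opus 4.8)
The plan is to identify the explicit closed form of the sparse-code iterate produced by Algorithm A1, pass to the limit along the convergent subsequence using the continuity-type result of Lemma \ref{lemma4bc}, and then observe that \eqref{cstls4} is merely a restatement of \eqref{cstls3}.

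First I would recall from the algorithm pseudocode that, regardless of the number of inner alternations $\hat{M}$, the sparse-code output in outer iteration $t$ is exactly $B^{t} = H_{s}\!\left( W^{t} X^{t-1} \right)$, where $X^{t-1}$ has $P_{j} x^{t-1}$ as its columns (this is the same identity already used in \eqref{cstls1b}). Fix a subsequence $\left\{ W^{q_t}, B^{q_t}, x^{q_t} \right\}$ converging to the accumulation point $\left( W^{*}, B^{*}, x^{*} \right)$. The crucial point --- and the reason Lemma \ref{lemma6bcwe} was established --- is that $B^{q_t}$ depends on the \emph{previous} image iterate $x^{q_t - 1}$; by Lemma \ref{lemma6bcwe} the subsequence $\left\{ x^{q_t - 1} \right\}$ also converges to $x^{*}$, hence, by linearity and boundedness of the patch-extraction operators $P_j$, the matrices $X^{q_t - 1}$ converge to $X^{*}$. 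Consequently the bounded matrix sequence $Z^{q_t} \triangleq W^{q_t} X^{q_t - 1}$ (boundedness follows from that of $\left\{W^{t}\right\}$ and $\left\{x^{t}\right\}$, cf. Lemma \ref{lemma3bc}) converges to $Z^{*} \triangleq W^{*} X^{*}$.

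Next, since $B^{q_t} = H_{s}(Z^{q_t})$ and $B^{q_t} \to B^{*}$, the matrix $B^{*}$ is an accumulation point of $\left\{ H_{s}(Z^{q_t}) \right\}$, so Lemma \ref{lemma4bc}, applied with $\{Z^{q_t}\}$ in the role of $\{Z^{k}\}$, yields $B^{*} \in \tilde{H_{s}}(Z^{*}) = \tilde{H_{s}}(W^{*} X^{*})$, which is precisely \eqref{cstls4}. Finally, to obtain \eqref{cstls3} I would note that in $g(W^{*}, B, x^{*})$ the only terms depending on $B$ are $\sum_{j=1}^{N} \left\| W^{*} P_{j} x^{*} - b_{j} \right\|_{2}^{2} + \psi(B) = \left\| W^{*} X^{*} - B \right\|_{F}^{2} + \psi(B)$, and since $\psi$ is the barrier of the $s$-$\ell_0$ ball, minimizing this over $B$ is exactly Problem \eqref{dfdc22} with $Z = W^{*} X^{*}$, whose solution set is $\tilde{H_{s}}(W^{*} X^{*})$. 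Because $\psi(B^{*}) = 0$, $\chi(x^{*}) = 0$, and $Q(W^{*})$ is finite (all shown in the proof of Lemma \ref{lemma7bc}), conditions \eqref{cstls3} and \eqref{cstls4} are equivalent, which completes the argument.

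As for obstacles: none of the steps is deep given the preceding lemmas --- the one thing to be careful about is not confusing $x^{q_t}$ with $x^{q_t - 1}$ when taking limits, which is exactly the subtlety resolved by Lemma \ref{lemma6bcwe}; once that convergence is in hand, the proof is a short assembly of Lemma \ref{lemma4bc} and the definition of the projection operator.
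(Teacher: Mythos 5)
Your proposal is correct and follows essentially the same route as the paper's proof: it invokes Lemma \ref{lemma6bcwe} to get $x^{q_t-1}\to x^{*}$, applies Lemma \ref{lemma4bc} to the identity $B^{q_t}=H_{s}(W^{q_t}X^{q_t-1})$ to obtain \eqref{cstls4}, and then deduces \eqref{cstls3} from the definition \eqref{dfdc22}. The only difference is that you spell out the equivalence of \eqref{cstls3} and \eqref{cstls4} slightly more explicitly than the paper does, which is harmless.
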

\begin{proof}:
Consider the subsequence $\left \{ W^{q_t}, B^{q_t}, x^{q_t} \right \}$ (indexed by $q_t$) of the iterate sequence, that converges to the accumulation point $\left ( W^{*}, B^{*}, x^{*} \right )$. 
Then, by Lemma \ref{lemma6bcwe}, $\left \{ x^{q_{t} - 1} \right \}$ converges to $x^{*}$, and the following inequalities hold. 
\begin{equation} \label{cstls5}
B^{*} = \lim_{t\to \infty} B^{q_t} =  \lim_{t\to \infty} H_{s}\begin{pmatrix}
W^{q_t} X^{q_{t}  - 1}
\end{pmatrix} \in \tilde{H_{s}}(W^{*}X^{*})
\end{equation}
Since $W^{q_t} X^{q_{t} - 1} \to W^{*}X^{*}$ as $t \to \infty$, the last containment relationship above follows by Lemma \ref{lemma4bc}. While we have proved \eqref{cstls4}, the result in \eqref{cstls3} now immediately follows by applying the definition of $\tilde{H_{s}}(\cdot)$ from \eqref{dfdc22}.
\end{proof}

The next result pertains to the partial global optimality with respect to $W$, of every accumulation point in Algorithm A1. We use the following lemma (simple extension of Lemma 1 in \cite{sbclsTS2} to the complex field) to establish the result.

\begin{lemma}\label{lemma2s2bbc}
Consider a sequence $\left \{ M_{k} \right \}$ with $M_{k} \in \mathbb{C}^{n \times n}$, that converges to $M$. For each $k$, let $V_{k} \Sigma_{k} R_{k}^{H}$ denote a full SVD of $M_{k}$. Then, every accumulation point  $\left ( V, \Sigma, R \right )$ of the sequence $\left \{ V_{k}, \Sigma_{k}, R_{k} \right \}$ is such that $V \Sigma R^{H}$ is a full SVD of $M$. In particular, $\left \{ \Sigma_{k} \right \}$ converges to $\Sigma$, the $n \times n$ singular value matrix of $M$.
\end{lemma}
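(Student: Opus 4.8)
The plan is to exploit compactness of the unitary group together with the closedness of the set of ``valid'' SVD factorizations, and finally to invoke uniqueness of the singular value matrix. Since this is stated to be a straightforward extension of Lemma 1 of \cite{sbclsTS2} from the real orthogonal case to the complex unitary case, the argument is essentially a continuity/compactness argument and I expect no substantive obstacle.

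First I would note that the sequence $\{V_k, \Sigma_k, R_k\}$ is bounded: $V_k$ and $R_k$ are unitary, so $\|V_k\|_F = \|R_k\|_F = \sqrt{n}$, and $\|\Sigma_k\|_2 = \|M_k\|_2 \to \|M\|_2$ since $M_k \to M$ and the spectral norm is continuous, so $\{\Sigma_k\}$ is bounded as well. Hence at least one accumulation point exists. Let $(V, \Sigma, R)$ be any accumulation point and let $\{V_{k_t}, \Sigma_{k_t}, R_{k_t}\}$ be a subsequence converging to it. I would then verify that $V \Sigma R^H$ is a full SVD of $M$. The set of $n \times n$ unitary matrices is closed, so $V$ and $R$, being limits of unitary matrices, are unitary. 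Likewise the set of $n \times n$ diagonal matrices with nonnegative, nonincreasing diagonal entries is closed (both nonnegativity and the order constraints $\Sigma_{ii} \geq \Sigma_{i+1,i+1}$ pass to the limit), so $\Sigma$ inherits this structure from the $\Sigma_{k_t}$. Finally, by continuity of matrix multiplication and of the Hermitian transpose, $V_{k_t} \Sigma_{k_t} R_{k_t}^H \to V \Sigma R^H$; but $V_{k_t} \Sigma_{k_t} R_{k_t}^H = M_{k_t} \to M$, so $V \Sigma R^H = M$. Thus $V \Sigma R^H$ is indeed a full SVD of $M$.

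For the final assertion I would use that, although the factors $V, R$ in an SVD are non-unique, the singular value matrix of $M$ (the diagonal matrix of singular values in nonincreasing order), call it $\Sigma_M$, is uniquely determined by $M$. By the previous paragraph, every accumulation point of the bounded sequence $\{\Sigma_k\}$ equals $\Sigma_M$; a bounded sequence with a unique accumulation point converges to that point, so $\Sigma_k \to \Sigma_M = \Sigma$. The passage from orthogonal to unitary matrices affects none of these steps, so the proof of Lemma 1 of \cite{sbclsTS2} carries over verbatim with this substitution.
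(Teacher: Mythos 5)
Your argument is correct and follows essentially the same route the paper relies on: Lemma \ref{lemma2s2bbc} is stated there as a direct extension of Lemma 1 of \cite{sbclsTS2}, whose proof is exactly this compactness/continuity argument (boundedness of the unitary factors and of $\Sigma_k$, closedness of the unitary group and of the ordered nonnegative diagonal matrices, continuity of the product giving $V\Sigma R^{H}=M$, and uniqueness of the singular value matrix forcing $\Sigma_k\to\Sigma$). Nothing further is needed.
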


\begin{lemma}\label{lemma8bc} \vspace{0.02in}
Any accumulation point $\left ( W^{*}, B^{*}, x^{*} \right )$ of the iterate sequence generated by Algorithm A1 satisfies
\begin{equation} \label{cstls7}
 W^{*} \in \underset{W}{\arg\min} \; \,  g\left ( W, B^{*}, x^{*} \right )
\end{equation}
\end{lemma}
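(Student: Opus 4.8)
The plan is to extract the partial optimality of $W^{*}$ \emph{not} from the transform update that produced $W^{q_t}$, but from the next one. The obvious approach fails because the transform update step of Algorithm A1 at outer iteration $q_t$ minimizes $g$ over $W$ with the \emph{previous} sparse code held fixed (in the pseudocode, $\tilde B^{0}=B^{q_t-1}$), and the sequence $\{B^{q_t-1}\}$ need not converge to $B^{*}$ along the subsequence indexed by $q_t$; taking limits there would only yield optimality of $W^{*}$ with respect to some limit point of $\{B^{q_t-1}\}$. Instead I would look at outer iteration $q_t+1$, whose \emph{first} inner transform update produces a matrix $\widehat W^{\,q_t+1}$ that is a global minimizer of the transform-update problem \eqref{bcs7} with data matrix $X^{q_t}$ (the patches of $x^{q_t}$) and code matrix $B^{q_t}$; both $X^{q_t}\to X^{*}$ and $B^{q_t}\to B^{*}$ along the chosen subsequence, so this is the correct place to pass to the limit.

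Concretely I would proceed in five steps. (1) Using the monotone decrease of $g$ through the remainder of outer iteration $q_t+1$ (the sparse-coding step, any further inner alternations, and the image update are all nonincreasing), together with the fact that $\widehat W^{\,q_t+1}$ minimizes $g(\cdot,B^{q_t},x^{q_t})$ over $W$, sandwich
\[
g^{q_t+1}\;\le\; g\!\left(\widehat W^{\,q_t+1},B^{q_t},x^{q_t}\right)\;\le\; g^{q_t}.
\]
Since $g^{q_t}\to g^{*}$ and $g^{q_t+1}\to g^{*}$ by Lemma~\ref{lemma2bc}, the squeeze gives $g(\widehat W^{\,q_t+1},B^{q_t},x^{q_t})\to g^{*}$. (2) Exactly as in the proof of Lemma~\ref{lemma3bc}, from $\lambda Q(\widehat W^{\,q_t+1})\le g(\widehat W^{\,q_t+1},B^{q_t},x^{q_t})\le g^{0}$ and the coercivity of $Q$ in the singular values, conclude that $\{\widehat W^{\,q_t+1}\}$ is bounded and each of its accumulation points is nonsingular; pass to a convergent subsequence $\widehat W^{\,q_t+1}\to W^{**}$. (3) Pass to the limit in the closed form \eqref{tru1bcs}: the factor $L=(X^{q_t}(X^{q_t})^{H}+0.5\lambda I)^{1/2}$ converges to $L^{*}=(X^{*}(X^{*})^{H}+0.5\lambda I)^{1/2}$ by continuity of the positive-definite square root (note $X^{*}(X^{*})^{H}+0.5\lambda I\succ 0$), hence $L^{-1}X^{q_t}(B^{q_t})^{H}\to (L^{*})^{-1}X^{*}(B^{*})^{H}$; applying Lemma~\ref{lemma2s2bbc} to the SVDs of $L^{-1}X^{q_t}(B^{q_t})^{H}$ and using continuity of the remaining operations in \eqref{tru1bcs}, $W^{**}$ equals the transform returned by \eqref{tru1bcs} on the data $(X^{*},B^{*})$, which by Proposition~\ref{propel1bcs} is a global minimizer of \eqref{bcs7} for $(X^{*},B^{*})$; since the terms of $g(\cdot,B^{*},x^{*})$ not involving $W$ are constant, $W^{**}\in\underset{W}{\arg\min}\; g(W,B^{*},x^{*})$. (4) By continuity of $Q$ at the nonsingular $W^{**}$, and since the barrier terms $\psi$ and $\chi$ vanish along the sequence and at the limit (recall $\|B^{*}\|_{0}\le s$ and $\|x^{*}\|_{2}\le C$ from the proof of Lemma~\ref{lemma7bc}), $g(W^{**},B^{*},x^{*})=\lim_t g(\widehat W^{\,q_t+1},B^{q_t},x^{q_t})=g^{*}$. (5) Hence $\min_{W}g(W,B^{*},x^{*})=g(W^{**},B^{*},x^{*})=g^{*}$, while $g(W^{*},B^{*},x^{*})=g^{*}$ by Lemma~\ref{lemma7bc}; therefore $g(W^{*},B^{*},x^{*})=\min_{W}g(W,B^{*},x^{*})$, which is exactly \eqref{cstls7}.

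I expect the main obstacle to be precisely this ``off-by-one'' mismatch between the code used inside the transform update and the code $B^{t}$ the outer iteration outputs; it is what forces the passage to iteration $q_t+1$ and makes the result less immediate than, e.g., Lemma~\ref{lemma6bc}. A secondary technical point is that the transform-update minimizer need not be unique and the SVD in \eqref{tru1bcs} is not a continuous function of the data, so the limit must be taken along a further subsequence and the SVD components controlled via Lemma~\ref{lemma2s2bbc}. Finally, the argument is insensitive to the number $\hat M$ of inner alternations, since it uses only the \emph{first} inner transform update of outer iteration $q_t+1$ together with the fact that all subsequent steps within that outer iteration do not increase $g$.
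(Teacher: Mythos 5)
Your proposal is correct and takes essentially the same route as the paper's own proof: both exploit the \emph{first} inner transform update of outer iteration $q_t+1$ (whose data $(X^{q_t},B^{q_t})$ converge to $(X^{*},B^{*})$ along the subsequence), pass to the limit in the closed form \eqref{tru1bcs} via Lemma \ref{lemma2s2bbc} and Proposition \ref{propel1bcs} to obtain a minimizer $W^{**}$ of $g(\cdot,B^{*},x^{*})$, and then transfer optimality to $W^{*}$ through the common objective value $g^{*}$. The only cosmetic difference is that you spell out the sandwich $g^{q_t+1}\le g\bigl(\widehat W^{\,q_t+1},B^{q_t},x^{q_t}\bigr)\le g^{q_t}$ explicitly, which the paper handles implicitly by invoking the arguments of Lemma \ref{lemma7bc}.
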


\begin{proof}:
The proposed Algorithm A1 involves $\hat{M}$ alternations between the transform update and sparse coding steps in every outer iteration. At a certain (outer) iteration $t$, let us denote the intermediate transform update outputs as $\tilde{W}^{\left ( l, t \right )}  $, $1 \leq l \leq \hat{M}$. Then, $W^{t} = \tilde{W}^{\left ( \hat{M}, t \right )}  $. We will only use the sequence $\begin{Bmatrix}\tilde{W}^{\left ( 1, t \right )}  
\end{Bmatrix}$ (i.e., the intermediate outputs for $l=1$) in the following proof. 


Consider the subsequence $\left \{ W^{q_t}, B^{q_t}, x^{q_t} \right \}$ (indexed by $q_t$) of the iterate sequence in Algorithm A1, that converges to the accumulation point $\left ( W^{*}, B^{*}, x^{*} \right )$. 
Then, $\tilde{W}^{\left ( 1, q_{t}+1 \right )} $ is computed as follows, using the full SVD $ V^{q_t}  \Sigma^{q_t}  \left ( R^{q_t}  \right )^{H}$, of $\left ( L^{q_t} \right )^{-1}X^{q_t}\left ( B^{q_t} \right )^{H}$, where $ L^{q_t} = \begin{pmatrix}
X^{q_t}\left ( X^{q_t} \right )^{H} + 0.5 \lambda I
\end{pmatrix}^{1/2}$. 
\begin{equation} \label{cstls7a0}
\tilde{W}^{\left ( 1, q_{t}+1 \right )} =0.5 R^{q_{t}} \left(\Sigma^{q_{t}} + \left ( \left ( \Sigma^{q_{t}} \right )^{2}+2\lambda I \right )^{\frac{1}{2}}\right)\left ( V^{q_{t}} \right )^{H}\left ( L^{q_t} \right )^{-1} 
\end{equation}
To prove the lemma, we will consider the limit $t \to \infty$ in \eqref{cstls7a0}. In order to take this limit, we need the following results.
First, due to the continuity of the matrix square root and matrix inverse functions at positive definite matrices, we have that the following limit holds, where $X^{*} \in \mathbb{C}^{n \times N}$ has $P_{j} x^{*}$ ($1 \leq j \leq N$) as its columns.
\begin{equation*}  
\lim_{t \to \infty}\left ( L^{q_t} \right )^{-1} = \lim_{t \to \infty}\begin{pmatrix}
X^{q_t}\left ( X^{q_t} \right )^{H} + 0.5 \lambda I
\end{pmatrix}^{-1/2} = \begin{pmatrix}
X^{*}\left ( X^{*} \right )^{H} + 0.5 \lambda I
\end{pmatrix}^{-1/2} 
\end{equation*}
Next, defining $L^{*} \triangleq \begin{pmatrix} X^{*}\left ( X^{*} \right )^{H} + 0.5 \lambda I
\end{pmatrix}^{1/2}$, we also have that
\begin{equation} \label{cstls7b}
 \lim_{t \to \infty } \left ( L^{q_t} \right )^{-1}  X^{q_t} \left ( B^{q_t}  \right )^{H}  = \left ( L^{*} \right )^{-1} X^{*}  \left ( B^{*}  \right )^{H}
\end{equation}
Applying Lemma \ref{lemma2s2bbc} to \eqref{cstls7b}, we have that every accumulation point $\left ( V^{*}, \Sigma^{*}, R^{*} \right )$ of the sequence $\left \{ V^{q_t}, \Sigma^{q_t}, R^{q_t} \right \}$ is such that $V^{*} \Sigma^{*}\left ( R^{*} \right )^{H}$ is a full SVD of $\left ( L^{*} \right )^{-1} X^{*}  \left ( B^{*}  \right )^{H}$. Now, consider a convergent subsequence $\left \{ V^{q_{n_t}}, \Sigma^{q_{n_t}}, R^{q_{n_t}} \right \}$ of $\left \{ V^{q_t}, \Sigma^{q_t}, R^{q_t} \right \}$, with limit $\left ( V^{*}, \Sigma^{*}, R^{*} \right )$. Then, taking the limit $t \to \infty$ in \eqref{cstls7a0} along this subsequence, we have
\begin{equation*}
W^{**}  \triangleq \lim_{t \to \infty } \tilde{W}^{\left ( 1, q_{n_{t}}+1 \right )}   = 0.5 R^{*} \left(\Sigma^{*} + \left ( \left ( \Sigma^{*} \right )^{2}+2\lambda I \right )^{\frac{1}{2}}\right)\left ( V^{*} \right )^{H}\left ( L^{*} \right )^{-1} 
\end{equation*}
Combining this result with the aforementioned definitions of the square root $L^{*}$ and the full SVD $V^{*} \Sigma^{*}\left ( R^{*} \right )^{H}$, and applying Proposition \ref{propel1bcs}, we get
\begin{equation} \label{cstls7c}
W^{**} \in \underset{W}{\arg\min} \; \,  g\left ( W, B^{*}, x^{*} \right )
\end{equation}
Finally, applying the same arguments as in the proof of Lemma \ref{lemma7bc} to the subsequence $\begin{Bmatrix} B^{q_{n_t}}, x^{q_{n_t}}, \tilde{W}^{\left ( 1, q_{n_{t}}+1 \right )}
\end{Bmatrix}$, we easily get that $g^{*}= g(W^{**}, B^{*}, x^{*})$. Since, by Lemma \ref{lemma7bc}, we also have that $g^{*} = g(W^{*}, B^{*}, x^{*})$, we get $g(W^{*}, B^{*}, x^{*}) = g(W^{**}, B^{*}, x^{*})$, which together with \eqref{cstls7c} immediately establishes the required result \eqref{cstls7}.
\end{proof}

The following lemma establishes that every accumulation point of the iterate sequence in Algorithm A1 is a critical point of the objective $g(W, B, x)$. All derivatives or sub-differentials are computed with respect to the real and imaginary parts of the corresponding variables/vectors/matrices below.

\begin{lemma}\label{lemma9bc} \vspace{0.02in}
Every accumulation point $\left ( W^{*}, B^{*}, x^{*} \right )$ of the iterate sequence generated by Algorithm A1 is a critical point of the objective $g(W, B, x)$ satisfiying
\begin{equation} \label{cstls8}
0 \in \partial  g\left ( W^{*}, B^{*}, x^{*} \right )
\end{equation}
\end{lemma}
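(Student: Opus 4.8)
The plan is to exploit the separable structure of the nonsmooth part of $g$. Write $g(W,B,x) = \tilde g(W,B,x) + \psi(B) + \chi(x)$, where $\tilde g(W,B,x) \triangleq \nu \left \| Ax-y \right \|_2^2 + \sum_{j=1}^N \left \| W P_j x - b_j \right \|_2^2 + \lambda Q(W)$ gathers all the smooth terms. First I would check that $\tilde g$ is continuously differentiable, as a function of the real and imaginary parts of $(W,B,x)$, on a neighborhood of the accumulation point $(W^*,B^*,x^*)$: every term of $\tilde g$ is polynomial in these real variables except $-\lambda \log \left | \det W \right |$, which is real-analytic wherever $\det W \neq 0$. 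The proof of Lemma~\ref{lemma7bc} already shows $\left | \det W^* \right | \geq e^{-g^0/\lambda} > 0$, so such a neighborhood exists and $\tilde g \in C^1$ there.

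Next I would apply standard subdifferential calculus (see \cite{vari1, vari2}). Because $\tilde g$ is $C^1$ near $(W^*,B^*,x^*)$, the exact sum rule gives $\partial g(W^*,B^*,x^*) = \nabla \tilde g(W^*,B^*,x^*) + \partial(\psi+\chi)(W^*,B^*,x^*)$, and since $\psi$ and $\chi$ depend on disjoint blocks of variables, $\partial(\psi+\chi)(W^*,B^*,x^*) = \{0\} \times \partial \psi(B^*) \times \partial \chi(x^*)$ (the $\{0\}$ factor corresponding to the $W$-block). Hence \eqref{cstls8} is equivalent to the three blockwise inclusions $\nabla_W \tilde g(W^*,B^*,x^*) = 0$, $-\nabla_B \tilde g(W^*,B^*,x^*) \in \partial \psi(B^*)$, and $-\nabla_x \tilde g(W^*,B^*,x^*) \in \partial \chi(x^*)$. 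Each of these follows from the partial global optimality already established: Lemma~\ref{lemma8bc} says $W^*$ minimizes the smooth map $W \mapsto \tilde g(W,B^*,x^*)$, so its gradient vanishes there; Lemma~\ref{lemma5bc} says $B^*$ minimizes $B \mapsto \tilde g(W^*,B,x^*) + \psi(B)$, and since a minimizer is a critical point and $\tilde g$ is $C^1$ in $B$, the sum rule yields $0 \in \nabla_B \tilde g(W^*,B^*,x^*) + \partial \psi(B^*)$; the $x$-block is handled identically via Lemma~\ref{lemma6bc} and the convex barrier $\chi$. Assembling the three inclusions through the displayed decomposition gives $0 \in \partial g(W^*,B^*,x^*)$.

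The point to be careful about --- and the reason \eqref{cstls8} does not follow from Lemmas~\ref{lemma6bc},~\ref{lemma5bc},~\ref{lemma8bc} by a one-line argument --- is that, for the limiting subdifferential, $0$ belonging to each partial subdifferential does not in general imply $0 \in \partial g$. The upgrade to full criticality is legitimate here only because the nonsmooth part of $g$ is block-separable and the coupling term $\sum_{j=1}^N \left \| W P_j x - b_j \right \|_2^2$ together with $Q(W)$ is $C^1$ in a neighborhood of the (nonsingular) limit $W^*$. Thus the main obstacle is really the bookkeeping needed to justify the sum rule and the separable-sum rule at $(W^*,B^*,x^*)$, for which the nonsingularity of $W^*$ (inherited from Lemma~\ref{lemma7bc}) is essential; once those structural facts are in place, the rest is routine.
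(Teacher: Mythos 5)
Your proof is correct and follows essentially the same route as the paper: both deduce blockwise criticality from the partial global optimality established in Lemmas \ref{lemma8bc}, \ref{lemma5bc}, and \ref{lemma6bc} (using the nonsingularity of $W^{*}$, inherited from the argument in Lemma \ref{lemma7bc}, to get smoothness in the $W$-block), and then assemble the three conditions into $0 \in \partial g\left ( W^{*}, B^{*}, x^{*} \right )$ via the product structure of the subdifferential for a smooth coupling term plus block-separable nonsmooth terms. The only difference is cosmetic: the paper invokes Proposition 3 of \cite{Attouchaa} for the decomposition \eqref{cstls8d}, whereas you rederive that same structural fact from the exact sum rule and the separability of $\psi(B) + \chi(x)$.
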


\begin{proof}:
Consider the subsequence $\left \{ W^{q_t}, B^{q_t}, x^{q_t} \right \}$ (indexed by $q_t$) of the iterate sequence, that converges to the accumulation point $\left ( W^{*}, B^{*}, x^{*} \right )$. By Lemmas \ref{lemma8bc}, \ref{lemma6bc}, and \ref{lemma5bc}, we have that
\begin{align}
 W^{*} & \in \underset{W}{\arg\min} \; \,  g\left ( W, B^{*}, x^{*} \right ) \label{cstls8a}\\
 x^{*} & \in \underset{x}{\arg\min} \; \,  g\left ( W^{*}, B^{*}, x \right ) \label{cstls8b}\\
 B^{*} & \in \underset{B}{\arg\min} \; \,  g\left ( W^{*}, B, x^{*} \right ) \label{cstls8c}
\end{align}
The function $g(W, B, x)$ is continuously differentiable with respect to $W$ at non-singular points $W$. Since $W^{*}$ is a (non-singular) partial global minimizer in \eqref{cstls8a}, we have as a necessary condition that $\nabla_{W} g\left ( W^{*}, B^{*}, x^{*} \right ) = 0$. Next, recall the statement in Section \ref{prelim}, that a necessary condition for $z \in \mathbb{R}^{q}$ to be a minimizer of some function $\phi: \mathbb{R}^{q} \mapsto (-\infty, + \infty]$ is that $z$ satisfies $0 \in \partial \phi(z)$.
Now, since $x^{*}$ and $B^{*}$ are partial global minimizers in \eqref{cstls8b} and \eqref{cstls8c}, respectively, we therefore have that $0 \in \partial  g_{x}\left ( W^{*}, B^{*}, x^{*} \right )$ and $0 \in \partial  g_{B}\left ( W^{*}, B^{*}, x^{*} \right )$, respectively. It is also easy to derive these conditions directly using the definition (Definition \ref{def1}) of the sub-differential.

Finally, (see Proposition 3 in \cite{Attouchaa}) the subdifferential $\partial  g $ at $\left ( W^{*}, B^{*}, x^{*} \right )$ satisfies
\begin{equation} \label{cstls8d}
 \partial  g\left ( W^{*}, B^{*}, x^{*} \right ) = 
\nabla_{W} g\left ( W^{*}, B^{*}, x^{*} \right )\times 
\partial  g_{B}\left ( W^{*}, B^{*}, x^{*} \right )\times
\partial  g_{x}\left ( W^{*}, B^{*}, x^{*} \right )
\end{equation}
Now, using the preceding results, we easily have that $0 \in \partial  g\left ( W^{*}, B^{*}, x^{*} \right )$ above. Thus, every accumulation point in Algorithm A1 is a critical point of the objective.
\end{proof}

The following two lemmas establish pairwise partial local optimality of the accumulation points in Algorithm A1. Here, $X^{*} \in \mathbb{C}^{n \times N}$ is the matrix with $P_{j}x^{*}$ as its columns.

\begin{lemma}\label{lemma10bc} \vspace{0.02in}
Every accumulation point $\left ( W^{*}, B^{*}, x^{*} \right )$ of the iterate sequence generated by Algorithm A1 is a partial minimizer of the objective $g(W, B, x)$ with respect to $(W, B)$, in the sense of \eqref{cnbcs4b}, for sufficiently small $dW \in \mathbb{C}^{n \times n}$, and all $\Delta B \in \mathbb{C}^{n \times N}$ in the union of the regions R1 and R2 in Theorem \ref{theorem1bc}. Furthermore, if $ \left \|  W^{*}X^{*} \right \|_{0} \leq s$, then the $\Delta B$ in \eqref{cnbcs4b} can be arbitrary.
\end{lemma}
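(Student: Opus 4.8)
The plan is to reduce the claim to a statement about the pair $(W,B)$ with the image frozen at $x^*$. In $g$ (see \eqref{cnbcs1}) the terms $\nu\|Ax^*-y\|_2^2$ and $\chi(x^*)$ are then constants, and $\chi(x^*)=0$ since $\|x^*\|_2\le C$; so, writing $X^*$ for the patch matrix of $x^*$ and $E^*\triangleq W^*X^*-B^*$, it suffices to prove $\Phi(W^*+dW,\,B^*+\Delta B)\ge\Phi(W^*,B^*)$, where $\Phi(W,B)\triangleq\|WX^*-B\|_F^2+\lambda Q(W)+\psi(B)$, for $\|dW\|_F\le\epsilon'$ and $\Delta B$ in R1 $\cup$ R2. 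First I would collect the facts already proved. By Lemma \ref{lemma8bc}, $W^*$ is non-singular and globally minimizes $\Phi(\cdot,B^*)$; since $Q$ is differentiable at non-singular points, this gives (i) the stationarity identity $2E^*(X^*)^H+\lambda\nabla Q(W^*)=0$ and (ii) the inequality $\|dW\,X^*\|_F^2+\lambda\rho_Q(dW)\ge 0$ for every $dW$, where $\rho_Q(dW)$ denotes the first-order Taylor remainder of $Q$ at $W^*$. By Lemma \ref{lemma5bc}, $B^*\in\widetilde H_s(W^*X^*)$, so $\psi(B^*)=0$, and there is an index set $\Lambda$ of $s$ largest-magnitude entries of $W^*X^*$ with $B^*=\Pi_\Lambda(W^*X^*)$ and $E^*=\Pi_{\Lambda^c}(W^*X^*)$; in particular $E^*$ vanishes on $\Lambda$, and (when $\|W^*X^*\|_0>s$) each entry of $B^*$ indexed by $\Lambda$ has magnitude at least $\rho_s(W^*X^*)>0$.

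Next I would expand the difference, using (i) to cancel the term linear in $dW$:
\[
\Phi(W^*+dW,\,B^*+\Delta B)-\Phi(W^*,B^*)=\lambda\rho_Q(dW)-2\,\mathrm{Re}\,\mathrm{tr}\{E^*\Delta B^H\}+\|dW\,X^*-\Delta B\|_F^2+\psi(B^*+\Delta B).
\]
If $\psi(B^*+\Delta B)=+\infty$ the claim is trivial, so assume $\|B^*+\Delta B\|_0\le s$. For $\Delta B\in\mathrm{R1}$ one has $\mathrm{Re}\,\mathrm{tr}\{E^*\Delta B^H\}\le 0$, so that term only helps. For $\Delta B\in\mathrm{R2}$ with $\|W^*X^*\|_0>s$, so $|\Lambda|=s$, I would argue that $\|B^*+\Delta B\|_0\le s$ forces $\mathrm{supp}(B^*+\Delta B)=\Lambda$: each entry of $B^*$ on $\Lambda$ has magnitude $\ge\rho_s(W^*X^*)>\|\Delta B\|_\infty$, so it stays nonzero, hence $\Lambda\subseteq\mathrm{supp}(B^*+\Delta B)$, and equality of cardinalities forces equality of sets; thus $\Delta B$ is supported on $\Lambda$, on which $E^*=0$, so the middle term again vanishes. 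In both cases the difference is at least $\lambda\rho_Q(dW)+\|dW\,X^*-\Delta B\|_F^2$, and minimizing the square over the admissible $\Delta B$ (in R2, over matrices supported on $\Lambda$; in R1, separating the feasible $\Delta B$ that are supported on $\Lambda$ from those with $\|\Delta B\|_\infty\ge\rho_s(W^*X^*)$, the latter being handled trivially for $\epsilon'$ small because then $\|dW\,X^*-\Delta B\|_F^2$ dominates $|\lambda\rho_Q(dW)|$) reduces the whole problem to
\[
\lambda\rho_Q(dW)+\|\Pi_{\Lambda^c}(dW\,X^*)\|_F^2\ge 0\qquad\text{for all }\|dW\|_F\le\epsilon',
\]
i.e.\ that $W^*$ is a local minimizer of $\widetilde g(W)\triangleq\|\Pi_{\Lambda^c}(WX^*)\|_F^2+\lambda Q(W)$, which differs from $\Phi(\cdot,B^*)$ precisely by the quadratic term annihilated by the hard-thresholding structure of $B^*$.

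I would dispose of the clean sub-case $\|W^*X^*\|_0\le s$ first — the ``$\Delta B$ arbitrary'' clause. Then $\Lambda^c$ contributes nothing, $B^*=W^*X^*$, so $E^*=0$, and (i) forces $\nabla Q(W^*)=0$; but the stationary points of $Q(W)=-\log|\det W|+0.5\|W\|_F^2$ are exactly the unitary matrices, at which $Q$ attains its global minimum (cf.\ \cite{sabres}). Hence $\rho_Q(dW)=Q(W^*+dW)-Q(W^*)\ge 0$ for \emph{every} $dW$, and the displayed difference equals $\lambda\rho_Q(dW)+\|dW\,X^*-\Delta B\|_F^2+\psi(B^*+\Delta B)\ge 0$, with no restriction on $\Delta B$. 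In the remaining case $\|W^*X^*\|_0>s$ one has $|\Lambda|=s$ and $E^*\ne 0$; here $W^*$ need not be unitary and $\rho_Q$ need not be nonnegative (the $-\log|\det W|$ penalty is not convex), so I would combine the global-optimality inequality (ii) with the orthogonal decomposition $\|dW\,X^*\|_F^2=\|\Pi_\Lambda(dW\,X^*)\|_F^2+\|\Pi_{\Lambda^c}(dW\,X^*)\|_F^2$, re-express $\widetilde g(W^*+dW)-\widetilde g(W^*)$ through $\Phi(W^*+dW,B^*)-\Phi(W^*,B^*)\ge 0$, and pick $\epsilon'$ depending on $W^*$, $X^*$, $\lambda$ and $\rho_s(W^*X^*)$, following the argument of \cite{sbclsTS2}. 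The main obstacle is exactly this last step: controlling the non-convex $\log\det$ contribution $\rho_Q(dW)$ against the available sum-of-squares terms, since $W^*$ is only known to be globally optimal within the $W$-block and is in general not a point of local convexity of $Q$ — it is the thresholding structure of $B^*$ (which shrinks the residual $E^*$ onto the entries off $\Lambda$) that ultimately makes this control possible; the remaining pieces (the barrier-function bookkeeping and the ``$\|W^*X^*\|_0\le s$'' sub-case) are routine.
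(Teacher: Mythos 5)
Your proposal follows essentially the same route as the paper's proof: the same ingredients from Lemmas \ref{lemma5bc} and \ref{lemma8bc} (the hard-thresholding characterization $B^{*}\in\tilde{H_{s}}(W^{*}X^{*})$ and the stationarity of $W^{*}$ used to cancel the first-order $dW$ terms), the same restriction to sparsity-preserving $\Delta B$, the same half-space argument for R1 and support-containment argument for R2 to dispose of the cross term $\left \langle W^{*}X^{*}-B^{*}, \Delta B \right \rangle$, and the same endgame for the case $\left \| W^{*}X^{*} \right \|_{0}\leq s$. The one step you leave open --- showing that the non-convex $\log\det$ remainder is dominated for $\left \| dW \right \|_{F}\leq \epsilon'$, i.e.\ the inequality \eqref{cstls10d} --- is precisely the step the paper itself does not re-derive but imports from equations (43)--(46) of the proof of Lemma 9 in \cite{sbclsTS2}, so deferring to that argument is consistent with the paper; your reduction to local minimality of $\left \| \Pi_{\Lambda^{c}}(WX^{*}) \right \|_{F}^{2}+\lambda Q(W)$ is just that inequality evaluated at the worst admissible $\Delta B$. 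Two small remarks: your closing intuition that the thresholding structure of $B^{*}$ is what "makes this control possible" is misplaced --- that structure only serves to annihilate the cross term (which you already exploited), while the control of the remainder rests on the cited small-$dW$ argument with $\epsilon'$ depending on $W^{*}$; and your self-contained treatment of the sub-case $\left \| W^{*}X^{*} \right \|_{0}\leq s$ (where $E^{*}=0$ forces $\nabla Q(W^{*})=0$, hence $W^{*}$ unitary and the $Q$-remainder nonnegative) is a nice touch not spelled out in the paper, which simply invokes \eqref{cstls10d} there as well.
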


\begin{proof}:
Consider the subsequence $\left \{ W^{q_t}, B^{q_t}, x^{q_t} \right \}$ (indexed by $q_t$) of the iterate sequence, that converges to the accumulation point $\left ( W^{*}, B^{*}, x^{*} \right )$. 
First, by Lemmas \ref{lemma5bc} and \ref{lemma8bc}, we have
\begin{align}
B^{*} & \in \tilde{H_{s}}(W^{*}X^{*}) \label{cstls10a} \\
2 W^{*} X^{*} \left ( X^{*} \right )^{H} & - 2 B^{*}\left ( X^{*} \right )^{H} + \lambda W^{*} - \lambda \left ( W^{*} \right )^{-H} = 0 \label{cstls10b}
\end{align}
where \eqref{cstls10b} follows from the first order conditions for partial global optimality of $W^{*}$ in \eqref{cstls7}. The accumulation point $\left ( W^{*}, B^{*}, x^{*} \right )$ also satisfies $\psi (B^{*})=0$ and $\chi (x^{*})=0$.

Now, considering perturbations $dW \in \mathbb{C}^{n \times n}$, and $\Delta B \in \mathbb{C}^{n \times N}$, we have that
\begin{align} 
\nonumber & g(W^{*} + dW,  B^{*} +  \Delta B, x^{*})  =  \left \| W^{*}X^{*} - B^{*}  + (dW)X^{*} - \Delta B \right \|_{F}^{2}   \\ 
& + \nu \left \| Ax^{*}-y \right \|_{2}^{2} + 0.5 \lambda \,  \left \| W^{*} + dW \right \|_{F}^{2} - \lambda \,  \log \,\left | \mathrm{det \,} (W^{*} + dW) \right | + \psi (B^{*}+ \Delta B) \label{cstls10c}
\end{align}
In order to prove the condition \eqref{cnbcs4b} in Theorem \ref{theorem1bc}, it suffices to consider \emph{sparsity preserving} perturbations $\Delta B$, that is $\Delta B \in \mathbb{C}^{n \times N}$ such that $B^{*} + \Delta B$ has sparsity $\leq s$. Otherwise $ g(W^{*} + dW,  B^{*} +  \Delta B, x^{*}) = + \infty > g(W^{*}, B^{*}, x^{*})$ trivially. 
Therefore, we only consider sparsity preserving $\Delta B$ in the following, for which $\psi (B^{*}+ \Delta B) = 0$ in \eqref{cstls10c}. 

Since the image $x^{*}$ is fixed here, we can utilize \eqref{cstls10a} and \eqref{cstls10b}, and apply similar arguments as in the proof of Lemma 9 (equations (43)-(46)) in \cite{sbclsTS2}, to simplify the right hand side in \eqref{cstls10c}. The only difference is that the matrix transpose $(\cdot)^{T}$ operations in \cite{sbclsTS2} are replaced with Hermitian $(\cdot)^{H}$ operations here, and the operation $\left \langle Q, R \right \rangle$ involving two matrices (or, vectors) $Q$ and $R$ in \cite{sbclsTS2} is redefined \footnote{We include the $Re(\cdot)$ operation in the definition here, which allows for simpler notations in the rest of the proof. However, the $\left \langle Q, R \right \rangle$ defined in \eqref{trcincomp} is no longer the conventional inner product.} here as 
\begin{equation} \label{trcincomp}
\left \langle Q, R \right \rangle \triangleq Re\left \{ \mathrm{tr} (Q R^{H}) \right \}
\end{equation}
Upon such simplifications, we can conclude \cite{sbclsTS2} that $\exists \epsilon' > 0$ depending on $W^{*}$ such that whenever $\left \| dW \right \|_{F} < \epsilon'$, we have
\begin{equation} \label{cstls10d}
g(W^{*} + dW, B^{*} + \Delta B, x^{*} ) \geq g(W^{*}, B^{*}, x^{*})- 2 \left \langle W^{*}X^{*} - B^{*}, \Delta B \right \rangle
\end{equation}
Consider first the case of $\Delta B$ in region R1.
Then, the term $-\left \langle W^{*}X^{*} - B^{*}, \Delta B \right \rangle$ above is trivially non-negative for such $\Delta B$ in Theorem \ref{theorem1bc}, and therefore, $g(W^{*} + dW, B^{*} + \Delta B,  x^{*}) \geq g(W^{*}, B^{*},  x^{*})$ for $\Delta B \in \mathrm{R1}$.

Next, consider the case of $\Delta B$ in region R2.
Then, when $ \left \|  W^{*}X^{*} \right \|_{0} > s$, it is easy to see that any such sparsity preserving $\Delta B$ in region R2 in Theorem \ref{theorem1bc} will have its support (i.e., non-zero locations) contained in the support of $B^{*}  \in \tilde{H_{s}}(W^{*}X^{*})$. 
Therefore, $\left \langle W^{*}X^{*} - B^{*}, \Delta B \right \rangle = 0$ in this case.
On the other hand,  if $ \left \|  W^{*}X^{*} \right \|_{0} \leq s$, then by \eqref{cstls10a}, $W^{*}X^{*} - B^{*} = 0$.
Therefore, by these arguments, $\left \langle W^{*}X^{*} - B^{*}, \Delta B \right \rangle = 0$ for any sparsity preserving $\Delta B \in \mathrm{R2}$.
This result together with \eqref{cstls10d} implies $g(W^{*} + dW, B^{*} + \Delta B, x^{*}) \geq g(W^{*}, B^{*}, x^{*})$ for any $\Delta B \in \mathrm{R2}$.

Finally, if $ \left \|  W^{*}X^{*} \right \|_{0} \leq s$, then since $W^{*}X^{*} - B^{*} = 0$ in \eqref{cstls10d}, therefore  in this case, $\Delta B$ in \eqref{cnbcs4b} can be arbitrary. 
\end{proof}


\begin{lemma}\label{lemma11bc} \vspace{0.02in}
Every accumulation point $\left ( W^{*}, B^{*}, x^{*} \right )$ of the iterate sequence generated by Algorithm A1 is a partial minimizer of the objective $g(W, B, x)$ with respect to $(B, x)$, in the sense of \eqref{cnbcs5b}, for all $\tilde{\Delta } x \in \mathbb{C}^{p}$, and all $\Delta B \in \mathbb{C}^{n \times N}$ in the union of the regions R1 and R2 in Theorem \ref{theorem1bc}. Furthermore, if $ \left \|  W^{*}X^{*} \right \|_{0} \leq s$, then the $\Delta B$ in \eqref{cnbcs5b} can be arbitrary.
\end{lemma}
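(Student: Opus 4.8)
The plan is to keep $W=W^*$ frozen and expand $g$ about the accumulation point, leaning on the two partial optimality properties already in hand: $x^*\in\arg\min_{x} g(W^*,B^*,x)$ from Lemma \ref{lemma6bc}, and $B^*\in\tilde{H_s}(W^*X^*)$ from Lemma \ref{lemma5bc}, together with $\psi(B^*)=0$, $\chi(x^*)=0$, and $Q(W^*)$ finite. First I would discard the trivial cases: if $\|x^*+\tilde{\Delta} x\|_2>C$ or if $B^*+\Delta B$ has more than $s$ nonzeros, then $\chi$ or $\psi$ is $+\infty$, so the left side of \eqref{cnbcs5b} equals $+\infty>g^*$ and there is nothing to prove. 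It therefore remains to treat \emph{feasible} $\tilde{\Delta} x$ (those with $\|x^*+\tilde{\Delta} x\|_2\leq C$) and \emph{sparsity-preserving} $\Delta B$, for which both barrier terms vanish.

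For such perturbations, writing $X^*$ and $\Delta X$ for the patch matrices of $x^*$ and $\tilde{\Delta} x$, the constant $\lambda Q(W^*)$ cancels and expanding the two squared-norm terms gives, with the convention $\langle Q,R\rangle\triangleq Re\,\mathrm{tr}(QR^H)$ of \eqref{trcincomp},
\[
g(W^*,B^*+\Delta B,x^*+\tilde{\Delta} x)-g^* = \underbrace{2\nu\langle Ax^*-y,\,A\tilde{\Delta} x\rangle+2\langle W^*X^*-B^*,\,W^*\Delta X\rangle}_{(\mathrm{I})}\;\underbrace{-\,2\langle W^*X^*-B^*,\,\Delta B\rangle}_{(\mathrm{II})}\;+\;\underbrace{\nu\|A\tilde{\Delta} x\|_2^2+\|W^*\Delta X-\Delta B\|_F^2}_{(\mathrm{III})}.
\]
Term $(\mathrm{III})$ is a sum of squared norms, hence $\geq 0$, so it suffices to show $(\mathrm{I})\geq 0$ and $(\mathrm{II})\geq 0$ separately. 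For $(\mathrm{I})$, I would use the transposes $P_j^T$ to move the patch operators out and recognize it as $\langle\nabla_x g_0(x^*),\tilde{\Delta} x\rangle$, where $g_0(x)=\nu\|Ax-y\|_2^2+\sum_{j}\|W^*P_jx-b_j^*\|_2^2$ is the smooth (convex) part of $g(W^*,B^*,\cdot)$. By Lemma \ref{lemma6bc}, $x^*$ minimizes $g(W^*,B^*,x)$ over $\mathbb{C}^p$, equivalently minimizes $g_0$ over the convex ball $\{x:\|x\|_2\leq C\}$; the first-order (variational inequality) optimality condition then gives $\langle\nabla_x g_0(x^*),x-x^*\rangle\geq 0$ for every feasible $x$, and evaluating this at $x=x^*+\tilde{\Delta} x$ yields $(\mathrm{I})\geq 0$.

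For $(\mathrm{II})$, the quantity $-2\langle W^*X^*-B^*,\Delta B\rangle$ is \emph{exactly} the linear-in-$\Delta B$ term analyzed in the proof of Lemma \ref{lemma10bc}: it is $\geq 0$ on the half-space R1 by definition; it is zero on the local region R2, because $B^*\in\tilde{H_s}(W^*X^*)$ forces any sparsity-preserving $\Delta B$ with $\|\Delta B\|_\infty<\rho_s(W^*X^*)$ to be supported inside $\mathrm{supp}(B^*)$, on which $W^*X^*-B^*=0$; and it is identically zero (for arbitrary $\Delta B$) when $\|W^*X^*\|_0\leq s$, since then $B^*=W^*X^*$. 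Summing the three nonnegative contributions gives $g(W^*,B^*+\Delta B,x^*+\tilde{\Delta} x)\geq g^*=g(W^*,B^*,x^*)$, i.e., \eqref{cnbcs5b}.

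The only genuine obstacle is that this lemma perturbs $B$ and $x$ \emph{simultaneously}, which at first glance couples the two search directions and prevents a direct appeal to the one-variable optimality conditions. The resolution — and the key observation — is that all of the coupling sits inside the single term $\|W^*\Delta X-\Delta B\|_F^2$ of $(\mathrm{III})$, which is automatically nonnegative; once this cross term is isolated, the remaining first-order contributions split into a piece depending only on $\tilde{\Delta} x$ (controlled by the $x$-optimality of Lemma \ref{lemma6bc}) and a piece depending only on $\Delta B$ (controlled by the R1/R2 analysis of Lemma \ref{lemma10bc}). A secondary point requiring care is that $x^*$ may lie on the boundary $\|x^*\|_2=C$, so one must invoke the variational-inequality form of first-order optimality for a minimizer over a convex set rather than $\nabla_x g_0(x^*)=0$, and must restrict attention to feasible $\tilde{\Delta} x$ (the infeasible case being trivial).
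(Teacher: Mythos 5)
Your proof is correct and follows essentially the same route as the paper's: restrict to sparsity- and energy-preserving perturbations (the rest being trivial via the barriers), expand the quadratics, keep the nonnegative second-order terms aside, and dispose of the $\Delta B$-linear term exactly as in the proof of Lemma \ref{lemma10bc} for R1, R2, and the $\left\| W^{*}X^{*} \right\|_{0}\leq s$ case. The only (minor) difference is in the $\tilde{\Delta}x$-linear term: you invoke the variational-inequality form of first-order optimality for minimizing the smooth convex part over the ball $\left\{ x:\left\| x \right\|_{2}\leq C\right\}$, whereas the paper substitutes the normal equation \eqref{bcs10} with Lagrange multiplier $\mu\geq 0$ and treats the cases $\mu=0$ and $\mu>0$ (using $\left\| x^{*} \right\|_{2}=C$ and energy preservation) — two equivalent ways of exploiting Lemma \ref{lemma6bc}.
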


\begin{proof}:
Consider the subsequence $\left \{ W^{q_t}, B^{q_t}, x^{q_t} \right \}$ (indexed by $q_t$) of the iterate sequence, that converges to the accumulation point $\left ( W^{*}, B^{*}, x^{*} \right )$. It follows from Lemmas \ref{lemma5bc} and \ref{lemma6bc} that  $\psi (B^{*}) = \chi (x^{*})=0$. Now, considering perturbations $\Delta B \in \mathbb{C}^{n \times N}$ whose columns are denoted as $\Delta b_{j}$ ($1 \leq j \leq N$), and $\tilde{\Delta } x \in \mathbb{C}^{p}$, we have that
\begin{align} 
\nonumber g(W^{*},&  B^{*} + \Delta B,  x^{*} + \tilde{\Delta } x )  = \sum_{j=1}^{N} \begin{Vmatrix}
W^{*} P_{j}x^{*}- b_{j}^{*} + W^{*} P_{j}\tilde{\Delta } x - \Delta b_{j}
\end{Vmatrix}_{2}^{2}  \\
& + \lambda \, Q(W^{*}) + \nu \begin{Vmatrix}
Ax^{*}-y + A\tilde{\Delta } x
\end{Vmatrix}_{2}^{2}  + \psi (B^{*} + \Delta B) + \chi(x^{*} + \tilde{\Delta } x ) \label{cstls11a}
\end{align}
In order to prove the condition \eqref{cnbcs5b} in Theorem \ref{theorem1bc}, it suffices to consider \emph{sparsity preserving} perturbations $\Delta B$, that is $\Delta B \in \mathbb{C}^{n \times N}$ such that $B^{*} + \Delta B$ has sparsity $\leq s$. It also suffices to consider \emph{energy preserving} perturbations $\tilde{\Delta } x $, which are such that $\begin{Vmatrix} x^{*} + \tilde{\Delta } x 
\end{Vmatrix}_{2} \leq C$. For any other $\Delta B$ or $\tilde{\Delta } x $,  $g(W^{*}, B^{*} + \Delta B, x^{*} + \tilde{\Delta } x ) = + \infty > g(W^{*}, B^{*}, x^{*})$ trivially. Therefore, we only consider the energy/sparsity preserving perturbations in the following, for which $\psi (B^{*} + \Delta B) =0$ and $\chi(x^{*} + \tilde{\Delta } x ) = 0$ in \eqref{cstls11a}. Now, upon expanding the squared $\ell_{2}$ terms in \eqref{cstls11a}, and dropping non-negative perturbation terms, we get
\begin{align} 
\nonumber g(W^{*}, B^{*} + & \Delta B,  x^{*} + \tilde{\Delta } x ) \geq g(W^{*}, B^{*},  x^{*}) 
- 2 \sum_{j=1}^{N}\left \langle W^{*} P_{j}x^{*}- b_{j}^{*}, \Delta b_{j} \right \rangle \\
& + 2 \sum_{j=1}^{N}\left \langle W^{*} P_{j}x^{*}- b_{j}^{*}, W^{*} P_{j}\tilde{\Delta } x\right \rangle + 2 \nu \left \langle Ax^{*}-y, A\tilde{\Delta } x \right \rangle
 \label{cstls11b}
\end{align}
where the $\left \langle \cdot, \cdot \right \rangle$ notation is as defined in \eqref{trcincomp}. Now, using arguments identical to those used with \eqref{cstls10d}, it is clear that the term $- 2 \sum_{j=1}^{N}\left \langle W^{*} P_{j}x^{*}- b_{j}^{*}, \Delta b_{j} \right \rangle \geq 0$ in \eqref{cstls11b} for all sparsity preserving $\Delta B \in \mathrm{R1 \cup R2}$.
Furthermore, since by Lemma \ref{lemma6bc}, $x^{*}$ is a global minimizer of $g(W^{*}, B^{*}, x)$, it must satisfy the Normal equation \eqref{bcs10} for some (unique) $\mu \geq 0$. Using these arguments, \eqref{cstls11b} simplifies to
\begin{align} 
 g(W^{*}, B^{*} + & \Delta B,  x^{*} + \tilde{\Delta } x ) \geq g(W^{*}, B^{*},  x^{*}) 
- 2 \mu \left \langle x^{*}, \tilde{\Delta } x \right \rangle  \label{cstls11c}
\end{align}
Now, if (the optimal) $\mu = 0$ above, then $g(W^{*}, B^{*} + \Delta B,  x^{*} + \tilde{\Delta } x ) \geq g(W^{*}, B^{*},  x^{*}) $ for arbitrary $\tilde{\Delta } x \in \mathbb{C}^{p}$, and $\Delta B \in \mathrm{R1 \cup R2}$.
The alternative scenario is the case when (the optimal) $\mu>0$, which can occur only if $\begin{Vmatrix}x^{*}\end{Vmatrix}_{2} = C$ holds. 
Since we are considering energy preserving perturbations $\tilde{\Delta} x$,
we have $\begin{Vmatrix} x^{*} + \tilde{\Delta } x 
\end{Vmatrix}_{2}^{2} \leq C^{2} = \begin{Vmatrix} x^{*}
\end{Vmatrix}_{2}^{2}$, which implies $- 2 \left \langle x^{*}, \tilde{\Delta } x   \right \rangle \geq \begin{Vmatrix}
\tilde{\Delta } x 
\end{Vmatrix}_{2}^{2} \geq 0$. Combining this result with \eqref{cstls11c}, we again have (now for the $\mu>0$ case) that $g(W^{*}, B^{*} + \Delta B,  x^{*} + \tilde{\Delta } x ) \geq g(W^{*}, B^{*},  x^{*}) $ for arbitrary $\tilde{\Delta } x \in \mathbb{C}^{p}$, and $\Delta B \in \mathrm{R1 \cup R2}$. 
\end{proof}



\subsection{Proofs of Theorems \ref{theorem2bc} and \ref{theorem3bc}}  \label{ap2}

The proofs of Theorems \ref{theorem2bc} and \ref{theorem3bc} are very similar to that for Theorem \ref{theorem1bc}. We only discuss some of the minor differences, as follows. 

First, in the case of Theorem \ref{theorem2bc}, the main difference in the proof is that the non-negative barrier function $\psi (B)$ and the operator $H_{s}(\cdot)$  (in the proof of Theorem \ref{theorem1bc}) are replaced by the non-negative penalty $\eta^{2}  \sum _{j=1}^{N} \left \| b_{j} \right \|_{0}$ and the operator $\hat{H}_{\eta}^{1} ( \cdot )$, respectively. Moreover, the mapping $\tilde{H_{s}}(\cdot)$ defined in \eqref{dfdc22} for the proof of Theorem \ref{theorem1bc}, is replaced by the matrix-to-set mapping $\hat{H}_{\eta}(\cdot) $ (for the proof of Theorem \ref{theorem2bc}) defined as 
\begin{equation} \label{equ889cc}
\left ( \hat{H}_{\eta}(Z) \right )_{ij}=\left\{\begin{matrix}
 0&, \;\;\left | Z_{ij} \right | < \eta \\
\left \{ Z_{ij}, \, 0 \right \}  & ,\;\;\left | Z_{ij} \right | = \eta \\ 
Z_{ij}  & ,\;\;\left | Z_{ij} \right | > \eta
\end{matrix}\right.
\end{equation}
By thus replacing the relevant functions and operators, the various steps in the proof of Theorem \ref{theorem1bc} can be easily extended to the case of Theorem \ref{theorem2bc}. As such, Theorem \ref{theorem2bc} mainly differs from Theorem \ref{theorem1bc} in terms of the definition of the set of allowed (local) perturbations $\Delta B$.
In particular, the proofs of partial local optimality of accumulation points in Theorem \ref{theorem2bc} are easily extended from the aforementioned proofs for Lemmas \ref{lemma10bc} and \ref{lemma11bc}, by using the techniques and inequalities presented in Appendix F of \cite{sbclsTS2}. 

Finally, the main difference between the proofs of Theorems \ref{theorem1bc} and \ref{theorem3bc} is that the non-negative $\lambda \, Q(W)$ penalty in the former is replaced by the barrier function $\varphi(W)$ (that enforces the unitary property, and keeps $W^{t}$ always bounded) in the latter. Otherwise, the proof techniques are very similar for the two cases.

\bibliographystyle{siam}
\bibliography{BCS_v10}

\end{document}